\newtheorem*{rep@theorem}{\rep@title}
\newcommand{\newreptheorem}[2]{\newenvironment{rep#1}[1]{ \def\rep@title{#2 \ref{##1}} \begin{rep@theorem}} {\end{rep@theorem}}}
\title{
    Caterpillar GNN:
    Replacing Message Passing \\
    with Efficient Aggregation
}
\author{Marek \v{C}ern\'{y}\\University of Antwerp\\\texttt{marek.cerny@uantwerp.be}}
\def\eqref#1{equation~\ref{#1}}
\def\1{\bm{1}}
\def\vone{{\bm{1}}}
\def\va{{\bm{a}}}
\def\vb{{\bm{b}}}
\def\vh{{\bm{h}}}
\def\vs{{\bm{s}}}
\def\vu{{\bm{u}}}
\def\vv{{\bm{v}}}
\def\vw{{\bm{w}}}
\def\vx{{\bm{x}}}
\def\vz{{\bm{z}}}
\def\mA{{\bm{A}}}
\def\mB{{\bm{B}}}
\def\mC{{\bm{C}}}
\def\mD{{\bm{D}}}
\def\mF{{\bm{F}}}
\def\mI{{\bm{I}}}
\def\mM{{\bm{M}}}
\def\mP{{\bm{P}}}
\def\mQ{{\bm{Q}}}
\def\mW{{\bm{W}}}
\def\mY{{\bm{Y}}}
\DeclareMathAlphabet{\mathsfit}{\encodingdefault}{\sfdefault}{m}{sl}
\SetMathAlphabet{\mathsfit}{bold}{\encodingdefault}{\sfdefault}{bx}{n}
\DeclareMathOperator*{\argmin}{arg\,min}
\newtheorem{theorem}{Theorem}[section]
\newtheorem{lemma}[theorem]{Lemma}
\newtheorem{corollary}[theorem]{Corollary}
\newtheorem{proposition}[theorem]{Proposition}
\newtheorem{observation}[theorem]{Observation}
\theoremstyle{remark}
\newcommand{\colsub}[2]{{{#1}\ifthenelse{\equal{#2}{}}{}{, {#2}}}}
\newcommand{\col}[3]{p^{\left(#1\right)}_\colsub{#2}{#3}(\mathbf{x})}
\newcommand{\tup}[1]{{\left(#1\right)}}
\newcommand{\One}{\mathbf{1}}
\newcommand{\Elt}{\sqsubsetneqq}
\newcommand{\Egt}{\sqsupsetneqq}
\newcommand{\Ele}{\sqsubseteq}
\newcommand{\Ege}{\sqsupseteq}
\newcommand{\Eeq}{\equiv}
\newcommand{\Eneq}{\not\equiv}
\newcommand{\Nat}{\mathbb{N}}
\newcommand{\Real}{\mathbb{R}}
\newcommand{\Cat}{\mathcal{C}}
\newcommand{\CaT}[2]{\mathcal{C}_{#1, #2}}
\newcommand{\Catu}{\mathcal{C}^{\sqcup}}
\newcommand{\CaTu}[2]{\mathcal{C}^{\sqcup}_{#1, #2}}
\renewcommand{\vec}[1]{\mathbf{#1}}
\newcommand{\Walk}[1]{\ensuremath{\mathsf{wr}^{(#1)}}}
\newcommand{\Walkk}{\ensuremath{\mathsf{wr}}}
\newcommand{\Tree}[1]{\ensuremath{\mathsf{cr}^{(#1)}}}
\newcommand{\Treee}{\ensuremath{\mathsf{cr}}}
\newcommand{\stdml}{\{\!\!\{\,}
\newcommand{\stdmr}{\,\}\!\!\}}
\newcommand{\bigml}{\big\{\kern-0.4em\big\{}
\newcommand{\bigmr}{\big\}\kern-0.4em\big\}}
\newcommand{\vecw}{{\vec w}}
\newcommand{\chitriv}{\chi_{\operatorname{triv}}}
\newcommand{\chiid}{\chi_{\operatorname{id}}}
\newcommand{\chideg}{\chi_{\operatorname{deg}}}
\newcommand{\chitree}[1]{{\chi_{\mathsf{cr}}^\tup{#1}}}
\renewcommand{\hom}{\ensuremath{\mathsf{hom}}}
\newcommand{\rank}{\ensuremath{\operatorname{rank}}}
\newcommand{\efficient}{{efficient} }
\newcommand{\Efficient}{{Efficient} }
\newcommand{\In}{ {\text{ in }} }
\newcommand{\emptyword}{\lambda}
\newcommand{\Graphs}{\mathcal{G}}
\newcommand{\sem}[1]{\llbracket {#1} \rrbracket}
\newcommand{\Ac}{\mathcal{A}}
\newcommand{\restate}[2]{%
    \medskip\noindent\textbf{\cref{#1}.}~\textit{\nameref{#1}}%
    \textit{{#2}}
}
\newcommand{\Sigg}{\overline \Sigma}
\newcommand{\iSigg}{\In\Sigg}
\newcommand{\ovecw}{\overline{\vw}}
\newcommand{\vmin}{\!\rotatebox[origin=c]{90}{\ensuremath{-}}\!}
\newcommand{\RealVV}{\Real^{V\times V}}
\newcommand{\RealV}{\Real^{V}}
\newcommand{\iRealVV}{\In\Real^{V\times V}}
\newcommand{\OneT}{\One^\top}
\newcommand{\pin}{{+}}
\newcommand{\tp}{{t+1}}
\newcommand{\Fg}{\mF^{G}}
\newcommand{\Fh}{\mF^{H}}
\newcommand{\lab}{\mathsf{lab}}
\newcommand{\Fbl}{F^\bullet}
\newcommand{\Tbl}{T^\bullet}
\newcommand{\Fcbl}{{\mathcal{F}^{\bullet}}}
\newcommand{\Gbl}{G^\bullet}
\newcommand{\Lbl}{L^\bullet}
\newcommand{\clsF}{{\ensuremath{\mathcal{F}}}}
\newcommand{\RealSSp}{\Real^{S_t \times S_\tp}}
\newcommand{\mult}{\operatorname{mult}}
\newcommand{\UPDATE}{\operatorname{\small UPDATE}}
\newcommand{\READOUT}{\operatorname{\small READOUT}}
\newcommand{\AGG}{\operatorname{\small AGG}}
\newcommand{\REDUCE}{\operatorname{\small REDUCE}}
\newcommand{\invEA}{{\mathcal{I}_{\text{EA}}}}
\begin{document}
\maketitle

\begin{abstract}
Message-passing graph neural networks (MPGNNs) dominate modern graph learning.
Typical efforts enhance MPGNN's expressive power by enriching the adjacency-based aggregation.
In contrast, we introduce an \emph{efficient aggregation} over walk incidence-based matrices that are constructed to deliberately trade off some expressivity for stronger and more structured inductive bias.
Our approach allows for seamless scaling between classical message-passing and simpler methods based on walks. 
We rigorously characterize the expressive power at each intermediate step using homomorphism counts over a hierarchy of generalized \emph{caterpillar graphs}. Based on this foundation, we propose Caterpillar GNNs, whose robust graph-level aggregation successfully tackles a benchmark specifically designed to challenge MPGNNs.
Moreover, we demonstrate that, on real-world datasets, Caterpillar GNNs achieve comparable predictive performance while significantly reducing the number of nodes in the hidden layers of the computational graph.
\end{abstract}

\section{Introduction}\label{sec:introduction}

Graphs are a powerful structure, capable of representing relational information across various domains such as biology, chemistry, databases, or social sciences. 
Graph inference carries variability in that its structure is governed by the underlying distribution, unlike inference on sequential text or gridded images.
The established incorporation of this variability relies on the inductive bias of (equivariant) message-passing (MP) in graph neural networks (MPGNNs).
Prior work has shown the limits of MP in capturing structural biases \citep{Xu+2018, Mor+2019}. 
Consequently, MPGNNs may suffer from restricted expressivity, leading to many extensions of MP.
On the other hand, MPGNNs may also fail to learn properly due to phenomena such as nodal over-smoothing \citep{Oon+2020} and over-squashing \citep{Alo+2021}.

Namely, aggregation in MP causes a bottleneck that subsequent work mitigates by modifying the graph topology, e.g., rewiring \citep{Top+2022, Gio+2023}.
We consider an alternative walk incidence-based topology that reveals another kind of bottleneck.
Guided by this topology, we construct a benchmark that empirically uncovers the consequent limitation of MPGNNs.
Surprisingly, our benchmark only requires small unlabeled acyclic graphs that seem nearly trivial to distinguish from an expressivity standpoint.

\begin{minipage}{0.49\textwidth}
To study such disparities between topology and lower expressivity, we rely on a more algebraic definition of expressive power.
Concretely, the expressivity of some architectures extending MPGNN can be bounded using graph homomorphism counts over a restricted class of graphs $\clsF$ (see \autoref{tab:related_work}).
In the limit, extending $\clsF$ from trees upwards to all graphs yields the maximum equivariant expressivity, namely, graph isomorphism, as shown by \citet{Lov+1967}.
\emph{Our work answers the converse:} which inductive biases arise when $\clsF$ is restricted downwards to subclasses of trees, such as \emph{caterpillars}?
\end{minipage}
\hfill
\begin{minipage}{0.48\textwidth}
\centering
\footnotesize
\begin{tabular}{ll}
\midrule
MPGNN extension & Bound over $\mathcal{F}$ \\
\midrule
Vanilla$^{a}$ & trees$^{g}$ \\
Higher ($k$) order$^{b}$ & treewidth$^{g}$ $(\leq k)$ \\
$\mathcal{P}$-enabled$^{c}$ & $\mathcal{P}$-pattern trees$^{c}$ \\
Subgraph agg.$^{d,e}$ & apex trees$^{h}$ \\
Spectral inv.$^{f}$ & parallel trees$^{i}$ \\
\midrule 
Caterpillar (ours) & caterpillars
\end{tabular}
\captionof{table}{\footnotesize 
$^{a}$\cite{Gil+2017},  
$^{b}$\cite{Mor+2019},  
$^{c}$\cite{Pab+2021},  
$^{d}$\cite{Qia+2022}
$^{e}$\cite{Fra+2022},  
$^{f}$\cite{Zha+2024},  
$^{g}$\cite{Dvo+2010},  
$^{h}$\cite{Rat+2021},  
$^{i}$\cite{Gai+2025}.}
\label{tab:related_work}
\end{minipage}

\begin{figure}[h]
\centering
\definecolor{homcolor}{RGB}{220, 35, 0}
\definecolor{walkcolor}{RGB}{0, 35, 220}
\includegraphics[width=0.9\textwidth]{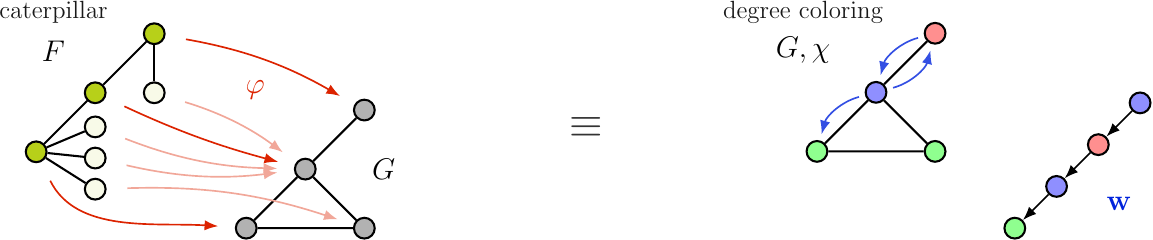}
\caption{
   Graph homomorphism $\textcolor{homcolor}{\varphi} \colon F\textcolor{homcolor}{\to}\, G$ (left),
   and graph $G$ with a vertex coloring $\chi$ 
   with an occurrence of a colored walk $\textcolor{walkcolor}{\vecw} = \mathsf{brbg}$.
   The figure illustrates \autoref{thm:homht}.
   }\label{fig:correspondence}
\end{figure}

\newpage
\begin{reptheorem}{thm:homht}[informal, example case]
For an input graph $G$, 
counting homomorphisms to $G$ over caterpillars
\emph{is exactly as expressive as}
coloring $G$ according to vertex degrees and then counting colored walks.
See \autoref{fig:correspondence}.
\end{reptheorem}

While the above characterization clarifies the notion of lower expressivity, it does not yield a tractable architecture: the number of colored walks grows exponentially in the worst case.
As a result, existing architectures process sequential patterns in graphs (\cite{Ton+2023, Zen+2023, Che+2024}) by random-walk sampling, which sacrifices equivariance.
Our approach, efficient aggregation (EA), is both tractable and equivariant, and achieves the desired expressivity which we can parametrize.
Subsequently, we introduce Caterpillar GNNs, incorporating EA in the same way that MPGNN incorporates MP.
Caterpillar GNNs pioneer the study of performance under lower expressivity.
Within our motivating benchmark, we find that a less expressive inductive bias mitigates the bottleneck of information alignment, whereas increasing expressivity further degrades performance.
Moreover, EA may downscale the computation graph after each layer, in which is reminiscent of downscaling in convolutional neural networks.
Main contributions are as follows:

\begin{itemize}
    \item We introduce EA (\cref{sec:efficientaggregation}). We prove its tractability (\cref{thm:algorithm}) and desired expressivity (\cref{thm:walkInvariantEquivalence}).
    The challenge of its complete derivation and proofs we address by developing techniques in automata theory (\autoref{appendix:automata}).
    \item We characterize expressivity of EA using a hierarchy of \emph{generalized caterpillar graphs} and its graph homomorphism counts (\autoref{sec:expressivity}, \autoref{thm:homht}).
    For this, we develop novel combinatorial arguments in graph theory (\autoref{appendix:homcounts}).
    \item We incorporate EA into Caterpillar GNNs (\cref{subsec:efficient_matrices}, \cref{eq:scheme_reduce}), and investigate its parametric scaling (\cref{subsec:scaling}). 
    Using walk incidence-based topology (\cref{subsec:topology}), we illustrate that the effect of stronger inductive bias can outweigh lower expressivity. 
\end{itemize}
Empirically, we investigate how parametric scaling of EA impacts the dataset-specific tradeoffs between performance and nodal efficiency on real-world tasks. Enabled for such tradeoffs, Caterpillar GNNs achieve
comparable performance while using fewer nodes of the computational graph (\cref{fig:comp_proteins}).

\section{Preliminaries}\label{sec:prelim}

Let $G = (V, E)$ be an undirected graph with a finite vertex set $V$ and an edge set $E\subseteq V^2$.
Loops are not assumed, and an edge between $u$ and $v$ is denoted by $uv$.
The degree of a vertex $u$ is $\deg(u)=|\{v \mid uv \In E\}|$, and $n=|V|$.
A path is a connected acyclic graph with vertices of degree at most two.
We denote the class of all paths by $\mathcal{P}$, and by $\mathcal{P}_t\subset\mathcal{P}$ the subclass of paths of length at most $t$ where length means $|E|$.
A tree is a connected acyclic graph.
We denote by $\mathcal{T}$ the class of all trees.
By $\mathcal{T}^\bullet$, we mean the class of rooted trees, and by $\mathcal{T}^\bullet_h\subset \mathcal{T}^\bullet$, the class where every vertex is at distance at most $h$ from the root, that is, at most $h$ edges from a root (e.g. \citet[page 8]{Die+2025}).

Multisets are represented using symbols $\stdml$, and $\stdmr$.
Let $X, Y$ be sets, and $x\In X$, $y\In Y$.
The family of all multisets of elements from $X$ is denoted by $\Nat^X$.
For a multiset $m\In\Nat^X$, we access multiplicity of $x$ by $m[x]$.
For a vector $\mathbf v\In\Real^X$, we access its $x$-th component by $\mathbf v [x]$.
For a matrix $\mM\In \Real^{X\times Y}$ (of shape $X\times Y$), we access its entries by $\mM[x, y]$, rows by $\mM[x]$ and columns by $\mM[-, y]$.
Finally, the notation $[k]$ stands for the set $\{1, 2, \ldots, k\}$ for $k\in\Nat$.
For the graph $G$, we denote its \emph{adjacency matrix} by $\mA \In \Real^{V\times V}$, that is, $\mA[u, v] = 1$ if $uv \In E$ and $0$ otherwise.
Its \emph{identity} or \emph{self-loop matrix} is denoted by $\mI\In \Real^{V\times V}$, and the all-ones vector by $\vone\In \Real^{V \times 1}$.

\paragraph{MPGNNs.}
In what follows, we often represent graphs by matrices and hence adopt a specific notation.
For a matrix $\mM\In \Real^{X\times Y}$, features of $d$ channels $\vh \In \Real^{Y\times d}$ and $x\In X$, 
we define \[\mult(x, \mM, \vh) \coloneqq \bigml (\mM[x, y], \vh[y]) \mid y \In Y,\, \mM[x, y] \neq 0 \bigmr.\]
Let $\vh_\text{MP}^\tup {0}$ be features in $\Real^{V\times d}$ if given and $\vone$ otherwise.
Then we define MPGNN of $L$ layers for each $u\In V$ and integer $\ell$ such that $0 \le \ell < L$ as follows
\begin{align*}
    \vh_\text{MP}^\tup {\ell+1}[u] &= 
    \UPDATE\big( 
                \mult(u, \mI, \vh_\text{MP}^\tup \ell),\,\,
        \AGG\big(
            \mult(u, \mA, \vh_\text{MP}^\tup \ell)  
        \big)
    \big),\\
    \vh_\text{MP} &= 
    \READOUT\big(
        \mult(0, {\textstyle\frac{1}{n}}\vone\!^\top, \vh_\text{MP}^\tup{L})
    \big),
\end{align*}
where we use $\mult$ to self-loop with $\mI$, and to range over adjacent nodes with $\mA$ of shape $V \times V$, 
and to collect all nodes with $\frac{1}{n}\vone^\top$ of shape $\{0\} \times V$.
The functions $\AGG$, $\UPDATE$, and $\READOUT$ are specific to each layer.
We omit their indexing and learnable parameters for readability.

\paragraph{Expressivity and homomorphism counts.}

Let $\mathcal{G}$ denote the class of all graphs and 
let $\mathsf{f}$ and $\mathsf{g}$ be two functions on $\Graphs$. 
Then the function 
\emph{$\mathsf{f}$~is at least as expressive as $\mathsf{g}$}, denoted by $\mathsf{f} \Ege \mathsf{g}$, if for every two graphs $G$ and $H$ holds that
$\mathsf{f}(G) = \mathsf{f}(H)$ implies $\mathsf{g}(G) = \mathsf{g}(H)$.
Next, \emph{$\mathsf{f}$ is (exactly) as expressive as $\mathsf{g}$}, denoted by $\mathsf{f} \Eeq \mathsf{g}$, if $\mathsf{f} \Ege \mathsf{g}$ and $\mathsf{g} \Ege \mathsf{f}$. Furthermore,
\emph{$\mathsf{f}$ is (strictly) more expressive than $\mathsf{g}$}, denoted by $\mathsf{f} \Egt \mathsf{g}$, if $\mathsf{f} \Ege \mathsf{g}$ and $\mathsf{f} \Eneq \mathsf{g}$.
The expressivity relation is a partial ordering on the family of functions on $\mathcal G$.

For a source graph $F=(V_s,E_s)$, a function $\varphi\colon V_s \to V$ is a \emph{graph homomorphism} $F\to G$ if every edge $uv\In E_s$ implies edge $\varphi(u)\varphi(v)\In E$.
See \autoref{fig:correspondence}.
For a class of source graphs $\clsF\subseteq \Graphs$, 
we define a (possibly infinite) vector of \emph{homomorphism counts over} $\clsF$, denoted by $\hom(\clsF, G)\In \Nat^{\clsF}$, as
$\hom(\clsF, G)[F] = |\{ \varphi \mid \varphi\colon F\to G \}|$ for all $F\In \clsF$.
Note that every class $\clsF\subseteq \Graphs$ induces the function $\hom(\clsF, -)\colon \Graphs\to \Nat^{\clsF}$,
assigning $\hom(\clsF, G)$ to the target graph $G$. 
It always holds that $\clsF\supseteq \clsF'$ implies $\hom(\clsF, -) \Ege \hom(\clsF', -)$.

\paragraph{Graph colorings and color refinement.}
A coloring $\chi$ is a map that assigns specific colors to the vertices.
Formally, for each graph $G=(V,E)$, we have a function $\chi(G, -)\colon V \to \Sigma'$ where $\Sigma'$ denotes a color set.
We say coloring $\chi$ is a \emph{$\Sigma$-coloring on $G$} if $\Sigma' \subseteq \Sigma$.
A coloring example is the trivial coloring $\chitriv$, which assigns $0$ to every vertex $u$ of $G$, $\chitriv(G, u) = 0$.
Therefore, $\chitriv$ is a ${\{0\}}$-coloring on every graph.
Another ``extreme'' is the \emph{identity coloring} $\chiid$ assigning identities on vertices, $\chiid(G, u) = u$ for vertex $u$, and thus $\chiid$ is a $V$-coloring on $G$.
The \emph{degree coloring} $\chideg$, assigns degree to every vertex in $G$, which can be written as $\chideg(G, -) = \deg(-)$.

A \emph{color refinement} constructs a sequence of 
graph colorings: $\chitree{0}(G, u) = 1$,
and for all $h\ge 0$ and each $u\In V$ as
$\chitree{h+1}(G, u) = 
(\chitree{h}(G, u),
\bigml \chitree{h}(G, v) \mid  
uv \In E\bigmr)$.
Secondly, it defines a sequence of functions on graphs
$\Tree{h}(G) = \bigml \chitree{h}(G, u) \mid u\In V\bigmr$;
and, finally, the function on graphs: $\Treee(G) = \{\Tree{h}(G) \mid  h \In \Nat\}$.
For instance, the first coloring is as expressive as the trivial: $\chitree{0}(G, -) \Eeq \chitriv(G, -)$,
and the second exactly as the degree coloring: $\chitree{1}(G, -) \Eeq \chideg(G, -)$.

Let $\chi$ be a $\Sigma$-coloring on $G$.
A \emph{walk} in $G$ is a sequence of vertices $v_1, v_2, \ldots, v_t$ such that $v_i v_{i+1} \In E$ for $i\In [t-1]$. 
A special case is a \emph{path} in $G$, which is a walk with all vertices distinct.
A \emph{colored walk} is a word $\va = a_1 a_2 \dots a_t$ such that $a_i = \chi(G, v_i)\In \Sigma$ for $i\In [t]$.
At the same time, the sequence $v_1, v_2, \dots, v_t$ is an \emph{occurrence} of $\va$ in $G$.
See~\autoref{fig:correspondence}.
We say that vertex $u$ is \emph{incident} to colored walk $\va$ if $u = v_t$,
and \emph{adjacent} if $uv_t \in E$.
We denote by $\Sigma^t$, resp. $\Sigma^{\le t}$ the set of all words over $\Sigma$ of length exactly $t$, resp. at most $t$; and by $\Sigma^*$ the set of all words. 
Note that $\Sigma^0 = \{\emptyword\}$ where $\emptyword$ is the \emph{empty word}.

\section{Efficient Aggregation: The Definition}\label{sec:efficientaggregation}
This section introduces efficient aggregation (EA), the matrix-based replacement at the core of Caterpillar GNNs.
EA is grounded in sequential graph patterns (Part I),
but is formulated using layer-specific matrices (Part II) to provably aggregate these patterns.
Part III is a short user-guide to scaling by a single height parameter controlling the strength of our inductive bias.
Omitted proofs are given in \autoref{appendix:automata}.

\begin{figure}[t]
    \centering
    \includegraphics[width=0.7\textwidth]{./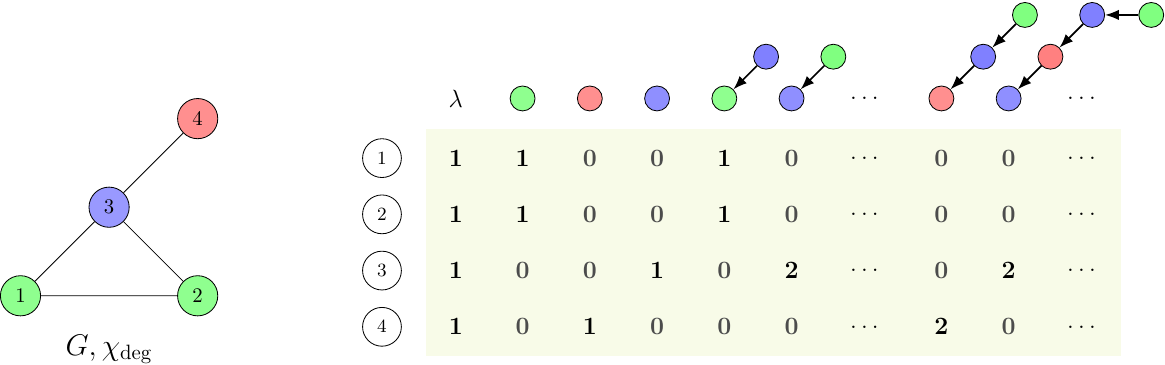}
    \vspace{-5pt}
    \caption{
        Graph $G$ with vertices colored by $\chideg$ (left).
        Colors red, green and blue depict degrees 1, 2 and 3, respectively. 
        Walk incidence matrix $\mW$ of shape $V\times \Sigma^*$         (right).
        The entry for vertex $3$ and word $\mathsf{gb}$, i.e.,  $\mW[3, \mathsf{gb}] = 2$, since 
        vertex $3$ terminates two occurrences of the colored walk $\mathsf{gb}$ in $G$.
        }
        \label{fig:neighborhood}
\end{figure}

\subsection{Part I: Sequential patterns}\label{subsec:sequential}

For tractable incorporation of lower-order inductive biases, we innovate processing of sequential patterns, as motivated in \cref{thm:homht} and analyzed further in \cref{sec:expressivity}.
In the language of colored walks, many successful machine learning approaches first \emph{sample} a tractable number of random walks and then process the visited colors as sequences with either kernels  \citep{Bor+2005, Kri+2022} or neural networks \citep{Ton+2023, Zen+2023, Che+2024}. 
Our approach is a fundamental \emph{reversal} of these steps:
given a prescribed colored walk, we count its occurrences.
Crucially, we consider a tractable and canonical subset of colored walks. 
As shown later (\autoref{thm:walkInvariantEquivalence}), this subset suffices to determine all other colored walks.
In contrast to prior sampling-heavy methods, we preserve determinism, equivariance and intended expressivity.
To formalize our reversal, we relate vertices and colored walks using incidence matrix.

\paragraph{Walk incidences.}
For a given graph $G$ with $\chi$ a $\Sigma$-coloring, and a given length $t\ge 0$, we define the \emph{walk-incidence matrix} $\mW_t$ of shape $V\times \Sigma^t$ 
for each $u\In V$ and $\va \In \Sigma^t$ by
\begin{align}
    \mW_t[u, \va]& \text{ is the number of occurrences of $\va$ that terminate in vertex $u$}.
    \label{eq:walkincidence}
\end{align}
Each column $\mW_t[-, \va] \In \Nat^V \subseteq \Real^V$ corresponds to a \emph{multiset of vertices} incident to walks of color $\va$.
For instance, the column $\mW_1[-, c]$ coincides with vertices $u$ of color $c=\chi(G, u)$ in $\Sigma$.
By \emph{convention}, the empty walk is incident to every vertex, $\mW_0[u, \emptyword]=1$ for $u\In V$.
See \autoref{fig:neighborhood}, for an illustration of $\mW = \left[\mW_0 | \mW_1 | \cdots \right]$ of shape $V\times \Sigma^*$.

\paragraph{Walk selection.} 
The row dimension $V$ of incidence matrices $\mW_t$ remains fixed, while the column dimension $\Sigma^t$ grows exponentially in $t$.
We avoid this growth by selecting subsets of $\Sigma^t$, 
such that the induced columns of $\mW_t$ form a basis of the column space of $\mW_t$.
The definition proceeds inductively: $S_0 = \{\emptyword\} = \Sigma^0$,
and for known $S_t$, the set $S_{t+1}\subseteq \Sigma^{t+1}$ satisfies the following conditions:
\begin{enumerate}[label=(\roman*)]
    \item for every $\va c\In S_{t+1}$ there is $\va \In S_t$ (\emph{prefix-closedness}),\label{cond:i}
    \item the columns of $\mW_{t+1}$ induced by $S_{t+1}$ are \emph{linearly independent}, and \label{cond:ii}
    \item the set $S_{t+1}$ is \emph{lexicographically minimal} among other sets satisfying \ref{cond:i} and \ref{cond:ii}. \label{cond:iii}
\end{enumerate}
The last Condition~\ref{cond:iii} together with $S_0 = \{\emptyword\}$ ensures uniqueness, making this selection canonical.
Condition~\ref{cond:ii} implies the upper bound $|S_t| \le  \rank(\mW_t) \le |V|$.
Finally, Condition~\ref{cond:i} allows for a tractable algorithm reminiscent of breadth-first search with linear independence checking.
\begin{theorem}\label{thm:algorithm}
    Let $\chi$ be a $\Sigma$-coloring on graph $G$ with $n$ vertices, and $T\In \Nat$ a limit 
    then the canonical subsets $(S_t)_{t=0}^{T}$ defined above are computable in time $\mathcal{O}(Tn^3|\Sigma|)$.
\end{theorem}

\subsection{Part II: \Efficient Matrices}\label{subsec:efficient_matrices}
Up to this point, we have considered walks as sequences processed one by one.
However, such a representation is inefficient, in particular because it ignores shared prefix structure, as is well-known from string-searching algorithms (e.g., suffix trees \citep{Wei+1973}).
To overcome this inefficiency, we organize walk-incidence statistics into matrices, where selected colored walks correspond to columns and also rows.
This matrix formulation enables hierarchical aggregation of walk patterns, analogous to how message passing (MP) aggregates over neighborhoods, but dropping the assumption of repeating fixed neighborhood structure at each layer.

MP on graph $G=(V, E)$ with $n$ vertices consists on two steps:
aggregation via adjacency operator $\mA$ and update via self-looping operator $\mI$ both $\RealV \to \RealV$.
We aim to deliberately restrict this repeating mechanism: informally,
we lock corresponding vector space $\RealV$ by projecting those operators into $\Real^{S_{t+1}}\to \Real^{S_{t}}$ of possibly lower dimension as implied by Condition~\ref{cond:ii}.

For a $\Sigma$-coloring on $G$, and integer $t\ge 0$, we denote by $\mB_t$ of shape $V \times S_t$ the submatrix of $\mW_t$ (of shape $V \times \Sigma^t$) that keeps only the columns indexed by $S_t$ (see \autoref{fig:lock} for an illustration).
Condition~\ref{cond:ii} guarantees that every matrix $\mB_t$ is tractable and has full rank.

\begin{figure}[t]
    \centering
    \includegraphics[width=0.8\textwidth]{./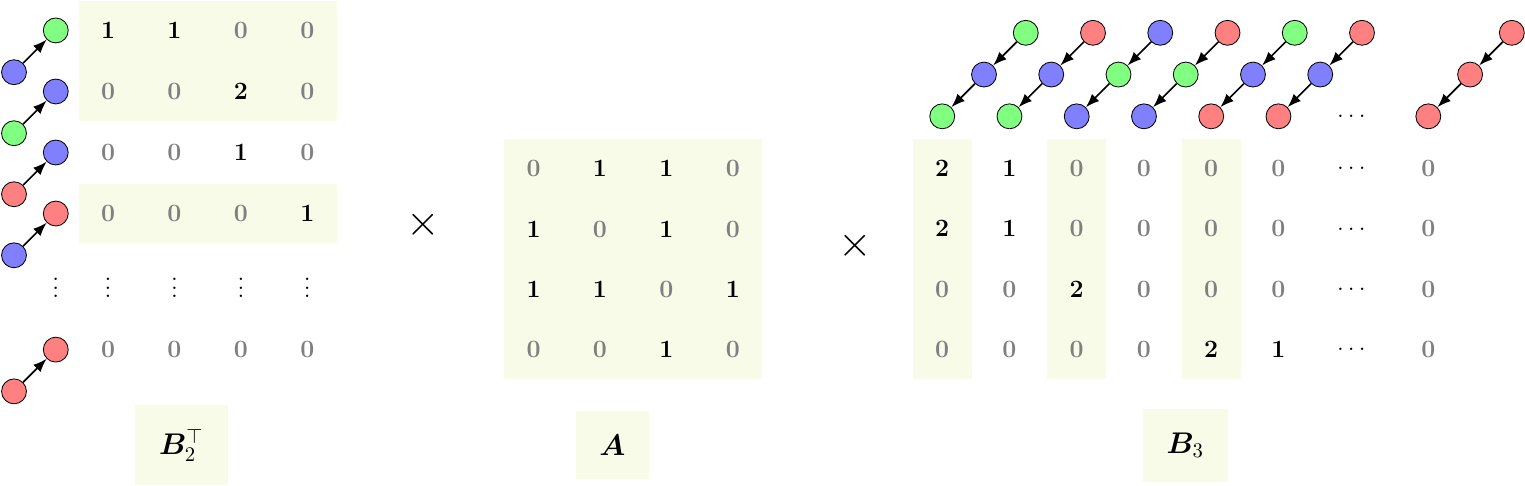}
    \vspace{-5pt}
    \caption{
        Locking the adjacency matrix $\mA$ of the graph $G$ as in \autoref{fig:neighborhood}.
        Matrix $\mB_3$ of shape $V\times S_3$ is a submatrix of $\mW_3$ (of shape $V\times \Sigma^3$) induced by highlighted columns,
        similarly, $\mB_2^\top$ and $\mW_2^\top$.
    }\label{fig:lock}
\end{figure}

\paragraph{Efficient aggregation.}
Let $\mM$ be a matrix of shape $V\times V$ then a \emph{$t$-th \efficient $\mM$-matrix} $\mC^\mM_t$ of shape $S_t\times S_{t+1}$ is defined as 
\begin{align}
    \mC^\mM_t &= (\mB^{\top}_t\!\mB_t)^{\!-1}\!\mB^{\top}_t\,\mM\,\mB_{t+1},
    \label{eq:catmat}
\end{align}
which solves the least-squares problem
$\argmin_{\mC}\left\|\,\mB_t \mC - \mM \mB_{t+1}\,\right\|_F$.
Informally, the unique operator $\mC^\mM_t\colon \Real^{S_{t+1}}\!\!\to \Real^{S_{t}}$ is
the best approximation of $\mM$ in the basis indexed by canonical $S_t$.

We call an \emph{efficient aggregation (EA)} the collection of the first $n$ efficient adjacency and identity matrices into the graph invariant
\begin{align}
    \invEA(G, \chi) = \left\{\left(\mC_t^\mA\!,\, \mC_t^\mI\right) \mid 0 \le t < n\right\}.
    \label{eq:invariant}
\end{align}
Since efficient matrices are indexed by colored walks, we compare $\invEA$ directly across graphs.
Its expressive power (\autoref{sec:expressivity}) as the function on graphs $\invEA(-, \chi)$ motivates the following model.

\paragraph{Caterpillar GNNs.}
We now describe how efficient matrices are used across $L$ layers. 
\emph{Caterpillar GNN} initializes with 
$\vh_\text{EA}^\tup {0}(\va c) = \REDUCE(c, \stdml\!\!\stdmr, \vone)$ for colored walk $\va c \In S_{L}$.
Then, at each layer $\ell$ such that $0\le \ell \le L-1$ with $t_\ell = L - \ell$ and for each colored walk $\va c \In S_{t_\ell}$ we have
\begin{align}
    \vh_\text{EA}^\tup {\ell+1}[\va c] &= 
    \REDUCE\big( c,\,
        \mult(\va c, \mC^\mI_{t_\ell}, \vh_\text{EA}^\tup \ell), 
        \AGG\big(
            \mult(\va c, \mC^\mA_{t_\ell}, \vh_\text{EA}^\tup \ell)  
        \big)
    \big),\\
    \vh_\text{EA} &= 
    \READOUT\big(
        \mult(\lambda, \mC^\mI_{0}, \vh_\text{EA}^\tup{L})
    \big).\label{eq:scheme_reduce}
\end{align}
In this definition, the function $\REDUCE$ replaces the usual $\UPDATE$: 
it targets a colored walk $\va c$ instead of a fixed vertex and requires color $c$ as an additional input.
A visual side-by-side comparison with the standard MP is given in \autoref{fig:efficient_neighborhood}.

\subsection{Part III: Parametric Scaling}\label{subsec:scaling}
The vertex coloring $\chi$ controls the coarseness of distinguished colored walks and thus governs the resulting inductive bias of EA.
In our approach to EA, we utilize colorings of the color refinement $\chi=\chitree{h}$,
simplifying the choice for end-users to the parameter $h\ge 0$ called \emph{height}.
To guide our exploration, we analyze two extreme cases: trivial coloring $\chitriv$ and identity coloring $\chiid$.

 Under $\chitriv$, all vertices share the same color $0$. Thus, every walk of length $t$ has color $\vz_{t} = 00\cdots 0$ (constant word of length $t$).
Hence, every set $S_t$ collapses to the singleton of $\vz_{t}$, and computation over $L$ layers collapses to a linear sequence of length $L$.
The REDUCE function receives the color~$0$ together with multisets of form $\mult(\vz_{t}, \mC, \vh)$ containing a single pair $(m, \vh[\vz_{t}])$,
where $m$ is a normalized count of plain walks (c.f., walk partition \citep{Chu+1997}).
Formally, we have:
\begin{observation}\label{obs:specialwalkpartition}
    Let $\chitriv$ be the $\{0\}$-coloring on a graph $G$ with at least one edge.
    Then for every $t\ge 0$: (a) it holds that $|S_t| = 1$;
    (b) the only entries, $\mC_t^\mI[\vz_{t}, \vz_{{t+1}}] =\frac{\vone^\top \mA^{2t+1} \vone}{\vone^\top \mA^{2t} \vone}$, and 
    $\mC_t^\mA[\vz_{t}, \vz_{{t+1}}] =\frac{\vone^\top \mA^{2t+2} \vone}{\vone^\top \mA^{2t} \vone}$.
\end{observation}

\begin{figure}[t]
    \centering
    \includegraphics[width=0.7\textwidth]{./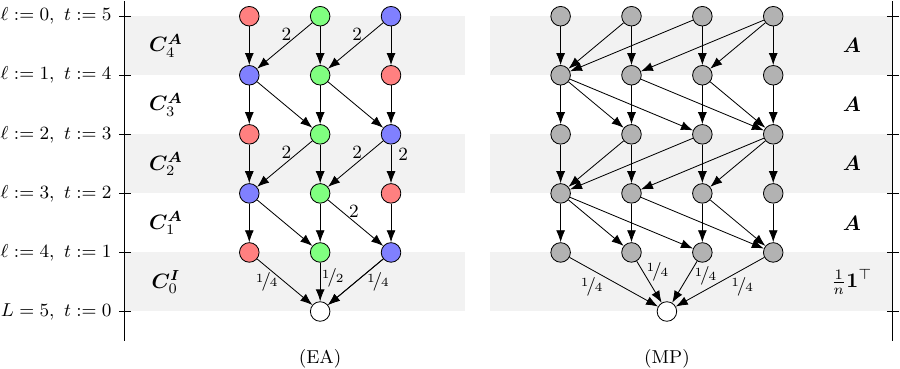}
    \vspace{-5pt}
    \caption{
         Comparison of computational graphs (without self-loops): (EA) \emph{efficient aggregation} (ours), and (MP) \emph{message-passing}
         for the graph $G$ and coloring $\chideg$ as given in~\autoref{fig:neighborhood} (left).
                Connections between layers are given by (EA) $t$-th efficient graph matrices;
        (MP)~copies of the adjacency matrix and the global readout.
        For unit weights, we omit labels.
    }
    \label{fig:efficient_neighborhood}
\end{figure}

When each vertex is assigned a distinct color under $\chiid$, every colored walk in the graph has its unique occurrence. 
Hence, every set $S_t$ reaches the maximum size $|V|$, with one colored walk per vertex. 
In this regime, the efficient matrices coincide entry-wise with the original matrices.
Moreover, if $\REDUCE$ ignores its first parameter then EA reaches semantically the classical MP.
\begin{proposition}\label{obs:specialmessagepassing}
    Let $\chiid$ be the $V$-coloring on a graph $G=(V, E)$ with $n$ vertices.
    By $\vv_{t, u}$, we denote the (unique) word in $S_{t}$ with the last color $u \In V$.
    Then for every $t\ge 1$: (a) it holds that $|S_0|=1$, and $|S_t| = |V|$;
    (b) for entries $\mC_0^\mI[\lambda, u] = \frac{1}{n}$, and 
    $\mC_t^\mI[\vv_{t,u}, \vv_{t+1,v}] = \mI[u, v]$
    and $\mC_t^\mA[\vv_{t,u}, \vv_{t+1,v}] = \mA[u, v]$.
\end{proposition}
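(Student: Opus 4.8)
The plan is to exploit the degeneracy of walk occurrences under the identity coloring and then read the efficient matrices directly off \eqref{eq:catmat}. Since $\chiid(G,u)=u$, an occurrence of a word $\vecw=w_1\cdots w_t\in V^t$ is a walk $(u_1,\dots,u_t)$ with $u_i=w_i$; thus the only possible occurrence of $\vecw$ is the sequence $\vecw$ itself, and it is an occurrence exactly when $w_iw_{i+1}\in E$ for all $i$. Hence every column $W_t[-,\vecw]$ is either the zero vector (if $\vecw$ is not a walk in $G$) or the standard basis vector $e_{w_t}$ (if it is); in particular it depends only on whether $\vecw$ is a walk and on its last letter. This dichotomy drives the whole argument.

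For the base cases, $S_0=\Sigma^0=\{\emptyword\}$, so $|S_0|=1$ and $B_0=W_0[-,\emptyword]=\One$ by the empty-walk convention. The candidate words extending $S_0$ are $\{\emptyword a=a:a\in\Sigma\}=V$, whose columns $\{e_a:a\in V\}$ form a basis of $\Real^V$; so conditions (ii)--(iii) force $S_1=\Sigma^1=V$, giving $|S_1|=|V|$, with $\vecw(1,u)=u$ the unique length-one word of last color $u$ and $B_1[v,u]=[v=u]$.

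Then I would induct on $t\ge1$ with the invariant: $S_t=\{\vecw(t,u):u\in V\}$, each $\vecw(t,u)$ a walk ending at $u$, so that under the bijection $\sigma_t\colon S_t\to V$ sending a word to its last letter, $B_t$ is the $V\times S_t$ matrix $B_t[u,\vecw]=[u=\sigma_t(\vecw)]$. Given this at level $t$, condition (i) says the candidates for $S_{t+1}$ are the words $\vecw(t,u)\,a$ with $u\in V$ and $a\in\Sigma=V$; such a word is a walk iff $a\in N(u)$, with column $e_a$ in that case and $0$ otherwise. The greedy, lexicographically first choice of a maximal linearly independent (hence column-space-spanning) subset that conditions (i)--(iii) encode then discards every column-$0$ word and keeps, for each vertex $a$ with $N(a)\ne\emptyset$, exactly one word of column $e_a$, namely the lexicographically least $\vecw(t,u)a$ with $u\in N(a)$; I set $\vecw(t+1,a)$ to be that word. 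Assuming $G$ has no isolated vertex --- which is precisely where the equality $|S_t|=|V|$ needs a hypothesis, since an isolated $v$ leaves $e_v$ outside the column span of $W_t$ once $t\ge2$ --- we get one word per vertex, so $|S_{t+1}|=|V|$, and $\vecw(t+1,a)$ is the unique word of $S_{t+1}$ with last color $a$ (uniqueness being forced by condition (ii), as the column is determined by the last letter). This is the invariant at level $t+1$ and also establishes the uniqueness of $\vecw(t,u)$ asserted in the statement.

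It remains to substitute into \eqref{eq:catmat}. For $t\ge1$ the columns of $B_t$ are orthonormal, so $B_t^\top B_t=I$ and $B_t^{+}=B_t^\top$; hence for any graph matrix $M$, and any $\vecw\in S_t$, $\vecw'\in S_{t+1}$, we get $C_t^M[\vecw,\vecw']=(B_t^\top M B_{t+1})[\vecw,\vecw']=M[\sigma_t(\vecw),\sigma_{t+1}(\vecw')]$, which with $\vecw=\vecw(t,u)$, $\vecw'=\vecw(t+1,v)$ yields $C_t^I[\vecw(t,u),\vecw(t+1,v)]=I[u,v]$ and $C_t^A[\vecw(t,u),\vecw(t+1,v)]=A[u,v]$. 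For $t=0$, $B_0^\top B_0=\One^\top\One=n$, so $B_0^{+}=\tfrac1n\One^\top$ and $C_0^I=\tfrac1n\One^\top I B_1=\tfrac1n\One^\top B_1$, whose entry at the word $u\in S_1$ is $\tfrac1n\One^\top e_u=\tfrac1n$, i.e.\ $C_0^I[\emptyword,u]=\tfrac1n$. The linear algebra here is routine; the genuine care goes into pinning down the intended reading of condition (iii) --- so that ``$S_{t+1}$ lexicographically minimal'' means the lexicographically first maximal (equivalently, column-space-spanning) linearly independent set, i.e.\ the output of the BFS-like procedure --- and into flagging the no-isolated-vertices hypothesis under which part (a) holds verbatim.
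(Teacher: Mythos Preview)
Your proof is correct and follows essentially the same route as the paper's: induction on $t$ to show that, under the last-letter bijection $S_t\to V$, the matrix $B_t$ is the identity, then direct substitution into \eqref{eq:catmat}. Your version is in fact more careful than the paper's --- you correctly flag that $|S_t|=|V|$ for $t\ge2$ requires $G$ to have no isolated vertices, a hypothesis the paper's statement and proof leave implicit.
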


\section{Expressivity Characterization}\label{sec:expressivity}
In this section, we characterize the expressivity of efficient aggregation (EA) through homomorphism counts.
The motivation is to position our approach structurally within a hierarchy of graph classes ranging from paths to trees.
This contrasts with existing approaches that begin with trees by default, recall \autoref{tab:related_work}.
We first define caterpillar graphs and provide an explanatory diagram that summarizes our main results.
These follow from two main theorems, each established in a separate subsection.

\begin{figure}[h]
       \centering
   \includegraphics[width=0.7\textwidth]{./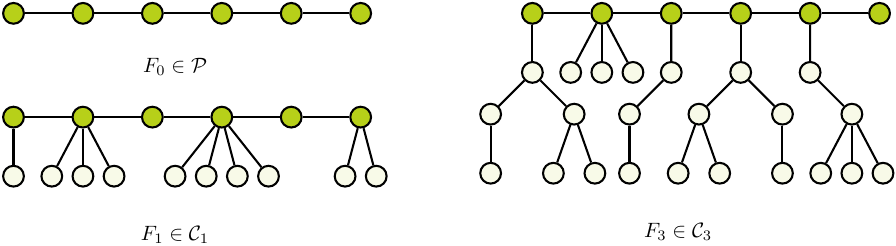}
      \caption{Caterpillar graphs with highlighted (possible) spine (green).
   Graph $F_0$ is a path of length $6$, and also a $(0,6)$-caterpillar.
   Graph $F_1$ is a $(1,6)$-caterpillar, and graph $F_3$ is a $(3,6)$-caterpillar.
   }
\label{fig:caterpillars}
\end{figure}

\paragraph{Caterpillar graphs.}
A \emph{caterpillar of height at most $h$ and length at most $t$}, or shortly \emph{$(h, t)$-caterpillar} is a graph $F$ constructable as follows:
take a sequence of rooted trees in $\mathcal{T}^\bullet_h$, i.e.,  $(L_1, {s_1}), \ldots, (L_t, {s_t})$ and connect consequent roots with edges
so that vertices $s_1, s_2, \ldots, s_t$ form a path $S$.
We call $S$ a \emph{spine}, the rooted trees \emph{legs}, and their sequence a \emph{leg sequence}.
We denote the class of all caterpillars of height at most $h$ by $\Cat_h$, and by $\CaT h t \subseteq \Cat_h$
the subclass of $(h, t)$-caterpillars.

For instance, every caterpillar in $\mathcal{C}_{0}$ is a path, $\mathcal{P}=\mathcal{C}_{0}$, see examples in \autoref{fig:caterpillars}.
The ``folklore'' caterpillars here correspond to $\Cat_1$ and are often used in graph theory \citep{Har+1973, Elb+1987}.
Other generalization of caterpillars using hair-length is due to \cite{Bur+1986}.

\paragraph{Expressivity Hierarchy.}
Our main findings on expressivity of EA (\autoref{eq:invariant}), we situate diagrammatically in the context of homomorphism expressivity.
This provides a \emph{scale} clarifying the expressive power of the associated inductive biases:
\[
\begin{tikzpicture}[every node/.style={anchor=center}, font=\normalsize, scale=0.75]
    \matrix (m) [matrix of math nodes, nodes in empty cells, row sep=0.45cm] {
    \hom(\mathcal{P},-) 
    &\Elt & \hom(\Cat_1,-) 
    &\Elt & \hom(\Cat_2,-) 
    &\Ele{\text{\scriptsize $\cdots$}}\Ele& \hom(\Cat_h,-) 
    &\Ele{\text{\scriptsize $\cdots$}}\Ele& \hom(\mathcal{T},-)\\
    \invEA(-,\chitriv) 
    &\Elt & \invEA(-,\chideg) 
    &\Elt & \invEA(-,\chitree{2}) 
    &\Ele{\text{\scriptsize $\cdots$}}\Ele& \invEA(-,\chitree{h}) 
    &\Ele{\text{\scriptsize $\cdots$}}\Ele& \Treee(-),\\
    };
        \foreach \col in {1,3,5,7,9}{
        \node[rotate=90] at ($(m-1-\col)!0.5!(m-2-\col)$) {$\Eeq$};
    }
    \label{diag:hierarchy}
\end{tikzpicture}
\]
where height $h\ge 3$.
The vertical equivalences follow from \cref{thm:homht}, and \cref{thm:walkInvariantEquivalence} which we establish in \cref{sec:homtowalk} and \cref{sec:walktoea}, respectively.
The last one involving $\mathcal T$ is due to {\citet[Theorem 7]{Dvo+2010}}.
Note that color refinement $\mathsf{cr}$ symbolizes message-passing (MP).
The horizontal bounds follow by definition from $\Cat_{h} \subset \Cat_{h+1}\subset \mathcal{T}$, 
while the strictness of the first two bounds follows from \cref{lem:strictseparation} adopting the results of \citet{Rob+2022, Sch+2025}.

\subsection{Caterpillar Homomorphisms as Expressive as Colored Walks}\label{sec:homtowalk}

\emph{Colored walk refinement:}
Let $\chi$ be a $\Sigma$-coloring on a graph $G$ with $n$ vertices.
We define a sequence of multisets $\Walk{t}(G, \chi)\In \Nat^{\Sigma^t}$ for each $t\ge 0$ with $\va\In \Sigma^t$ as:
$\Walk{0}(G, \chi)[\emptyword] = n$, $\Walk{t}(G, \chi)[\va] \text{ equals the number of occurrences of $\va$ in $G$}$, and $\Walkk(G, \chi) = \{\Walk{t}(G, \chi)\mid t\In \Nat\}$.

The reader may recall walk-incidence matrices in \autoref{eq:walkincidence}, then multiplicity in $\Walk{t}$ is a sum of entries in the corresponding column of $\mW_t$.
Note that our colored walk refinement is distinct from what is usually called walk refinement, i.e. \citep{Lic+2019}.
The following result motivates our use of colored walks that is not ad-hoc but due to its correspondence with homomorphisms:
\begin{theorem}\label{thm:homht}
    For every $h, t\ge 0$, it holds that 
    $    \hom({\CaT h t}, -) \Eeq \Walk{t}(-, \chitree{h}).
    $\end{theorem}
\emph{(Proof in~\autoref{appendix:homcounts})}. A direct consequence of \autoref{thm:homht} is: to capture the expressive power of homomorphism counts over folklore caterpillars (for instance) of length $t$, it suffices to color the vertices by their degrees and record every occurrence of a colored walk of length $t$ by $\Walk{t}(G, \chideg)$, recall \autoref{fig:correspondence}.

\subsection{Efficient Aggregation is as Expressive but Tractable}\label{sec:walktoea}
The previous result depicted more clearly the semantics of caterpillar homomorphisms, however, that is still not computationally tractable.
As we observe, the number of distinct colored walks in a graph can be large, exponential in the worst case.
Therefore, it is crucial that we introduced more efficient but as expressive representation of $\Walkk(-, \chi)$.
\begin{theorem}\label{thm:walkInvariantEquivalence}
    For every coloring $\chi$ it holds that $\Walkk(-, \chi) \Eeq \invEA(-, \chi).$
\end{theorem}
\emph{(Proof in~\autoref{appendix:automata})}. Note that the above result holds for \emph{any} coloring of vertices.

\section{Experiments}\label{sec:experiments}
We next turn to an empirical analysis of Caterpillar GNN (\autoref{eq:scheme_reduce}) incorporating efficient aggregation (EA).
Because expressivity of EA (\autoref{sec:expressivity}) is controlled by its height, we propose experiments to empirically evaluate behavior of subsequent inductive bias.
Two scenarios are considered: (I.) a controlled benchmark isolating topology-driven preference for stronger inductive bias, and (II.) real-world graph-level tasks investigating the impact of height (\cref{subsec:scaling}) on the trade-off between nodal efficiency and performance.
We defer full implementation details to Appendix~\ref{appendix:gnns}, and the training setup to Appendix~\ref{appendix:implementation}.

\subsection{Scenario I: Reducing a Bottleneck}\label{subsec:topology}
Prior to any processing of a graph $(V, E)$, the neighborhood topology $\tau(E)$ on $V$ specifies which vertices are considered close, namely those in neighborhoods.
We instead consider an alternative \emph{incidence topology} $\tau(\chi)$ on $V$, induced by a coloring $\chi$:
two vertices are considered close if they are incident \emph{or} adjacent to a common colored walk of length $T$.
Since a colored walk may have multiple occurrences, this captures relationships beyond direct neighbors.
We use $\tau(\chi)$ as a model to study different inductive biases in graph learning, grounded in lower-order concepts such as colored walks as shown by \cref{thm:homht}.

We illustrate this with our $\textsc{nStepAddition}$ benchmark.
Given two integers of at most $T$ bits, take a graph with two occurrences of a colored walk $a_1 \cdots a_T$.
We associate each number with one occurrence as follows: encode the $i$-th bit of the integer in a vertex adjacent to $a_1\cdots a_i$.
This yields a graph embedding of two $T$-bit integers, and the classification task is whether their sum equals a target integer $N$.
Under the incidence topology $\tau(\chi)$ corresponding bit positions are naturally close, while standard topology $\tau(E)$ may obscure such alignments.
Therefore, we evaluated Caterpillar GNN on $\textsc{nStepAddition}$ with increasing height $h$, and compared to MPGNN.
The results are presented in \autoref{fig:expressivity_hurts}, detailed information is provided in Appendix~\ref{appendix:implementation}.

\begin{figure}[h]
    \vspace{-5pt}
    \centering
    \includegraphics[width=0.9\textwidth]{./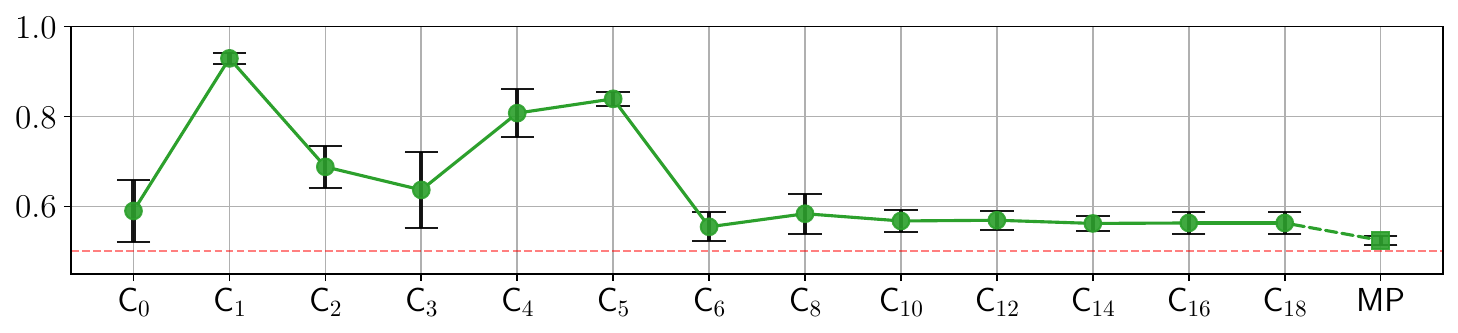}
    \caption{
        \textsc{nStepAddition}: more expressivity hurts.
        $\textsf{C}_h$ denotes Caterpillar GNN with height $h$ (ours),
        while $\textsf{MP}$ refers to MPGNN.
        The y-axis shows validation accuracy.
    }
    \label{fig:expressivity_hurts}
\end{figure}

\paragraph{Importance of topology.} The extremal results for $\textsf{C}_1$ and $\textsf{MP}$ (in \autoref{fig:expressivity_hurts}) highlight a clear difference between the two models.
In MPGNN, information propagates according to $\tau(E)$.
A hypothesis arises that Caterpillar GNN propagates information according to topology $\tau(\chi)$.
We validate empirically on \textsc{nStepAddition} that model $\textsf{C}_1$, as well as topology $\tau(\chideg)$, aligns bit positions for effective digit-by-digit addition,
while MP within $\tau(E)$ effectively promotes learning values separately for each input pair, resulting in almost missing generalization.

\paragraph{Paradoxically reversed descent.} We also observe a double descent, which we attribute to training oscillation between two regimes: digit-by-digit processing, and a higher-level aggregation, producing the high-variance performance dip.
As we scale our models (cf. \cref{subsec:scaling}, \cref{sec:expressivity}), incidence topology scales analogically from $\tau(\chitriv)$ up to $\tau(\chiid)=\tau(E)$.
Unlike rewiring strategies, e.g., \cite{Top+2022}, changing edges $E'$ to operate in $\tau(E')$, our approach restructures the computational graph into a less expressive one (\cref{sec:expressivity}).
Finally, given the systematically decreasing performance of models $\textsf{C}_{10}$ up to $\textsf{MP}$, 
a bottleneck of \emph{information alignment} of $\tau(E)$ is revealed by the topology $\tau(\chideg)$ that qualitatively differs from, e.g., oversquashing \citep{Alo+2021}.

\subsection{Scenario II: Nodal Efficiency}\label{subsec:nodal_efficiency}
    \begin{figure}[h]
    \centering
        \includegraphics[width=0.99\textwidth]{./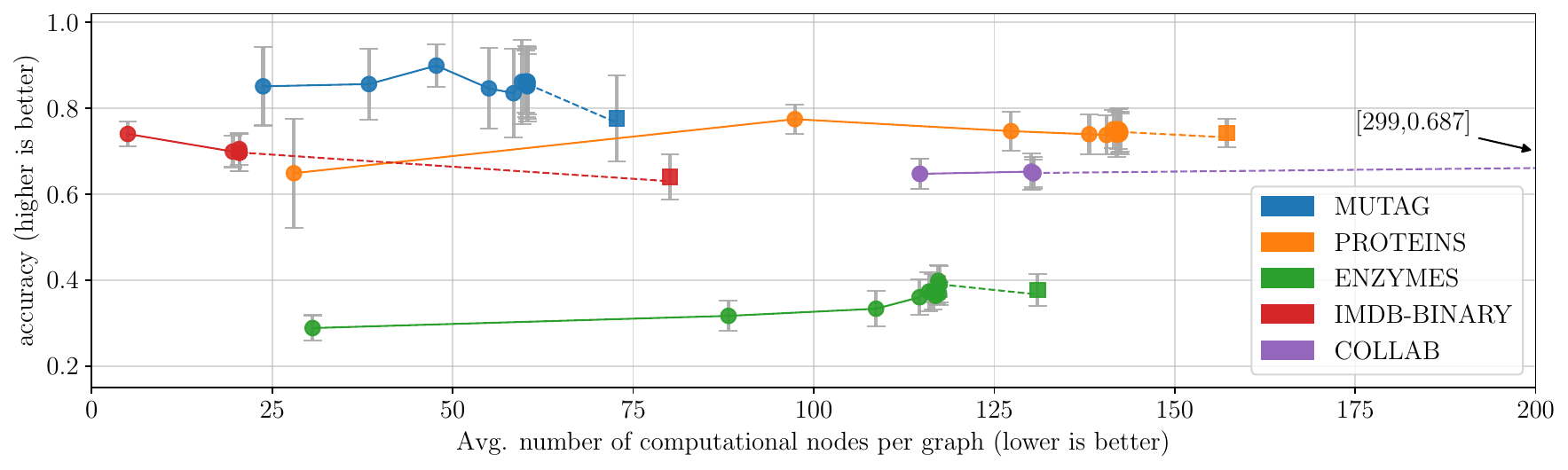}
        \caption{Computational nodes vs. accuracy.
            Solid segments connect models of Caterpillar GNN, with height $h = 0$ ($\textsf{C}_0$, circle) up to $h=10$ ($\textsf{C}_{10}$, circle), the last dashed is to MPGNN (square).
        }
    \label{fig:height}
\end{figure}

We evaluate GNNs (\autoref{fig:height}) on common real-world classification datasets~\citep{Mor+2020} in dependence to \emph{nodal efficiency}, i.e. the average number of nodes of the computational graph (\autoref{fig:efficient_neighborhood}).
We fixed the number of layers for models to ensure a relative comparison.
To this end, hyperparameters are per-dataset, so that the behavior can be attributed solely to the height parameter.
In our experiments (\autoref{fig:height}), every real-world dataset exhibits a unique behavior under increasing height.
This suggests that every type of data may contain patterns organized in varying topologies resulting in distinct preferences for inductive biases.
Effectively, the height parameter \emph{shapes the model performance and nodal efficiency}.
We remind that incorporated EA requires one initial precomputation (\cref{thm:algorithm}) of efficient matrices per dataset and height.
Overall, Caterpillar GNNs of the optimum height achieved comparable or superior accuracy compared to MPGNN as detailed in \cref{tab:dataset_statistics_acc}.

\section{Related Work} 

Graph homomorphisms are an active area of research in graph learning.
One line of work uses homomorphism counts directly as \emph{features} \citep{Pab+2021, Mae+2024, Jin+2024}, or as \emph{embeddings} \citep{Hoa+2020, Thi+2022}.
Several \emph{extensions} of MPGNNs formally demonstrate the expressivity of homomorphism counts over classes extending beyond trees, listed in \autoref{tab:related_work}, including \citep{Zha+2023, Pao+2024}.
Other line of work enhances expressivity without relating to homomorphisms.
This includes cycle representations \citep{Yan+2022, Bau+2025}, path representations \citep{Gas+2023, Gra+2024},
distance encodings \citep{Li+2020, Zha+2023b}, and spectral information such as \citep{Def+2017, Kre+2021},
which is in contrast to our study of lower expressivity.

In our results, we rely on theoretical study of homomorphism counts that traces back to \cite{Lov+1967, Lov+2012}, and their connection to Weisfeiler-Leman refinement \citep{Wei+1968} which is due to \cite{Dvo+2010, Del+2018}. 
Further developments include quantum isomorphism via homomorphisms over planar graphs  \citep{Man+2020}, further expanded by \cite{Gro+2021b, Kar+2024}, as well as algorithmic results on the tractability of homomorphism indistinguishability over restricted classes \citep{Sep+2024}.

Learning on sequential patterns such as walks has been also approached via non-equivariant random-walk kernels \citep{Bor+2005, Kri+2022}.
Other work investigates slowing down message-passing as a regularizing inductive bias \citep{Bau+2022}.
Recently, the role of computational graph in deep learning has been explored \citep{Vit+2025}.
In addition, least squares-based operators have been applied to cross-network optimization \citep{Wan+2024}, or graph coarsening \citep{Yuj+2020, Sta+2023}. These operators target graphs at a different level of abstraction, not considering layer-specific walk incidence matrices or homomorphism counts.

\section{Conclusion}

In this paper\footnote{Use of Large Language Models. We used LLMs exclusively for grammar checking and wording improvements. All conceptual content, results, and analyses were developed by the authors.}, 
we introduced mechanism that scales GNN's computational graph using the parameter height.
Subsequent \emph{Caterpillar GNNs} enable controlled trade-off between expressivity, strength of inductive bias and its nodal efficiency.
Beyond the empirical gains, such as accuracy-increasing reduction of computational nodes to $6\%$ on unattributed IMDB-BINARY,
our work gives broader insight: \emph{less expressive but strongly organized aggregation can outperform unconstrained message passing}.
Finally, our mechanism, its derivation and its rigorous theoretical analysis using colored walks and homomorphism counts over caterpillar graphs are stated in general terms and remain independent of most implementation choices.
This provides basis for further applications such as integration into state-of-the art backbones and ensembles, where height parameter enables additional space for  optimization via computational graph.
A notable limitation of our expressivity characterization is its assumption of undirected graphs, which does not directly extend to directed ones.

\newpage

\bibliographystyle{plainnat}
\bibliography{references}
\appendix
\newpage

\section{Weighted Automata}\label{appendix:automata}
In this section, we briefly recall the concept of weighted finite automata (cf. \citet{Tze+1996, Kie2+2013}).
Then, we apply insights from automata theory, using them as a key technical tool to establish the results of \autoref{sec:efficientaggregation} and \autoref{thm:walkInvariantEquivalence} from \autoref{sec:expressivity}.

\paragraph{Automata.} 
A \emph{weighted finite automaton}, or here simply \emph{automaton}, is a tuple 
\begin{equation}\Ac = (Q, \Sigma, \mM, \alpha, \omega),\end{equation}
where $Q$ is a finite set of states;
$\Sigma$ is a finite alphabet;
$\mM(-)\colon \Sigma \to \Real^{Q\times Q}$ is a per-symbol mapping of transition matrices,
$\alpha$ in $\Real^{1\times Q}$ is the initial state (row) vector, 
and $\omega$ in $\Real^{Q\times 1}$ is the final state (column) vector.
\paragraph{Semantics.}
Given an automaton $\Ac = (Q, \Sigma, \mM, \alpha, \omega)$,
we extend the mapping $\mM\colon \Sigma \to \Real^{Q\times Q}$ to words
as follows: for a given word $\vw=w_1w_2\ldots w_t\In \Sigma^*$, we define
$\mM(\vw) = \mM({w_1}) \mM({w_2}) \cdots \mM({w_t})\in\Real^{Q\times Q}$, and $\mM(\emptyword) = I\in\Real^{Q\times Q}$ for the empty word $\emptyword$ in $\Sigma$.
The \emph{semantics} of the automaton $\Ac$ is a function $\sem \Ac \colon \Sigma^* \to \Real$,
interpreted as a formal series, defined by
\begin{align}
    \sem \Ac(\vw) = \alpha \mM(\vw) \omega \in\Real \quad \text{ for all } \vw \In \Sigma^*.
\end{align}
Two automata $\Ac$ and $\Ac'$ are \emph{equivalent} if their semantics are equivalent,
that is, $\sem \Ac(\vw) = \sem{\Ac'}(\vw)$ for all $\vw\In\Sigma^*$.
The value $\sem{\Ac}(\vw)$ is called the \emph{weight} of $\vw$.
For a symbol $a\In\Sigma$, let $a^k$ denote 
the word formed by repeating $a$ exactly $k$-times.

\begin{figure}[t]
    \centering
    \includegraphics[width=0.75\textwidth]{./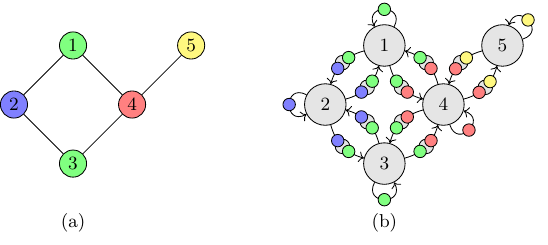}
\caption{
(a)~An example graph $G$ on vertices $\{1,2,3,4,5\}$ with a $\Sigma$-coloring $\chi$, where $\Sigma = \{\mathsf{r, b, g, y}\}$.
(b)~The weighted automaton $\mathcal{A}(G, \chi)$ that accepts a weighted language of words corresponding to colored walks in $G$ (\cref{lem:equivalencewalkrefinementinvariant}). 
The weights of the language represent the number of occurrences of each walk in $G$.
Transitions corresponding to oriented edges $uv \In E(G)$ are represented by matrices $\mM(\chi(u)\vmin\chi(v)) = \mP_{\chi(u)}\mA\mP_{\chi(v)}$, while transitions corresponding to loops at vertex $u \In V(G)$ are given by matrices $\mM(\chi(u)) = \mP_{\chi(u)}\mI\mP_{\chi(u)}$.
}
\label{fig:walks2words}
\end{figure}

\subsection{Graph Walks and Automata Semantics}
As in the main text, we assume a graph $G$ and $\chi$ a $\Sigma$-coloring on $G$,
for which we now define a weighted finite automaton $\Ac(G,\chi) = (Q, \Sigg, \mM, \OneT, \One)$ defined as follows.
The states are corresponding to the vertices of the graph $Q\coloneqq V(G)$,
and the alphabet is induced by the colors of the vertices and edges as follows:
\begin{align}
    \Sigg \coloneqq 
    \left\{ \chi(u) \mid u \in V(G)  \right\} \cup 
    \left\{ \chi(u)\vmin\chi(v) \mid uv \in E(G) \right\},
\end{align}
where we consider both $a$ and $a\vmin b$ as a single symbol in $\Sigg$, for some original colors in $a,b\In\Sigma$.
For $a$ in $\Sigg$, the \emph{partition matrix}  $\mP_a \In \Real^{V\times V}$ is the diagonal matrix defined as $\mP_a[u, u] = 1$ if $\chi(u) = a$ and $0$ otherwise. For each symbol $a$ or $a\vmin b\In\Sigg$, 
we define the transition matrices 
using the adjacency matrix of $G$ and the coloring-dependent partition matrices
as $$
\mM(a) \coloneqq \mP_a = \mP_a \mI \mP_a \iRealVV \text{ and }  \mM(a\vmin b)\coloneqq \mP_a \mA \mP_b\iRealVV.
$$
We set the initial and final vectors to the all-one vectors.
We depict an example of a graph and a coloring and the corresponding automata in~\autoref{fig:walks2words}.

We also recall the following from the main paper. For a given graph $G$ with $\chi$ a $\Sigma$-coloring, and a given length $t\ge 0$, we define the \emph{walk-incidence matrix} $\mW_t=\mW_t(G, \chi)$ of shape $V\times \Sigma^t$ 
for each $u\In V$ and $\va \In \Sigma^t$ by
\begin{align}
    \mW_t[u, \va]& \text{ is the number of occurrences of $\va$ that terminate in vertex $u$}.
    \label{eqR:walkincidence}
\end{align}

\emph{Colored walk refinement:}
Let $\chi$ be a $\Sigma$-coloring on a graph $G$ with $n$ vertices.
We define a sequence of multisets $\Walk{t}(G, \chi)\In \Nat^{\Sigma^t}$ for each $t\ge 0$ with $\va\In \Sigma^t$ as:
Let $\chi$ be a $\Sigma$-coloring on a graph $G$ with $n$ vertices.
A \emph{colored walk refinement} is a sequence of multisets $\Walk{t}(G, \chi)\In \Nat^{\Sigma^t}$ defined for each $t\ge 0$ with $\va\In \Sigma^t$ as
\begin{align*}
    \Walk{0}(G, \chi)[\emptyword] &= n,\\
    \Walk{t}(G, \chi)[\va]&\quad\text{ equals the number of occurrences of $\va$ in G},\text{ and }\\
    \Walkk(G, \chi)& = \{\Walk{t}(G, \chi)\mid t\In \Nat\}.
\end{align*}

The main result of this section is the equivalence of graph-induced weighted finite automata and colored walk refinement.
\begin{theorem}
For all colorings $\xi$ and for all $t\ge 0$, 
$$        \sem{\Ac(-, \chi)} \Eeq \Walkk(-, \chi).$$
\end{theorem}

To prove this theorem, we need a couple of intermediate results. We first show that the partition matrices $\mP_a$ are projection matrices. 
\begin{proposition}[Partition matrices]\label{prop:partition}
Let $a$ and $b$ be colors in  $\Sigma$ then for the partition matrices the following hold:
\begin{enumerate}
    \item $\mP_a = \mP_a^2$ is a projection, and
    \item $\mP_a\mP_b$ is all-zero if $a \neq b$.
\end{enumerate}
\end{proposition}
\begin{proof}
For the first part, we have for all $u, v \In V(G)$:
\begin{align*}
    (\mP_a\mP_a)[u, v] &= 
    \sum_{w\in V(G)}\mP_a[u, w]\mP_a[w, v] = 
    \sum_{\substack{w = u}}1\cdot \mP_a[w, v] = \sum_{u=w=v}1\cdot 1,
\end{align*}
which is $1$ if $u=v$ and $0$ otherwise.
Similarly, for the second part, given the colors $a, b\In\Sigma$ we have for all $u, v \In V(G)$:
\begin{align*}
    (\mP_a \mP_b)[u,v] =& 
    \sum_{w\in V(G)}\mP_a[u, w]\mP_b[w, v] = 
    \sum_{\substack{
        u = w \\ a =\chi(w) 
    }} 1 \cdot \mP_b[w, v] = 
    \sum_{\substack{
        u = w,  w= v\\ a =\chi(w), \chi(w)  = b
    }} 1 \cdot 1,
\end{align*}
which is $1$ if $u=v$ and $a=b$, and $0$ otherwise.
Thus, $\mP_a \mP_b$ is all-zero matrix if $a\neq b$.
\end{proof}

\begin{observation}\label{obs:wordform}
Let $\vw$ in $\Sigg$ be a word with a non-zero weight, $\sem{\Ac(G, \chi)}(\vw) > 0$,
then $\vw$ is of the following form:
\begin{align*}
    (a_1^{k_1})
    (a_1\vmin a_2)
    (a^{k_2}_2)
    (a_2 \vmin a_3)
     (a^{k_3}_3)\cdots 
     (a^{k_{t-1}}_{t-1})
     (a^{k_{t-1}}_{t-1} \vmin a^{k_{t}}_{t})
     (a^{k_t}_t),
\end{align*}
where $a_i\In\Sigma$ are colors, $k_i\In\Nat$ are non-negative integers,
\end{observation}
\begin{proof}
For the sake of contradiction, consider a word $\vw$ that contains a subword $a b\vmin c \iSigg$, $c\vmin b a \iSigg$, or $ab\iSigg$ such that $a\neq b$ and weight $\sem{\Ac(G, \chi)}(\vw) > 0$.
In the first case,
by the definition of semantics, we get for any $\vw_1, \vw_2\In\Sigma^*$:
\begin{align*}
\sem{\Ac(G,\chi)}(\vw_1 a b\vmin c\, \vw_2 )
&=\OneT \mM(\vw_1)\mM(a)\mM(b\vmin c)\mM(\vw_2)\One\\
&=\OneT \mM(\vw_1)\mP_a \mP_b \mA \mP_c\mM(\vw_2)\One\\
 &=\OneT \mM(\vw_1) 0 \mM(\vw_2)\One = 0,
\end{align*}
where we used (1.) of \cref{prop:partition} for the second equality. 
Thus, we have $\sem{\Ac(G,\chi)}(\vw_1 a b\vmin c\, \vw_2 ) = 0$,
and symmetrically $\sem{\Ac(G,\chi)}(\vw_1 c\vmin b a\, \vw_2 ) = 0$,
and analogically $\sem{\Ac(G,\chi)}(\vw_1 ab\, \vw_2 ) = 0$.
\end{proof}

\begin{lemma}\label{lem:correspondence}
For all $t\In \Nat$, $a_i\In\Sigma$ and $k_i\In\Nat$ for $i\In [t]$ the weight of the following words is equal:
\begin{enumerate}
    \item $(a_1^{k_1})
    (a_1\vmin a_2)
    (a^{k_2}_2)
    (a_2 \vmin a_3)
     (a^{k_3}_3)\cdots 
     (a^{k_{t-1}}_{t-1})
     (a^{k_{t-1}}_{t-1} \vmin a^{k_{t}}_{t})
     (a^{k_t}_t)$, and
     \item $(a_1\vmin a_2)
    (a_2\vmin a_3)
    \cdots 
     (a_{t-1} \vmin a_{t}).\label{eq:correspondence}$
\end{enumerate}
\end{lemma}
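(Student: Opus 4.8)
The plan is to show both words evaluate to the same weight under $\sem{\Ac(G,\chi)}$ by direct computation using the structure of the transition matrices, exploiting the idempotency of the partition matrices $P_a$ established in \cref{prop:partition}. Recall that $M(a^{k}) = P_a^{k}$ and $M(a\vmin b) = P_a A P_b$, so the word in item (1) evaluates to
\[
\OneT\, P_{a_1}^{k_1}\, (P_{a_1} A P_{a_2})\, P_{a_2}^{k_2}\, (P_{a_2} A P_{a_3})\, P_{a_3}^{k_3}\cdots (P_{a_{t-1}} A P_{a_t})\, P_{a_t}^{k_t}\, \One,
\]
while the word in item (2) evaluates to $\OneT\, (P_{a_1} A P_{a_2})(P_{a_2} A P_{a_3})\cdots(P_{a_{t-1}} A P_{a_t})\,\One$. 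The key observation is that in the first expression, every block $P_{a_i}^{k_i}$ is sandwiched between factors that already end and begin with $P_{a_i}$ (namely the trailing $P_{a_i}$ of $(P_{a_{i-1}} A P_{a_i})$ on the left and the leading $P_{a_i}$ of $(P_{a_i} A P_{a_{i+1}})$ on the right), so by $P_{a_i} = P_{a_i}^2$ the powers collapse: $P_{a_i} P_{a_i}^{k_i} P_{a_i} = P_{a_i}$. The only care needed is at the two ends, where $P_{a_1}^{k_1}$ is preceded only by $\OneT$ and $P_{a_t}^{k_t}$ is followed only by $\One$; there one uses instead that $P_{a_1}^{k_1} P_{a_1} = P_{a_1}$ (absorbed into the leading $P_{a_1}$ of the first arrow block) and symmetrically $P_{a_t} P_{a_t}^{k_t} = P_{a_t}$.

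First I would set up the two evaluations explicitly as products of matrices, being careful about the boundary conventions for the empty word ($M(\emptyword) = I$) so that the case $t = 1$ (where item (2) is the empty word and evaluates to $\OneT \One = n$) is handled — though I should double-check whether the intended range is $t \geq 2$, since for $t=1$ item (1) is just $(a_1^{k_1})$ with weight $\OneT P_{a_1}^{k_1}\One = \OneT P_{a_1}\One$, which equals $n$ only if all vertices have color $a_1$; this suggests the lemma is implicitly stated for the word forms arising in \cref{obs:wordform} where such degenerate cases either don't occur or the $k_i$ are constrained. I would state the identity for $t \geq 1$ with the understanding that the claim concerns words of the form in \cref{obs:wordform}, and note that for $t = 1$ both sides reduce to $\OneT P_{a_1} \One$ under the convention that the bare-color prefix is absorbed. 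Then I would carry out the collapse $P_a^k = P_a$ (valid for $k \geq 1$ by induction on $k$ using part (1) of \cref{prop:partition}) at every internal position, and separately absorb the two end blocks into the adjacent arrow factors.

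A cleaner way to organize the argument, which I would prefer for the write-up, is to first prove the auxiliary identity $P_a^k = P_a$ for all $k \geq 1$ (a one-line induction from \cref{prop:partition}(1)), and then observe that after replacing every $P_{a_i}^{k_i}$ by $P_{a_i}$, the word in item (1) has weight $\OneT P_{a_1}(P_{a_1} A P_{a_2}) P_{a_2} (P_{a_2} A P_{a_3}) P_{a_3} \cdots (P_{a_{t-1}} A P_{a_t}) P_{a_t} \One$; then repeatedly applying $P_{a_i} P_{a_i} = P_{a_i}$ to merge each solitary $P_{a_i}$ into its neighboring arrow block (and $\OneT P_{a_1} = \OneT P_{a_1}$, $P_{a_t}\One$ at the ends, which are already in the right form once the leading/trailing partition matrix of the arrow product is present) yields exactly the weight of item (2). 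Each merge is an instance of \cref{prop:partition}(1), so no new ideas are needed beyond bookkeeping.

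The main obstacle is not conceptual but notational: keeping the indices of the partition matrices, the arrow symbols $a_i \vmin a_{i+1}$, and the exponent blocks $a_i^{k_i}$ aligned through a chain of roughly $2t$ factors, and handling the boundary blocks $P_{a_1}^{k_1}$ and $P_{a_t}^{k_t}$ correctly since they lack a partition factor on one side. I would mitigate this by writing the product once in full for general $t$, isolating a single representative internal block $\cdots P_{a_{i-1}} A P_{a_i}\, P_{a_i}^{k_i}\, P_{a_i} A P_{a_{i+1}} \cdots = \cdots P_{a_{i-1}} A P_{a_i} A P_{a_{i+1}} \cdots$, and then asserting the general case by an obvious finite induction on $t$ (or simply on the number of exponent blocks), peeling off one block at a time from, say, the right end.
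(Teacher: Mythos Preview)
Your proposal is correct and takes essentially the same approach as the paper: both expand the transition matrices and use the idempotency $P_a^2 = P_a$ from \cref{prop:partition} to collapse each block $P_{a_i}^{k_i}$ and absorb it into the neighboring arrow factor $P_{a_i} A P_{a_{i+1}}$. The paper's write-up is simply terser---it records the local identity $M\big((a_i^{k_i})(a_i\vmin a_{i+1})(a_{i+1}^{k_{i+1}})\big) = M(a_i\vmin a_{i+1})$ and appeals to the definition of the semantics---and, like you, it does not address the $t=1$ degeneracy you noticed (which is a quirk of the statement rather than of the argument).
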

\begin{proof}
    This follows from  (1) of \cref{prop:partition}. Indeed, it
    suffices to observe that 
    \begin{align*}
        &\mM\left(a_i^{k_i} \right) =\mM\left(a_i \right), \text{ and }\\
                &\mM\left((a_i^{k_i}) (a_i\vmin a_{i+1})\, (a^{k_{i+1}}_{i+1})\right) =
        \mP^{k_i}_{a_i} \mP_{a_i} \mA \mP_{a_{i+1}} \mP^{k_{i+1}}_{a_{i+1}} =
        \mP_{a_i} \mA \mP_{a_{i+1}} = 
        \mM\left(a_i\vmin a_{i+1}\right).
    \end{align*}
   Then, from the semantics of automata, the statement on the equal weights follows.
\end{proof}

Therefore,
by \cref{lem:correspondence},
we can characterize the semantics of automata $\Ac(G, \chi)$, 
using only words of the form as in \cref{eq:correspondence} 
without loss of generality.
Consequently, by \autoref{obs:wordform},
we shall use a more natural \emph{simplified notation} 
$$a_1\vmin a_2 \vmin a_3 \cdots a_{t-1} \vmin a_{t}$$ for the word $(a_1\vmin a_2)(a_2\vmin a_3)\cdots (a_{t-1} \vmin a_{t})$.

We now make the first connection between weighted automata and walk-incidence matrices.
\begin{lemma}\label{lem:equivalencewalkrefinementinvariant}
    Let $G$ be a graph, $\chi$ a $\Sigma$-coloring on $G$, and $(w_1, w_2, \dots, w_t)\In\Sigma^t$ a colored walk.
    Then the following holds:
    \begin{align}
     \mM\left(w_t\vmin w_{t-1}\vmin \dots \vmin w_1\right) \One =
     \mW_t(G, \chi)[-, w_1w_2\cdots w_t].
    \end{align}
\end{lemma}
\begin{proof}
    By induction on the length $t$.
    For the base case $t=0$,     it holds $\mM(\emptyword)\One = \mI \One = \One = \mW_0(G, \chi)[-, \emptyword]$.

    For the induction step, suppose that the lemma holds for $t$.
    We take a word $w_1\vmin w_2\vmin \dots \vmin w_\tp$ and consider a vertex $u\In V(G)$. Then,
    \begin{align*}
        (\mM\left(w_\tp\vmin w_{t}\vmin \dots \vmin w_{1}\right)\One)[u]  &=
        \left(\mM(w_\tp \vmin w_{t})\mM\left(w_t\vmin w_{t-1}\vmin \dots \vmin w_{1}\right)\One\right)[u] \\
        &=\sum_{v\in V(G)} \mM(w_\tp \vmin w_{t}) \mW_t(G, \chi)[v, w_1w_2\cdots w_{t}] \\
                &=\sum_{v\in V(G)} \mP_{w_\tp}[u, u]\mA[u, v] \mP_{w_{t}}[v, v] \mW_t(G, \chi)[v, w_1w_2\cdots w_{t}] \\
        &=\sum_{\substack{
            v\in V(G) \\
            w_\tp = \chi(v) \\
            w_{t} = \chi(u)
        }}
        \mA[v, u] \mW_t(G, \chi)[v, w_1w_2\cdots w_{t}] \\
        &=\sum_{\substack{
            uv \in E(G) \\
            \chi(v) = w_{t} \\
            \chi(u) = w_\tp
        }}
         \mW_t(G, \chi)[u, w_1w_2\cdots w_{t}] \\
        &=\mW_\tp(G, \chi)[u, w_1w_2\cdots w_\tp],
    \end{align*}
    where the first and third equality follows from the definition of $\mM$,
    the second from the induction hypothesis,
    the forth and fifth follows from the definition of $\mP_{w_\tp}$ and $A$.
    Finally, the last equation follows from the observation 
    that we can find every occurrence of the walk of color $w_1 w_2 \cdots w_t w_\tp$
    ending in the vertex $u$, given that $\chi(u) = w_\tp$,
    as an occurrence of the walk of color $w_1 w_2 \cdots w_t $ ending at one of its neighbors $v$,
    which is of color $\chi(v) = w_{t}$.
\end{proof}

\begin{theorem}\label{thm:equivalencewalkrefinementinvariant}
    Let $G$ be a graph, $\chi$ a $\Sigma$-coloring on $G$, and $(w_1, w_2, \dots, w_t)\In\Sigma^t$ a colored walk.
    Then the following holds:
    \begin{align}
     \sem{\Ac(G, \chi)}\left(w_t\vmin w_{t-1}\vmin \dots \vmin w_1\right) =
     \sum_{u\in V(G)}\mW_t(G, \chi)[u, w_1w_2\cdots w_t].
    \end{align}
\end{theorem}
\begin{proof}
    By the definition of semantics, we have:
    \begin{align*}
        \sem{\Ac(G, \chi)}\left(w_t\vmin w_{t-1}\vmin \dots \vmin w_1\right) &= 
        \OneT \mM(w_1\vmin w_2\vmin \dots \vmin w_t)\One \\
        &=\OneT \left(\mM(w_1\vmin w_2\vmin \dots \vmin w_t)\right)^\top\One \\
        &=\OneT \mM(w_t\vmin w_{t-1}\vmin \dots \vmin w_1)\One \\
        &=\OneT \mW_t(G, \chi)[u, w_1w_2\cdots w_t] \\
        &=\sum_{u\in V(G)}\mW_t(G, \chi)[u, w_1w_2\cdots w_t],
    \end{align*}
    where the second equality follows from the symmetry of all matrices $\mM$,
    since $\mP_a$ and $\mA$ are symmetric matrices, $a \in \Sigma$,
    the fourth equality follows from \cref{lem:equivalencewalkrefinementinvariant}.
\end{proof}

The above theorem shows the equivalence of graph-induced weighted finite automata 
as we defined above is equivalent to the colored walk refinement defined in \autoref{sec:expressivity}.
\begin{corollary}\label{col:equivalencewalkrefinementinvariant}
    For all $t\ge 0$, and $\chi$ a coloring it holds that:
    \begin{align}
        \sem{\Ac(-, \chi)} \Eeq \Walkk(-, \chi).
    \end{align}
\end{corollary}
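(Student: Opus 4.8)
## Proof Plan for \texorpdfstring{\cref{col:equivalencewalkrefinementinvariant}}{the Corollary}

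The plan is to reduce the corollary to \cref{thm:equivalencewalkrefinementinvariant}, which has already been established pointwise (for each fixed walk color). What remains is essentially to package that pointwise equality into a statement about the two functions on graphs being equally expressive, and to argue that no information is lost by restricting attention to the ``simplified'' words $w_1 \vmin w_2 \vmin \dots \vmin w_t$ rather than all of $\Sigg^*$.

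First I would fix two graphs $G$ and $H$ together with colorings $\chi$, and unfold the definitions of the two invariants. On one side, $\Walkk(G,\chi) = \{\Walk{t}(G,\chi) \mid t \ge 0\}$, where $\Walk{t}(G,\chi)(\vecw) = \sum_{u \in V(G)} W_t(G,\chi)[u,\vecw]$ for $\vecw \in \Sigma^t$. On the other side, the semantics $\sem{\Ac(G,\chi)}$ is a formal series on $\Sigg^*$. The key reduction is: by \autoref{obs:wordform} every word with nonzero weight is of the special block form, and by \cref{lem:correspondence} its weight equals that of the corresponding word $a_1 \vmin a_2 \vmin \dots \vmin a_t$; hence the semantics $\sem{\Ac(G,\chi)}$ is completely determined by its values on words of the form $w_1 \vmin w_2 \vmin \dots \vmin w_t$ (together with the trivial fact that words not of the special form get weight zero, which depends only on the fact that $\Sigg$ records which color-pairs are edges, and that edge structure is itself recoverable — I will need to be slightly careful here, see below). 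Then \cref{thm:equivalencewalkrefinementinvariant} says precisely that for each such word, $\sem{\Ac(G,\chi)}(w_1 \vmin \dots \vmin w_t) = \Walk{t}(G,\chi)(w_1 w_2 \cdots w_t)$.

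From this pointwise identity I would deduce both directions of $\Eeq$. For $\sem{\Ac(-,\chi)} \Ege \Walkk(-,\chi)$: if $\sem{\Ac(G,\chi)} = \sem{\Ac(H,\chi)}$ as formal series, then in particular they agree on every word $w_1 \vmin \dots \vmin w_t$, so by \cref{thm:equivalencewalkrefinementinvariant} we get $\Walk{t}(G,\chi)(\vecw) = \Walk{t}(H,\chi)(\vecw)$ for every $t$ and every $\vecw \in \Sigma^t$, hence $\Walk{t}(G,\chi) = \Walk{t}(H,\chi)$ as multisets for all $t$, hence $\Walkk(G,\chi) = \Walkk(H,\chi)$. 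For the converse $\Walkk(-,\chi) \Ege \sem{\Ac(-,\chi)}$: if $\Walkk(G,\chi) = \Walkk(H,\chi)$, then $\Walk{t}(G,\chi) = \Walk{t}(H,\chi)$ for every $t$, so via \cref{thm:equivalencewalkrefinementinvariant} the two semantics agree on all simplified words; and on non-simplified words both semantics vanish (this uses that whether a word is ``valid'' — i.e.\ its constituent color-pairs $a \vmin b$ correspond to actual edges — is the same question for $G$ and $H$ precisely because agreement at $t=2$ already forces $\Walk{2}(G,\chi) = \Walk{2}(H,\chi)$, which tells us which symbols $a\vmin b$ have positive total weight). Hence $\sem{\Ac(G,\chi)} = \sem{\Ac(H,\chi)}$.

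The main obstacle I anticipate is the bookkeeping around the alphabet $\Sigg$ and the set of ``valid'' words: $\Sigg$ itself is defined in terms of $G$ (it contains $\chi(u) \vmin \chi(v)$ only for actual edges $uv$), so a priori two graphs could induce automata over different alphabets, and one must argue that the formal-series comparison still makes sense and that the non-simplified / invalid words contribute nothing to either side. The clean way to handle this is to observe that the common alphabet can be taken to be all of $\Sigma \cup (\Sigma \vmin \Sigma)$ with transition matrix $M(a\vmin b) = P_a A P_b$ (which is automatically zero when $ab \notin E$), so the automaton is well-defined over a graph-independent alphabet and all the zero-weight words are genuinely zero; then everything above goes through and the only substantive content is the already-proved \cref{thm:equivalencewalkrefinementinvariant}. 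I would state this normalization explicitly at the start of the proof to avoid any ambiguity.
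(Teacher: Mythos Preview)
Your proposal is correct and follows the same route as the paper: the corollary is an immediate consequence of \cref{thm:equivalencewalkrefinementinvariant}, together with \autoref{obs:wordform} and \cref{lem:correspondence} to reduce arbitrary words in $\Sigg^*$ to the simplified form. The paper in fact gives no separate proof at all, simply remarking that the theorem establishes the equivalence; your plan is a more explicit unpacking of the same reduction, and your observation about normalizing the alphabet to all of $\Sigma \cup (\Sigma \vmin \Sigma)$ is a legitimate clarification that the paper leaves implicit.
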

The implication of \autoref{thm:equivalencewalkrefinementinvariant} and \autoref{col:equivalencewalkrefinementinvariant} is that 
instead of the sums of columns of walk-incidence matrices  we can walk directly with the automata semantics,
for which we can \emph{choose} automaton potentially more suitable for our application.

\vspace{-5pt}
\subsection{Minimization of Weighted Automata}

In this subsection, we recall the definition of the \emph{minimal weighted automata} (cf. \citet{Kie2+2013}, \citet{Keifer2020notes}).
Unlike for the standard finite automata, in the weighted case, the concrete variant is unique only up the change of the basis of the vector space $\Real^{Q}$. 
On the bright side, every such minimal weighted automata has the unique dimension, that is the number of states $|Q|$,
and the unique (canonical) word subset $S\subseteq\Sigma^*$ of size at most $|Q|$.
Also, the minimal weighted automata can be computed in time $\mathcal{O}(n^3|\Sigma|)$.

In our case, the graph induced automata $\Ac(G, \chi)$ are completely symmetric 
in the sense that $\sem{\Ac(G, \chi)}(w_1\vmin w_2\vmin \dots \vmin w_t) = \sem{\Ac(G, \chi)}( w_{t}\vmin w_{t-1}\vmin \cdots \vmin w_1)$
for $w_i\In\Sigma$, but also in the sense of that the initial vector can be interchanged as $\alpha^\top = \omega = \One$,
and similarly matrices $\mM(a) = \mM(a)^\top$ and $\mM(a\vmin b) = \mM(b\vmin a)^\top$.
It follows that the forward and backward steps of the automata minimization described are spanning the same vector space.
And thus if there is a minimal automata $A_1=(Q_1, \Sigg, \mM_1, \OneT, \One)$ for the graph induced automata $\Ac(G, \chi)$,
it holds that there is a matrix of a full rank $\mF\In\Real^{V\times Q_1}$ such that for all $a\In\Sigma$:
\begin{align}
    \OneT \mF= \OneT, \quad
    \One =  \mF \One, \quad
    \mF\mM_1(a)  = \mM(a) \mF, \quad
    \mF\mM_1(a\vmin b)  =  \mM(a\vmin b) \mF.
    \label{eq:fmatequivalence}
\end{align}

In general, by e.g. \citet{Keifer2020notes}, if we have two minimal automata $\Ac_{i} = (Q_i, \Sigg, \mM_i, \alpha_i, \omega_i)$ for $i=1,2$,
then there is an invertible matrix $\mQ\In\Real^{Q_2\times Q_1}$ such that for all $a\In\Sigma$:
\begin{align}
    \alpha_2 \mQ =\alpha_1, \quad
    \omega_2 = \mQ \omega_1, \quad
    \mQ\mM_2(a) = \mM_1(a) \mQ, \quad
    \mQ\mM_2(a\vmin b) = \mM_1(a\vmin b) \mQ.
    \label{eq:qmatequivalence}
\end{align}

Inspired by the minimization procedure of weighed automata, also cf. \citet{Kie2+2013},
we propose a similar procedure \emph{layered canonical word search}, see Algorithm~\ref{alg:layeredcws}, to compute the canonical word subsets $S_t(G, \chi)$ for all $t\In\Nat$, as given in \cref{subsec:sequential}. The main distinction from the automata minimization algorithm is that we keep separate queue, base, and the word subset for each layer $t$,
and thus we are ensuring linear independence for each layer $t$ separately.

\begin{figure}[t]
    \centering
    \begin{minipage}{0.8\textwidth}
    \begin{algorithm}[H]\label{alg:layeredcws}
        \caption{Layered canonical word search}
        \KwIn{
            Number of layers $T$,
        ordered alphabet $(\Sigma, \le)$, 
        graph $G$, coloring $\chi$
        }
        \KwOut{$S_0,S_1, \dots S_T$ finite subsets of $\Sigma^*$}
        \For{$t \gets 0$ \textbf{to} $T$}{
    $S_t \gets \emptyset$\\
    $\mathcal{B}_t \gets \emptyset$\\
    queue $Q_t\gets []$\\
        }
    $Q_0.push(\emptyword)$\\
    \For{$t \gets 0$ \textbf{to} $T$}{
        \While{\textbf{not} $Q_t.empty()$}{
        $\vw \gets Q_t.pop()$\\
        $\gamma \gets \mW_t(G, \chi)[-, \vw]$\\
        \If{$\operatorname{rank}\left(\mathcal{B}_t\cup \{\gamma\}\right)\,\mathsf{>}\,|\mathcal{B}_t|$}{
            $\mathcal{B}_t \gets \mathcal{B}_t \cup \{\gamma\}$\\
            $S_t \gets S_t \cup \{\vw\}$\\
            \If{$t < T$}{
            \ForEach{$a$ \textbf{in} $\Sigma$}{
                $Q_\tp.push(\vw a)$\\
                }
                }
            }
        }
    }
    \Return $S_0, S_1, \dots, S_T$
    \end{algorithm}
    \end{minipage}
\end{figure}

We recall the Conditions given in \cref{subsec:sequential}:
The definition proceeds inductively: $S_0(G, \chi) = \{\emptyword\} = \Sigma^0$,
and for known $S_t(G, \chi)$, the set $S_{t+1}(G, \chi)\subseteq \Sigma^{t+1}(G, \chi)$ satisfies the following:
\begin{enumerate}[label=(\roman*)]
    \item for every $\va c\In S_{t+1}(G, \chi)$ there is $\va \In S_t(G, \chi)$ (\emph{prefix-closedness}),\label{cond_r:i}
    \item the columns of $\mW_{t+1}$ induced by $S_{t+1}(G, \chi)$ are \emph{linearly independent}, and \label{cond_r:ii}
    \item the set $S_{t+1}(G, \chi)$ is \emph{lexicographically minimal} among other sets satisfying \ref{cond_r:i} and \ref{cond_r:ii}. \label{cond_r:iii}
\end{enumerate}

\begin{lemma}\label{lem:layeredcws}
    Let $G$ be a graph, $\chi$ a $\Sigma$-coloring on $G$
    then the Algorithm~\ref{alg:layeredcws} computes the canonical word subsets $S_t(G, \chi)$, 
    satisfying Conditions (i), (ii), and (iii), for all $t\In \Nat$.
\end{lemma}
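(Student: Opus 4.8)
The plan is to verify, by induction on the layer index $t$, that the sets $S_t$ produced by Algorithm~\ref{alg:layeredcws} coincide with the canonical word subsets characterized by Conditions (i)--(iii). First I would set up the loop invariants maintained by the algorithm: at the moment layer $t$ finishes processing, (1) $\mathcal{B}_t$ equals the set of columns $\{W_t(G,\chi)[-,\vecw] \mid \vecw \in S_t\}$ and these are linearly independent, (2) every word placed on the queue $Q_t$ has a proper prefix already accepted into $S_{t-1}$, and (3) the queue $Q_t$ is drained in lexicographic order, because it is filled by the inner \texttt{ForEach} loop iterating over $(\Sigma,\le)$ in increasing order while the words $\vecw$ are themselves popped in lexicographic order from $Q_t$ (this last point needs the observation that the queue, seeded layer by layer, enumerates $\vecw a$ in lexicographic order whenever $\vecw$ is enumerated in lexicographic order and $a$ runs over $\Sigma$ increasingly). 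Invariant (2) is immediate from the fact that a word $\vecw a$ is pushed onto $Q_{t+1}$ only inside the block guarded by ``$\vecw$ accepted into $S_t$''.

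Next I would prove Condition (i) (prefix-closedness): if $\vecw a \in S_{t+1}$ then $\vecw a$ was popped from $Q_{t+1}$, hence pushed there, hence $\vecw$ was accepted into $S_t$, so $\vecw \in S_t$. Condition (ii) (linear independence of the selected columns) is exactly the rank test ``$\mathrm{rank}(\mathcal{B}_t \cup \{\gamma\}) > |\mathcal{B}_t|$'' that gates each acceptance, together with invariant (1). The crux is Condition (iii), lexicographic minimality. Here I would argue that $S_{t+1}$, built greedily in lexicographic order, is the lexicographically-minimal subset of the candidate pool $\{\vecw a \mid \vecw \in S_t,\ a \in \Sigma\}$ whose columns form a basis of the span of that pool's columns; this is the standard greedy/matroid fact that processing a ground set in a fixed linear order and accepting each element iff it is independent of those already accepted yields the lexicographically-least basis. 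To finish, I must connect this to the \emph{global} Condition (iii), which demands lexicographic minimality among \emph{all} prefix-closed families satisfying (i) and (ii) simultaneously across all layers; this follows by a second induction on $t$: given that $S_0,\dots,S_t$ are already the canonical (lexicographically minimal) ones, the candidate pool for layer $t+1$ forced by prefix-closedness is exactly $\{\vecw a \mid \vecw \in S_t,\ a\in\Sigma\}$, and within that forced pool the greedy construction yields the minimal choice, so the canonical $S_{t+1}$ is reproduced.

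The main obstacle I expect is making the lexicographic-order bookkeeping airtight, specifically establishing that the queue $Q_{t+1}$ is actually drained in lexicographic order of the words it contains. The subtlety is that words of the form $\vecw a$ are pushed across several iterations of the outer while-loop (once for each accepted $\vecw \in S_t$), so one must check that if $\vecw < \vecw'$ are both accepted into $S_t$, then $\vecw$ is popped before $\vecw'$ (true, by the inductive lexicographic order at layer $t$), and hence all of $\vecw\Sigma$ is enqueued before any of $\vecw'\Sigma$; combined with the \texttt{ForEach} iterating $\Sigma$ in increasing order, the resulting enqueue sequence on $Q_{t+1}$ is lexicographically sorted, and a FIFO queue preserves that order on dequeue. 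Everything else --- the rank/independence invariant, prefix-closedness, and the matroid-greedy minimality argument --- is routine once this ordering fact is in place. I would also remark that the $|S_t|\le |V(G)|$ bound is automatic from Condition (ii) and the rank of $W_t$, matching the stated size bound.
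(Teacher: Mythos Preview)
Your proposal is correct and follows essentially the same inductive approach as the paper's own proof: both argue by induction on $t$, deriving (i) from the queue-push mechanism, (ii) from the rank test, and (iii) from the lexicographic order in which the queue is filled and drained. Your write-up is more explicit (loop invariants, the matroid-greedy justification for (iii), the queue-ordering bookkeeping), and the only superfluous step is the detour through a ``global'' Condition~(iii): in the paper the canonical subsets are defined inductively layer by layer, so Condition~(iii) for $S_{t+1}$ is already stated relative to the fixed $S_t$, and the per-layer greedy argument suffices.
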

\begin{proof}
        We proceed by induction on $t$. It is observed that in the for-loop of the algorithm ranging over $t$,
    we only add to the list $S_t$ and to the set $\mathcal{B}_t$, working only with the elements from the queue $Q_t$, 
    and adding new elements to the queue $Q_\tp$ based on the words we added to $S_t$.
    For the base case $t=0$, 
    the queue $Q_0$ is initialized with the empty word $\emptyword$,
    for which $\mW_0[-, \emptyword]$ is the all-one vector $\One$, and as the base $\mathcal{B}_0$ is empty and its rank $0$.
    Thus, we have $\mathcal{B}_0 = \{\One\}$ and $S_0 = \{\emptyword\}$.
    
    For the induction step, we assume that the algorithm computes $S_t$ and $\mathcal{B}_t$ correctly.
        In the beginning of the $(t+1)$-th iteration,
    the queue $Q_\tp$ contains the words of length $t+1$,
    of the form $\vw a$ for all $a\In\Sigma$ and $\vw\In S_t$, thus satisfying the condition (i).
    If we add $\gamma = \mW_\tp[-, \vw a]$ to $\mathcal{B}_\tp$, and $\vw$ to $S_\tp$,
    then we have $\operatorname{rank}(\mathcal{B}_\tp\cup \{\gamma\}) > |\mathcal{B}_\tp|$, thus words in $S_\tp$ are satisfying condition (ii).
    The last condition (iii) follows from the fact that always adding to $Q_\tp$ possible candidates
    by the foreach-loop over $\Sigma$ in a lexicographical order, and we keep the order while processing the queue.
\end{proof}

\begin{lemma}\label{lem:layeredcwscomplexity}
    There is an implementation of Algorithm~\ref{alg:layeredcws}  that for a given $T\In \Nat$, 
    computes the canonical word subsets $S_1, S_2, \dots, S_T$,
    in time $\mathcal{O}(Tn^3|\Sigma|)$, where $n$ is the number of vertices in $G$.
\end{lemma}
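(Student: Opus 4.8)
The plan is to analyze the running time of Algorithm~\ref{alg:layeredcws} line by line, tracking the total work done across all $T+1$ layers. First I would observe that the algorithm pushes onto a queue $Q_{\tp}$ only the successor words $\vecw a$ for $\vecw$ that were actually \emph{added} to $S_t$, and by Condition (ii) we have $|S_t| \le \rank(W_t) \le n$. Hence the queue $Q_{\tp}$ ever contains at most $|S_t|\cdot|\Sigma| \le n|\Sigma|$ words, and across all layers the total number of words that are ever popped and examined is $\mathcal{O}(Tn|\Sigma|)$. Each such popped word $\vecw a$ triggers (a) the computation of the column $\gamma = W_{\tp}(G,\chi)[-,\vecw a]$, and (b) an incremental rank test against the current basis $\mathcal{B}_{\tp}$. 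The whole cost is therefore $\mathcal{O}(Tn|\Sigma|)$ times the per-word cost of these two operations, so the crux is to show that each of (a) and (b) can be done in $\mathcal{O}(n^2)$ time.

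For step (b), I would maintain for each layer $t$ not the raw set $\mathcal{B}_t$ but an incrementally-built orthogonalized (or row-echelon / QR-type) representation of its span: when a new candidate column $\gamma\in\Real^V$ arrives, reduce it against the at most $n$ stored basis vectors in $\mathcal{O}(n^2)$ arithmetic operations; if the residual is nonzero, $\gamma$ increases the rank and we append the normalized residual to the stored data, again $\mathcal{O}(n^2)$; otherwise we discard it. This is the standard trick that makes the minimization of weighted automata run in $\mathcal{O}(n^3|\Sigma|)$ per the cited notes (\Cite{Keifer2020notes}, Theorem~3.6), adapted here layer-by-layer. For step (a), the key point is that we never materialize a full matrix $W_{\tp}$; instead, using the prefix-successor structure, the column for $\vecw a$ is obtained from the column for $\vecw$ (which was stored when $\vecw$ was added to $S_t$) by one application of the per-symbol transition $\gamma_{\vecw a} = P_a A\, \gamma_{\vecw}$ in the automaton $\Ac(G,\chi)$ of \autoref{appendix:automata}; since $P_a A$ is an $n\times n$ matrix with $\mathcal{O}(\text{number of edges}) \le \mathcal{O}(n^2)$ nonzeros, this matrix-vector product costs $\mathcal{O}(n^2)$. (One also caches, for each $\vecw\in S_t$, this stored column so that its $|\Sigma|$ children can each be formed in $\mathcal{O}(n^2)$.)

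Putting the pieces together: the number of popped words is $\mathcal{O}(Tn|\Sigma|)$, and each contributes $\mathcal{O}(n^2)$ for forming its column plus $\mathcal{O}(n^2)$ for the incremental rank update, for a total of $\mathcal{O}(Tn^3|\Sigma|)$; the one-time initialization of the $T+1$ queues and bases is $\mathcal{O}(T)$ and is absorbed. I would also remark that correctness of this implementation is exactly \autoref{lem:layeredcws} — the optimization only changes \emph{how} $W_{\tp}[-,\vecw a]$ and the rank test are computed, not \emph{which} words end up in each $S_t$ — so no separate correctness argument is needed here.

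The main obstacle I expect is making the incremental rank maintenance rigorous and genuinely $\mathcal{O}(n^2)$ per update \emph{including} the bookkeeping across all $T$ separate layers simultaneously: one has to be careful that storing an orthogonal complement basis for each of the $T+1$ layers costs $\mathcal{O}(Tn^2)$ total space and that the reduction of an incoming column touches only the $\le n$ vectors of \emph{its own} layer, so that no hidden factor of $T$ or $|\Sigma|$ sneaks into the per-word cost. A secondary subtlety is justifying the $\mathcal{O}(n^2)$ bound for the matrix-vector product $P_a A\gamma$ uniformly — this is immediate since $A$ has at most $n^2$ entries, but it is worth stating so the $\mathcal{O}(Tn^3|\Sigma|)$ bound is transparent even for dense graphs.
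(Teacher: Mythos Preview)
Your proposal is correct and follows essentially the same approach as the paper's proof: bound each queue $Q_{\tp}$ by $|S_t|\cdot|\Sigma|\le n|\Sigma|$, compute each column $W_{\tp}[-,\vecw a]$ from its parent column via one $\mathcal{O}(n^2)$ matrix--vector product, and maintain a row-echelon (or orthogonalized) basis so that each rank test is $\mathcal{O}(n^2)$, citing \Cite{Keifer2020notes} for this trick. The paper is slightly terser and notes the small distinction that the first layer uses $P_a$ rather than $P_aA$, but otherwise the arguments coincide.
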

\begin{proof}
    We focus on the cost of an iteration of the for-loop over $t$, 
    the base case $t=0$ is trivially $\mathcal{O}(|\Sigma| + n)$.
    The size of the queue $Q_\tp$ is exactly $|S_t||\Sigma|\le n|\Sigma|$.
    For every word $\vw a$ in $Q_\tp$, we compute the corresponding vector of walk incidence matrix $\mW_t[-, \vw a]\In \Real^{V(G)}$,
    from the vector $\mW_t[-, \vw]$ in for the word $\vw$ in $S_t$ as shown in the proof \cref{lem:equivalencewalkrefinementinvariant},
    by multiplying by matrix $\mP_a$ if $t=1$, and by $\mP_a \mA$ if $t\ge 2$.
    The expensive part of the algorithm is the computation of the rank of $\mathcal{B}_\tp\cup \{\gamma\}$,
    this can be done in $\mathcal{O}(n^2)$ time,
    if we maintain the representation of the linearly independent vectors of $\mathcal{B}_t$ in a row echelon form.
    Thus, for a limit $T\In \Nat$, we have $T$ iterations of the for-loop over $t$, 
    which has the complexity of $\mathcal{O}(|S_t||\Sigma| + |S_t|n^2 + |S_\tp||\Sigma|) = \mathcal{O}(n^3|\Sigma|)$.
    And finally, the total complexity of the algorithm is $\mathcal{O}(Tn^3|\Sigma|)$.
\end{proof}

\restate{thm:algorithm}{
Let $G$ be a graph on $n$ vertices, $\chi$ a $\Sigma$-coloring on $G$, and a limit $T\In \Nat$
then the canonical word subsets $S_t(G, \chi)$, (satisfying (i), (ii), and (iii)),
for all $t\le T$, can be computed in time $\mathcal{O}(Tn^3|\Sigma|)$.
}\begin{proof}
    We use the Algorithm~\ref{alg:layeredcws}, which is correct by \cref{lem:layeredcws} and satifying the time complexity by \cref{lem:layeredcwscomplexity}.
\end{proof}

In the following part,
we prove two statements from the end of \autoref{sec:efficientaggregation}.
For $t\ge 0$, we recall that the columns of matrix $\mB_t=\mB_t(G,\chi)$ span the same space as the columns of the walk incidence matrix $\mW_t(G, \chi)$.
The columns of $\mB_t$ are indexed by the words in $S_t(G, \chi)$.
Let us denote by $\vz_{t} = 00\cdots 0$ a constant word of length $t$.

\restate{obs:specialwalkpartition}{
    Let $\chitriv$ be the $\{0\}$-coloring on a graph $G$ with at least one edge.
    Then for every $t\ge 0$: (a) it holds that $|S_t| = 1$;
    (b) the only entries, $\mC_t^\mI[\vz_{t}, \vz_{{t+1}}] =\frac{\vone^\top \mA^{2t+1} \vone}{\vone^\top \mA^{2t} \vone}$, and 
    $\mC_t^\mA[\vz_{t}, \vz_{{t+1}}] =\frac{\vone^\top \mA^{2t+2} \vone}{\vone^\top \mA^{2t} \vone}$.
}
\begin{proof}
    The first part (a) follows from the fact that the only word in $S_t(G, \chitriv)$ is $\vz_{t}$,
    and thus every queue $Q_\tp$ of the Algorithm~\ref{alg:layeredcws} contains exactly the one word $\vz_{t+1}$.
    Note that the only partition matrix $\mP_a = I$.
    
    For the second part (b), we follow the definition of the matrices $\mC^\mI_t$ and $\mC^\mA_t  \In \Real^{\{\vz_t\}\times \{\vz_\tp\}}$:
    \begin{align*}
        \mC^\mM_t[\vz_t, \vz_\tp] &= 
        \One \mC^\mM_t\OneT =
        \mB_t^\pin \mM \mB_\tp =\\
        &=((\mP_a\mA\mP_a)^t \One)^\pin \mM ((\mP_a\mA\mP_a)^{t+1} \One) =\\
        &= (\mA^t \One)^\pin \mM (\mA^{t+1} \One) =\\
        &= ((\mA^{t}\One)^\top \mA^t \One)^{-1} \cdot ( \mA^t \One)^\top \mM \mA^t \One =\\
        &= (\One^\top \mA^{2t} \One)^{-1} \cdot \One^\top \mA^t \mM \mA^\tp \One.
    \end{align*}
        By setting $\mM\coloneqq \mI$ we obtain the result for the $\mC^\mI_t$,
    and by $\mM\coloneqq \mA$ for the $\mC^\mA_t$.
\end{proof}

\restate{obs:specialmessagepassing}{
    Let $\chiid$ be the $V$-coloring on a graph $G=(V, E)$ with $n$ vertices.
    By $\vv_{t, u}$, we denote the (unique) word in $S_{t}$ with the last color $u \In V$.
    Then for every $t\ge 1$: (a) it holds that $|S_0|=1$, and $|S_t| = |V|$;
    (b) for entries $\mC_0^\mI[\lambda, u] = \frac{1}{n}$, and 
    $\mC_t^\mI[\vv_{t,u}, \vv_{t+1,v}] = \mI[u, v]$
    and $\mC_t^\mA[\vv_{t,u}, \vv_{t+1,v}] = \mA[u, v]$.
}
\begin{proof}
    We denote the vertices $V$ by $\{u_1, u_2, \dots, u_n\}$, which here coincides with the set of colors.
    To prove (a), we proceed by induction on $t$.
    For the base case $t=0$, we have trivially $|S_0|=1$, as $S_0 = \{\emptyword\}$.
    For the case $t=1$, we have $S_1 = V$, since $\mP_{u_i} = e_{u_i}e_{u_i}^\top$ for $u_i\In V$,
    and furthermore, $u_i$-th column of $\mB_1$ is $\mP_{u_i} \One = e_{u_i}$, and thus we have $\mB_1 = \mI$.
    For, the induction step, we assume that $S_t = \{ \vv_{t, u} \mid  u \In V\}$.
    Since $\mW_\tp[-, \vw u] = \mP_u \mA \mW_t[-, \vw]$, 
    there is for each $u\In V$ a unique word $\vw u$ in $S_\tp$,
    and the base $\mathcal{B}_\tp$ is also canonical, that it $\mB_\tp = \mI$.
    Thus, we have $|S_\tp| = |V|$.

    For the second part (b), we have $\mC_0^\mI = \One^\pin \mI \mB_1 = (\OneT \One)^{-1} \OneT \mI \mI = \frac{1}{n}\One^\top$.
    Next, for $t\ge 1$ and $\mM\In \{\mA, \mI\}$, we get $\mC^\mM_t = \mB_t^\pin \mM \mB_\tp = \mI^\pin \mM \mI = \mM$.
\end{proof}

\subsection{Constraints on Walk-Incidence Matrices}\label{sec:constraints}

In the main text, we have noted that the matrices $\mW_t(G, \chi)$ are not completely arbitrary.
We now show in detail, how the structure of the matrices $\mW_t(G, \chi)$ is influenced by the weighted finite automata $\Ac(G, \chi)$.
Suppose we have two graphs $G$ and $H$, and that 
the automata $\Ac(G, \chi)$ and $\Ac(H, \chi)$ are equivalent, $\sem{\Ac(G, \chi)} = \sem{\Ac(H, \chi)}$.
Then there is a minimal weighed automata $\Ac_1$ common for $\Ac(G, \chi)$ and $\Ac(H, \chi)$.
By \autoref{eq:fmatequivalence} and \autoref{eq:qmatequivalence}, there are matrices $\Fg\In\Real^{V(G)\times Q_1}$ 
and $\Fh\In\Real^{V(H)\times Q_1}$ mapping automata induced by $G$ and $H$ to the common minimal automata $\Ac_1$.
Here, we denote the matrix between the two automata $\Ac(G, \chi)$ and $\Ac(H, \chi)$ by the following:
\begin{align*}
    \mF = (\Fg) (\Fh)^\top \In \Real^{V(G)\times V(H)}.
\end{align*}
Note that
$\mF\One = \Fg (\Fh)^\top\One = \Fg \One = \One$, and similarly $\OneT \mF = \OneT$.

\begin{lemma}\label{lem:fmatwalk}
    Using the notation above, we have for every $t\ge 0$:
    \begin{align*}
        \mW_t(G, \chi) = \mF\mW_t(H, \chi).
    \end{align*}
\end{lemma}
\begin{proof}
    As the $\Sigma$-coloring $\chi$ is fixed, we write $\mW_t^G = \mW_t(G, \chi)$ and $\mW_t^H = \mW_t(H, \chi)$.
    In addition, we denote the matrices of the first graph $\mA^G$, $\mP_a^G$, and similarly for the second graph $\mA^H$, $\mP_a^H$,
    given that $a\In\Sigma$.
    We proceed by induction on $t$, proving the following statement for each word $\vw\In\Sigma^t$:
    $\mW_t^G [-, \vw] = \mF \mW_t^H[-, \vw]$.

For $t=0$, from \cref{lem:equivalencewalkrefinementinvariant} it follows that 
\begin{align*}
    \mW_0^G[-, \lambda] = \One = \mF \One  = \mF \mW_0^H[-, \lambda].
\end{align*}
Moreover, using \autoref{eq:fmatequivalence}, for $t=1$, we get for any $a \In \Sigma$
\begin{align*}
    \mW_1^G[-, a] = \mP_a^G \One = \mP_a^G \mF \One = \mF \mP_a^H \One = \mF\mW_1^H[-, a].
\end{align*}
For the induction step, we assume that the lemma holds for all words of length $t$.
We take a word $\vw a$ of length $t+1$, which denotes the last color of $\vw$ by $b$.
It follows:
\begin{align*}
    \mW_\tp^G[-, \vw a] &= \mP_a^G \mA^G \mP_b^G \mW_t^G[-, \vw] =\\
    &= \mP_a^G \mA^G \mP_b^G F \mW_t^H[-, \vw] =\\
    &= \mF \mP_a^H \mA^H \mP_b^H \mW_t^H[-, \vw] =\\
    &= \mF\mW_\tp^H[-, \vw a],
\end{align*}
which finishes the proof.
\end{proof}

\paragraph{Characterization of Efficient Matrices.}
We recall that
for graph $G$ and a coloring $\chi$,
the \emph{graph invariant} of efficient matrices $\invEA(G, \chi)$ is defined as the set of pairs of matrices
\begin{align*}
    \invEA(G, \chi) = \left\{\left(\mC_t^\mA\!,\, \mC_t^\mI\right) \mid 0 \le t < n\right\}\!,
\end{align*}
to state the following lemma:

\begin{lemma}\label{lem:giLeA}
    For any coloring $\chi$ we have
     $\invEA(-, \chi) \Ele \sem{\Ac(-, \chi)}$.
\end{lemma}
\begin{proof}
    Suppose we have two graphs $G$ and $H$ that are not distinguished by the semantics of their induced automata.
    Then there is a matrix a $\mF\In\Real^{V(G)\times V(H)}$ adjoining these two automata.
    Given a $\Sigma$-coloring $\chi$, and two graphs $G$, $H$ we have from the \cref{lem:fmatwalk}
    that $\mB_t^G = \mF\mB_t^H$ for all $t \ge 0$.
    For $\mM\In\{\mA, \mI\}$, we have
    \begin{align*}
        \mC_t^\mM(G, \chi) &= (\mB_t^G)^\pin \mM^G \mB_\tp^G 
          = (\mB_t^G)^\pin \mM^G \mF\mB_\tp^H \\
        &= (\mB_t^G)^\pin \mF \mM^H \mB_\tp^H 
          = (\mB_t^G)^\pin \mF  \left(\mB_t^H(\mB_t^H)^\pin\right)  \mM^H \mB_\tp^H \\
        &= (\mB_t^G)^\pin \mF  \mB_t^H(\mB_t^H)^\pin  \mM^H \mB_\tp^H 
          = (\mB_t^G)^\pin \left(\mF  \mB_t^H\right)(\mB_t^H)^\pin  \mM^H \mB_\tp^H \\
        &= (\mB_t^G)^\pin \left(\mB_t^G\right)(\mB_t^H)^\pin  \mM^H \mB_\tp^H
            = (\mB_t^H)^\pin  \mM^H \mB_\tp^H. \\
    \end{align*}
    The final expression is equal to $\mC_t^\mM(H, \chi)$, which finishes the proof.
\end{proof}

In the previous lemma, we have shown that the efficient matrices are invariant under the equivalence of the automata.
To state the other direction, we first give a simpler lemma for $\chi=\chitriv$ to illustrate the structure of the more general lemma.
\begin{lemma}\label{lem:giLeAtriv}
    Let $\chitriv$ be the trivial coloring, then
     $\invEA(-, \chitriv) \Ege \sem{\Ac(-, \chitriv)}$.
\end{lemma}
\begin{proof}
    As shown in \autoref{obs:specialwalkpartition},
    the matrices $\mC_t^\mI(G, \chitriv)$ and $\mC_t^\mA(G, \chitriv)$ are of a single entry
    $(\OneT \mA^{2t}\One)^{-1} \OneT \mA^{2t+1} \One$ and $(\OneT \mA^{2t}\One)^{-1} \OneT \mA^{2t+2} \One$,
    respectively.
    We let the symbol $a=0$, so that $\chitriv$ is the $\{a\}$-coloring on $G$.
    Note that $|V(G)| = n = |\invEA|$.
    
    Next, the transition matrices of $\Ac(G, \chitriv)$ are 
    $\mM(a) = \mP_a = I$, and $\mM(a\vmin a) = \mP_a \mA \mP_a = \mA$,
    and generally for $k$-th repetition of $a$, $\mM(a\vmin \cdots \vmin a) = \mA^k$.
    Thus, the semantics of the automata $\Ac(G, \chitriv)$ is determined by
    the formal series given as $a^k \mapsto \OneT \mA^k \One$.

    For simplicity, we define the following three functions for all $m\In \Nat$:
    \begin{align*}
        f(m) &= \OneT A^m \One,\quad
        g_1(m) = \frac{\OneT \mA^{2m+1} \One}{\OneT \mA^{2m} \One}, \quad
        g_2(m) = \frac{\OneT \mA^{2m+2} \One}{\OneT \mA^{2m} \One}.
    \end{align*}
    In the language of such notation, it is sufficient to show all values of $f$ are determined by values of function $g_1, g_2$ and single value $n$. 
    Moreover, it is sufficient to show values of $f(k)$ for $k< n$, 
    since for values $k\ge n$, we can apply Cayley-Hamilton theorem and express $A^k$
    using the powers of $A$ up to $n-1$.
    
    We shall proceed by the following induction on $k$.
    For the base case $k=0$, we have $f(0) = n$.
    For the induction step, assume that the value $f(k')$ are determined for each $k' < k$,
    we distinguished two cases: (a) $k = 2l+1$ is odd, and (b) $k = 2l+2$ is even.
    
    For the odd case (a), we have $f(k) = f(2l+1) = f(2l)g_1(l)$,
    and for the even case (b), we have $f(k) = f(2l+2) = f(2l)g_2(l)$.
    Since $f(2l)$ is known from the induction hypothesis, 
    we can compute $f(k)$ from $f(2l)$ and $g_1(l)$ or $g_2(l)$.
\end{proof}

Here, before we state a more general variant of \cref{lem:giLeAtriv} for any coloring $\chi$,
we introduce the following notation.
Let $\chi$ be a $\Sigg$-coloring on a graph $G$, and $\invEA(G, \chi)$ the invariant of efficient matrices $\mC^\mA_t(G, \chi)$, $\mC^\mI_t(G, \chi)\In \RealSSp$ 
for $0\le t< n$.
Let $a, b\In \Sigma$, and $t \In\Nat$ such that $0 \le t < n$,
we define $\mM_t(a)\In \RealSSp$ by setting
\begin{align}
    \mM_t(a)[\vw_1 a, \vw_2 a] \coloneqq \mC^\mI_t(G, \chi)[\vw_1 a, \vw_2 a],\label{eq:matMta}
\end{align}
for all $\vw_1 a\In S_t$, and $\vw_2 a\In S_\tp$ and letting all other entries be zero.
Similarly, we define $\mM_t(a\vmin b)\In \RealSSp$ by setting
\begin{align}
    \mM_t(a\vmin b)[\vw_1 a, \vw_2 ab] \coloneqq \mC^\mA_t(G, \chi)[\vw_1 a, \vw_2 ab],\label{eq:matMtab}
\end{align}
for all  $\vw_1 a, \vw_2 ab\In S_\tp$ and letting all other entries be zero.

\begin{lemma}\label{lem:matMtaMtab}
    Let $G$ be a graph and $\chi$ a $\Sigma$-coloring on $G$.
    Then for all $a, b\In\Sigma$ and each $t \In\Nat$ it holds that 
    \begin{align*}
        \mM_t(a) = \mB_t^+ \mP_a \mB_\tp, \quad\text{ and }\quad \mM_t(a\vmin b) = \mB_t^+ \mP_b \mA \mP_a \mB_\tp,
    \end{align*}
    where $\mB_t = \mB_t(G, \chi)$.
\end{lemma}
\begin{proof}
We start with the proof of the first identity. Let us fix $t\In\Nat$, then 
\begin{align}
    \mC^I_t(G, \chi) &= \mB_t^\pin ({\textstyle \sum_a \mP_a }) \mB_\tp
    = \sum_{a\in\Sigma} \mB_t^\pin \mP_a \mB_\tp = \sum_{a\in\Sigma} \mM_t(a).\label{eq:sumMta}
\end{align}
On the other hand,
since $\mB_t[-, \vw_1 a]  = (\mP_a \mB_t)[-, \vw_1 a]$ for each $\vw_1 a\In S_t$, 
we get for each  $\vw_1 x\In S_t$, and each $\vw_2 y \In S_\tp$, that it holds
\begin{align*}
    ((\mB_t^\top \mB_t )^{-1}\mB_t^\top \mP_a \mB_\tp)[\vw_1 x, \vw_2 y] =
    ((\mB_t^\top \mB_t )^{-1}\mB_t^\top P^\top\mP_x \mP_a  \mP_y \mB_t)[\vw_1 x, \vw_2 y],
\end{align*}
from which it follows by \cref{prop:partition} being zero if $x\neq a$ or $y\neq a$.
In conjunction with \cref{eq:sumMta}, it follows that left-hand side and right-hand side coincide entry-wise and thus
 $\mM_t(a) = \mB_t^\pin \mP_a \mB_\tp$.
The latter identity, we show similarly by fixing $t\In\Nat$,
\begin{align}
    \mC^A_t(G, \chi) &= \mB_t^\pin ({\textstyle \sum_{a,b} \mP_b \mA \mP_a }) \mB_\tp\notag\\
    &= \sum_{a,b\in\Sigma} \mB_t^\pin \mP_x \mP_b \mA \mP_a \mP_y \mB_\tp
    = \sum_{a,b\in\Sigma} \mM_t(a\vmin b).\label{eq:sumMtab}
\end{align}
Similarly, we have for each $\vw_1 x\In S_t$, and each $\vw_2 y \In S_\tp$,
\begin{align*}
    (\mB_t^\pin \mP_a \mA \mP_b \mB_\tp)[\vw_1 x, \vw_2 y] =
    (\mB_t^\pin P^\top\mP_x \mP_a \mA \mP_b  \mP_y \mB_t)[\vw_1 x, \vw_2 y],
\end{align*}
which is zero if $x\neq a$ or $y\neq b$.
In conjunction with \cref{eq:sumMtab}, it follows that left-hand side and right-hand side coincide entry-wise and thus
 $\mM_t(a\vmin b) = \mB_t^\pin \mP_b \mA \mP_a \mB_\tp$.
\end{proof}

\begin{lemma}\label{lem:giLeA}
    For any coloring $\chi$ we have
     $\invEA(-, \chi) \Ege \sem{\Ac(-, \chi)}$.
\end{lemma}
\begin{proof}
    We first show that efficient matrices $\invEA = \invEA(G, \chi)$,
    for some graph $G$,
    encode a specific layered computation of minimal automata with the semantics of $\Ac(G, \chi)$.
    A direct consequence of  \cref{lem:matMtaMtab} is that 
    by selecting the entries of efficient matrices in $\invEA$ as in \cref{eq:matMta} and \cref{eq:matMtab}, we can obtain $\mM_t(a)$ and $\mM_t(a\vmin b)$ for all $a, b \In \Sigma$ and where $0 \le t < n$.

    From the size of $\invEA$, we get $n = |\invEA| = |V(G)|$.
    For a word $\vw = w_1 w_2 \cdots w_\tp \In\Sigma^\tp$, we define the expression
    $\gamma_\vw(\invEA)\In \Real^{\{\lambda\}\times S_\tp}$ as follows:
    \begin{align*}
        \gamma_\vw(\invEA) =  n \cdot \mM_0(w_1 \vmin w_2) \mM_1(w_2 \vmin w_3)\cdots \mM_{t}(w_{t} \vmin w_\tp).
    \end{align*}
    Next, we take $\mB_t = \mB_t(G, \chi)$
    and $\mM\colon \Sigma \to \RealVV$ the transition matrices of $\Ac(G, \chi)$,
    it follows from \cref{lem:matMtaMtab} that
    \begin{align*}
        \gamma_\vw(\invEA) &= 
        n \mB_0^\pin \mP_{w_1} \mA \mP_{w_2} \mB_1 
        \mB_1^\pin \mP_{w_2} \mA \mP_{w_3} \mB_2 
        \mB_2^\pin \cdots \mP_{w_{t}} \mA \mP_{w_\tp} 
        \mB_{t}\\
        &= n \One^\pin \mP_{w_1} \mA \mP_{w_2} \cdots \mP_{w_{t}} \mA \mP_{w_\tp}
        \mB_{t} \\
        &= \One^T \mM(\ovecw) \mB_t,
    \end{align*}
    where $\ovecw = w_1 \vmin w_2 \vmin \cdots \vmin w_\tp \iSigg^\tp$.
    
    We prove for every $t$, such that $0 \le t < n$, that weight of any word $\ovecw\iSigg^t$,
    $\sem{\Ac(G, \chi)}(\ovecw) = \One^T \mM(\ovecw)\One$
    can be expressed a linear combination of $\gamma$-vectors 
    that depend only on $\invEA(G, \chi)$.
    We shall use the following notation, for the word $\vw = w_1 w_2 \cdots w_m\In \Sigma^m$,
    we denote the word $w_1 \vmin w_2 \vmin \cdots \vmin w_m\iSigg^m$ by $\ovecw$,
    and the reverse words 
    $w_m w_{m_1} \cdots w_1\In\Sigma^m$ by $\vw^R$, and finally,
    $w_m \vmin w_{m-1} \vmin \cdots \vmin w_1\iSigg^m$ by $\ovecw^R$.

    For the base case $t=0$, we have $\ovecw = \lambda$,
    and thus $\One^T \mM(\lambda)\One = \One^T\One = n$,
    which is equal to $\gamma_\lambda(\invEA)\One = n\One^+\One = n$.

    In the induction step $t\ge 1$, 
    we distinguish two cases: 
     (a) $t = 2l$ is even, and 
    (b) $t = 2l+1$ is odd.
     For the even case (a), can write the given word as  $\ovecw = \ovecw_p  \vmin a \vmin \ovecw_s$,
    for the suitable $\vw_p$ and $a\vw_s\In\Sigma^{l}$.
    Since the columns of $\mW_l = \mW_l(G, \chi)$ are spanned by the columns of $\mB_{l}$,
    there is a vector $\vx\In \Real^{S_l}$ such that
    by \cref{lem:equivalencewalkrefinementinvariant} it holds that 
    \begin{align*}
        \mM(a\vmin \ovecw_s)\One = \mW_l[-, \vw_s^R a] = \mB_l \vx.
    \end{align*}
    Note that the vector $\vx$ is independent of the choice of the base of
    the transition matrices $\mM$, as 
    for any base-changing matrix $\mF\In \Real^{V(H)\times V(G)}$, 
    for some graph $H$ with $\invEA(H, \chi) = \invEA$,
    it holds $\mW_l(H, \chi) = \mF\mW_l = \mF\mB_l \vx = \mB_l(H, \chi)\vx$.
    Next, we obtain 
    \begin{align*}
        \One^T \mM(\ovecw)\One &= 
        \One^T \mM(\ovecw_p  \vmin a \cdot a \vmin \ovecw_s)\One\\
        &= \One^T \mM(\ovecw_p \vmin a)  \mM(a \vmin \ovecw_s)\One\\
        &=\One^T \mM(\ovecw_p \vmin a)  \mB_l \vx
        = \gamma_{{\ovecw_p \vmin a}}(\invEA)\vx.
    \end{align*}
    Similarly, for the odd case (b), we write $\ovecw = \ovecw_p  \vmin a\vmin  \ovecw_s$,
    for the suitable $\vw_p \In\Sigma^{l}$ and $a\vw_s\In\Sigma^{l+1}$.
    Analogically, we can find a vector $\vx\In \Real^{S_{l+1}}$ such that
    $\mM(a\vmin \ovecw_s)\One = \mB_{l+1} \vx$ to obtain
    \begin{align*}
        \One^T \mM(\ovecw)\One &=
        \One^T \mM(\ovecw_p  \vmin a \cdot a \cdot a \vmin \ovecw_s)\One\\
        &=\One^T \mM(\ovecw_p \vmin a) \mM(a) \mM(a \vmin \ovecw_s)\One\\
        &=\One^T \mM(\ovecw_p \vmin a)\mM(a)  \mB_{l+1} \vx\\
        &=\gamma_{{\ovecw_p \vmin a}}(\invEA)\mB_{l} \mB_{l}^\pin \mP_a \mB_{l+1}\vx
        = \gamma_{{\ovecw_p \vmin a}}(\invEA)\mM_l(a)\vx.
    \end{align*}
    Thus, we have shown that $\invEA = \invEA(G, \chi)$ determines the semantics of automata $\Ac(G, \chi)$, which is then identical for all graphs $H$ with $\invEA = \invEA(H, \chi)$.
\end{proof}

\restate{thm:walkInvariantEquivalence}{
    For every coloring $\chi$ it holds that
    $\invEA(-, \chi) \Eeq \Walkk(-, \chi).$
}

\begin{proof}
    The proof follows from \cref{lem:giLeA} and \cref{lem:giLeAtriv}.
\end{proof}

\section{Homomorphism Counts and Colored Walks}\label{appendix:homcounts}

\newcommand{\existWalks}{\ensuremath{W_{\!\exists}} }
\newcommand{\countWalks}{\ensuremath{W_{\!\#}} }
\newcommand{\Aspt}{\ensuremath{\mathcal{O}} }

This section contains the proofs for the statements of \autoref{sec:expressivity}.
We first show that homomorphism counts from caterpillars $\Cat_1$ 
into a given graph can be characterized by walk incidences with respect to the coloring $\chideg$.
Our initial argument employs elementary techniques that are later generalized using methods from quantum graphs~\citep{Lov+2009,Dvo+2010} and logic~ (cf. \citep{Imm+1990,Cai+1992,Gro+2017}).
We begin by proving two technical lemmas.

\begin{lemma}\label{lem:tuplefaithful}
    For every integer $t\ge 0$ and natural numbers
    $D_0, D_1, D_2, \dots, D_t$
    there are integral exponents $s_1, s_2$ $\dots$, $s_t$
    such that $D_0 \le 2^{s_1}$,
    and such that
    all $t$-tuples of natural numbers $(d_1, d_2, \dots, d_t)$
    that satisfy $1 \le d_i < D_i$ for $i \In [t]$,
    are injectively represented by the product
    $p = d_1^{s_1} \cdot d_2^{s_2} \cdots \cdot d_t^{s_t}$.
\end{lemma}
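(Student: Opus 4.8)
The plan is to build the exponents \emph{greedily from left to right}, making each new exponent $s_k$ so large that the single factor $d_k^{s_k}$ dominates the whole product $\prod_{i<k} d_i^{s_i}$ of the earlier factors. This is the multiplicative analogue of a super-increasing (knapsack-type) sequence: once two admissible tuples give the same product, the highest coordinate in which they differ can be recovered unambiguously, which yields the injectivity by a short induction on that coordinate.

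First I would dispose of the trivial cases. If $D_i \le 1$ for some $i\In[t]$ then the constraint $1 \le d_i < D_i$ is unsatisfiable, so there are no admissible tuples and injectivity holds vacuously; then any positive integers with $2^{s_1}\ge D_0$ work, e.g.\ $s_1$ least with $2^{s_1}\ge D_0$ and $s_2=\dots=s_t=1$. Similarly, whenever $D_k=2$ the coordinate $d_k$ is forced to be $1$, so it carries no information and we may set $s_k=1$; these indices play no role below. Hence I may assume $D_k\ge 3$ for every relevant index $k$, which makes $\tfrac{D_k-1}{D_k-2}$ a real number strictly greater than $1$. Then I define the exponents recursively: let $s_1$ be the least positive integer with $2^{s_1}\ge D_0$, and, having chosen $s_1,\dots,s_{k-1}$, let $s_k$ be the least positive integer with
\[
\left(\frac{D_k-1}{D_k-2}\right)^{s_k} \;>\; \prod_{i=1}^{k-1} D_i^{\,s_i},
\]
which exists since the base on the left exceeds $1$. (For $k=1$ the right-hand side is the empty product $1$, so the only real constraint on $s_1$ is $2^{s_1}\ge D_0$.)

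To verify injectivity I would assume two admissible tuples $(d_i)$ and $(d_i')$ with $\prod_i d_i^{s_i}=\prod_i (d_i')^{s_i}$ but $d_j\ne d_j'$ for some $j$, and let $k$ be the \emph{largest} such index. Cancelling the equal, positive factors with $i>k$ and dividing gives
\[
\left(\frac{d_k}{d_k'}\right)^{s_k} \;=\; \prod_{i=1}^{k-1}\left(\frac{d_i'}{d_i}\right)^{s_i} \;\le\; \prod_{i=1}^{k-1} D_i^{\,s_i},
\]
using $d_i' \le D_i-1 < D_i$ and $d_i\ge 1$. On the other hand $D_k\ge 3$ (otherwise $d_k=d_k'=1$), so WLOG $d_k>d_k'$ with $1\le d_k'\le D_k-2$ and $d_k\ge d_k'+1$, whence $\tfrac{d_k}{d_k'}\ge 1+\tfrac1{d_k'}\ge \tfrac{D_k-1}{D_k-2}$ and therefore
\[
\left(\frac{d_k}{d_k'}\right)^{s_k} \;\ge\; \left(\frac{D_k-1}{D_k-2}\right)^{s_k} \;>\; \prod_{i=1}^{k-1} D_i^{\,s_i},
\]
contradicting the previous display. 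Hence the product determines the tuple.

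I do not expect a genuine obstacle here; the only point needing care is the quantitative one, namely that the usable ``gap'' $\tfrac{D_k-1}{D_k-2}$ degrades towards $1$ as $D_k$ grows, so $s_k$ must be taken large relative to both the $D_i$'s and the previously fixed exponents — this is precisely what the recursive choice above arranges. The remaining subtlety is bookkeeping of the boundary ranges $D_i\in\{0,1,2\}$ so that the interval $[1,D_i)$ never produces a spurious collision, which is handled by the case split at the start.
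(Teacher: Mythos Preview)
Your argument is correct. Both your proof and the paper's rest on the same super-increasing idea---choose each $s_k$ large enough that position $k$ dominates the preceding ones---but you carry it out differently. The paper first takes $\log_2$ of the product, turning it into a weighted sum $\sum_i s_i\log_2 d_i$, and then treats the $\log_2 d_i$ as ``digits'' in a heterogeneous positional system, giving an explicit top-down decoding formula; its recursion is the additive $s_{i+1}>\sum_{j\le i}s_j\log_2(D_j-1)$. You stay in the multiplicative world, bound the worst-case ratio $d_k/d_k'\ge (D_k-1)/(D_k-2)$ directly, and prove injectivity by contradiction on the highest differing coordinate. Your route is a bit more elementary (no logarithms, no decoding bookkeeping) and handles the degenerate ranges $D_i\le 2$ explicitly; the price is that your exponents are larger, since $\log\!\big(\tfrac{D_k-1}{D_k-2}\big)\sim\tfrac{1}{D_k}$ forces $s_k$ to scale with $D_k$ times the paper's bound---but only existence is claimed, so this costs nothing.
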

\begin{proof}
    We can instead take logarithm of $p$, since $\log$ is an injective function,
    \begin{align*}\log_2 p = s_1 \log_2 d_1 + \cdots + s_t \log_2 d_t.\end{align*}
    The values of are called $\log_2 d_i$ in $\{\log_2 1, \log_2 2, \dots, \log_2 (D_i - 1)\}$ \emph{digits} for $i\In[t]$.
    We construct a positional numeral system with a heterogeneous basis (i.e., $s_1, s_2, \dots s_t$) special for each digit's position $i$.
    Since the intervals of digits at the $i$-th position are bounded by $\log_2 D_i$,
     we require to preserve the following inequality
    \begin{align*}s_{i+1} > \sum_{j=1}^i \log_2 (D_j- 1) \cdot s_j.\end{align*}
    To next meet the first condition $D_0\le 2^{s_1}$, we set $s_0 \coloneqq \lceil \log_2 D_0 \rceil$.
    Furthermore, we obtain $s_{i+1} \coloneqq 1 + \sum_{j=1}^i \lceil \log_2 (D_j-1) \rceil \cdot s_j$.
    Finally, we define $R_\tp = \log_2 p$ to
     inductively evaluate the following expression
    \begin{align*}
        \log_2 d_i = \frac{1}{s_i}\bigg(R_i - \sum_{j=0}^{i-1} s_j \log_2 d_j\bigg)
        \quad\text{ where }\quad
        R_l = \frac{R_{l+1}}{s_{l+1}} - \left\lfloor \frac{R_{l+1}}{s_{l+1}}\right\rfloor,
    \end{align*}
    for every $l\In[t]$, and every $i\In[t]$.
    Therefore, the values $s_i$ are sufficient.
\end{proof}

\begin{lemma}
    Let $\vx$ be a $m$-tuple in $\Nat_{\ge 1}$ with mutually distinct elements,
    that is $\vx[i] \neq\vx[j]$ whenever $i\neq j$, 
    and let $m$-tuples  $\va$, $\vb$ in $\Nat^m$ be such that 
    $\va \neq \vb$, that is $\va[i] \neq \vb[i]$ for existing $i\In[m]$.
    Then there is $k$ in $\Nat$ such that
    \begin{align}
        \sum_{i=1}^{m} \va[i] (\vx[i])^k \neq \sum_{i=1}^{m} \vb[i] (\vx[i])^k.
        \label{eq:distinct}
    \end{align}
    \label{lem:distinct}
\end{lemma}
\begin{proof}
    Denote tuple $\va$ by $(a_1, a_2, \dots, a_m)$, $\vb$ by $(b_1, b_2, \dots, b_m)$,
    and the tuple $\vx$ by $(x_1, x_2, \dots, x_m)$.
    For a contradiction, suppose that for all $k$ in $\Nat$ the equality in \eqref{eq:distinct} holds.
    Let us choose the index $j$ such that $|a_j-b_j| > 0$ and, importantly, such that $x_j$ is maximal (such $j$ exists only one since elements of $\vx$ are distinct).
    Let us define functions $f$ and $g$ as follows
    \begin{align}
        f(k) = \left|(a_j-b_j)x_j^k\right|, \quad g(k) = \Big|\sum_{i\neq j} (b_i - a_i) x_i^k\Big|.
        \label{eq:distinctfunc}
    \end{align}
    As $k\to \infty$, the function $f$ grows faster than the function $g$.
    Therefore, there exists $k$ such that $f(k) > g(k)$, that is
    $(a_j-b_j)x_j^k > \sum_{i\neq j} (b_i - a_i) x_i^k$,
    which gives us the contradiction.
\end{proof}

In \cref{prop:homcountt}, we prove the formula for counting homomorphisms from the class~${\Cat_1}$ that uses the leg sequence (see \autoref{sec:expressivity} and \autoref{fig:caterpillars}) of the caterpillar graph. Next, a \emph{start graph} is a rooted tree of height at most 1.

Given a $(1,t)$-caterpillar $F\In \CaT 1 t$,
we associate its leg sequence $(S_1, S_2, \dots, S_t)$ of star graphs $S_i$, 
with the tuple $\vs_F = (|S_1|-1, |S_2|-1, \dots, |S_t|-1)\In \Nat^{t}$,
so that $|S_i|-1$ is the number of leaves of the $i$-th leg of $F$.
In terms of folklore caterpillars, the $i$-th entry of $\vs_F$
is exactly the number of one-edge legs attached to the $i$-th vertex of the spine path.
Importantly, every $(1,t)$-caterpillar graph $F$ is fully described and determined by ${\vs}_F$.
Furthermore, we recall that $\Walk{t}(G, \chideg)$ is the multiset of all colored walks of length $t$ in $G$.

\begin{proposition}\label{prop:homcountt}
    Let $t \In \Nat$, then for every graph~$G$ it holds
    \begin{align}
        \hom(\CaT 1 t,G)[F] = 
        \sum_{\substack{
            \vw\In \Nat^t
        }} 
        \Walk{t}(\vw)
        \cdot 
        \prod_{i=1}^{t} \vw[i]^{\vs_F[i]},
        \label{eq:homcountt}
    \end{align}
    for each $F\In {\CaT 1 t}$ with $\vs_F\In \Nat^{t}$,
    where $\Walk{t}(\vw) = \Walk{t}(G, \chideg)(\vw)$ for $\vw \In \Nat^t$.
\end{proposition}
\begin{proof}
Given a colored walk $\vw\In \Nat^t$, consider its occurrence $\vu=(u_1, u_2, \dots, u_t)$ in~$G$.
We denote the vertices of the spine of $F$ that correspond to $\vs_F$ by $l_1, l_2, \dots, l_t$.
Let $k_\vu$ be the number of graph homomorphisms $\varphi\colon F\to G$,
mapping the spine of $F$ to the occurrence $\vu$, that is, such that $\varphi(l_i) = u_i$ for each $i\In[t]$.

Therefore, any two distinct homomorphisms contribute to $k_\vu$
differ precisely by their mapping of vertices outside the spine,
namely, the leaves of the stair graphs $S_i$ for $i\In [t]$.

Specifically, each leaf of $S_i$ can be mapped to any vertex in its neighborhood $N(u_i)$, 
hence, we have exactly $\vw[i]=\deg_G(u_i)$ choices.
Because $S_i$ contains $\vs_F[i]$ such leaves, 
there are $\vw[i]^{\vs_F[i]}$ choices to map the legs of $\mP_i$ independently of the other $\mP_j$ ($j\neq i$).
Therefore, we have $k_\vu = \prod_{i=1}^{t} \vw[i]^{\vs_F[i]}$.

Finally, we sum over all independent $k_\vu$ as follows:
\begin{align*}
    \sum_{\substack{
        \vu\In V(G)^t\\
    }} k_\vu
    = 
    \sum_{\substack{
                \vu \text{ occurrence } \\
    \text{ of } \vw \text{ in } G
    }}
    \prod_{i=1}^{t} \vw[i]^{\vs_F[i]}
    = 
    \sum_{\substack{
        \vw\In \Nat^t
    }} 
    \Walk{t}(\vw)
    \cdot 
    \prod_{i=1}^{t} \vw[i]^{\vs_F[i]},
\end{align*}
where in the second step, we distinguish  occurrences $\vu$ by their color $\vw$,
next, these occurrences are quantified by $\Walk{t}(\vw)$.
Thus, we obtained an expression equal to the left-hand side of \autoref{eq:homcountt}.
\end{proof}

\begin{theorem}\label{thm:hom1t}
    For every $t\In\Nat$, it holds that $\Walk{t}(-, \chideg)\Eeq \hom({\CaT 1 t}, -) $.
\end{theorem}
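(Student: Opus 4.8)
The plan is to derive both directions, $\Walk{t}(-,\chideg)\Ege\hom(\CaT 1 t,-)$ and $\hom(\CaT 1 t,-)\Ege\Walk{t}(-,\chideg)$, from a single combinatorial identity expressing a caterpillar homomorphism count as a degree-weighted walk count. Fix $F\in\CaT 1 t$ with a chosen spine $(s_1,\dots,s_t)$. By the definition of $\CaT 1 t$, after deleting the spine edges each leg $F_i$ lies within distance $1$ of $s_i$, so (as $F$ is a tree) $F_i$ is a star $S_{k_i}$ with center $s_i$ and $k_i$ pendant leaves. A homomorphism $\varphi\colon V(F)\to V(G)$ is then given by two independently choosable pieces of data: the tuple $(u_1,\dots,u_t)$ with $u_i=\varphi(s_i)$, which is a walk in $G$ since consecutive spine vertices are adjacent; and, for each $i$, the image of each of the $k_i$ leaves at $s_i$, which may be any neighbor of $u_i$. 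Counting these choices and grouping the walks by their $\chideg$-color word gives
\[
  \hom(F,G)\;=\!\!\sum_{(u_1,\dots,u_t)\ \text{a walk in}\ G}\ \prod_{i=1}^{t}\chideg(G,u_i)^{\,k_i}
  \;=\;\sum_{\vecw=w_1\cdots w_t}\ \Walk{t}(G,\chideg)(\vecw)\cdot\prod_{i=1}^{t}w_i^{\,k_i}.
\]

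Given this identity, $\Walk{t}(-,\chideg)\Ege\hom(\CaT 1 t,-)$ is immediate: the right-hand side is an $\Real$-linear functional of the multiplicity vector $\bigl(\Walk{t}(G,\chideg)(\vecw)\bigr)_{\vecw}$ whose coefficients $\prod_i w_i^{k_i}$ depend only on $F$, so $\Walk{t}(G,\chideg)=\Walk{t}(H,\chideg)$ implies $\hom(F,G)=\hom(F,H)$ for every $F\in\CaT 1 t$.

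For the converse $\hom(\CaT 1 t,-)\Ege\Walk{t}(-,\chideg)$ I would recover the multiset $\Walk{t}(G,\chideg)$, i.e.\ the vector $c^{G}_{\vecw}:=\Walk{t}(G,\chideg)(\vecw)$, from the numbers $\hom(F,G)$. Taking $F_{\vec k}$ to be the caterpillar with spine length $t$ and leg sizes $\vec k=(k_1,\dots,k_t)\in\Nat^{t}$, the identity reads $\hom(F_{\vec k},G)=\sum_{\vecw}c^{G}_{\vecw}\prod_i w_i^{k_i}$. Since $G$ is finite only finitely many color words occur, so $d_{\vecw}:=c^{G}_{\vecw}-c^{H}_{\vecw}$ has finite support, and the hypothesis $\hom(\CaT 1 t,G)=\hom(\CaT 1 t,H)$ yields $\sum_{\vecw}d_{\vecw}\prod_{i=1}^{t}w_i^{k_i}=0$ for all $\vec k\in\Nat^{t}$. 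I would finish by a tensorized Vandermonde argument: in a single coordinate, a finitely supported sequence $(e_w)$ with $\sum_w e_w w^{k}=0$ for every $k\ge 0$ must vanish (a transposed Vandermonde system on the distinct values $w$, or equivalently the identity $\sum_{k\ge 0}\bigl(\sum_w e_w w^{k}\bigr)x^{k}=\sum_w e_w/(1-wx)$); applying this to the first coordinate with the remaining coordinates frozen and inducting on $t$ forces $d_{\vecw}=0$ for every $\vecw$, i.e.\ $\Walk{t}(G,\chideg)=\Walk{t}(H,\chideg)$.

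The decomposition ``homomorphism $=$ spine walk $\times$ free leaf choices'' is routine; the step I expect to require the most care is this last inversion, which is an infinite linear system indexed by $\Nat^{t}$ and closes only because the vectors involved are finitely supported and the coefficient array $\bigl(\prod_i w_i^{k_i}\bigr)_{\vec k,\vecw}$ is the $t$-fold tensor power of a generalized Vandermonde matrix, nonsingular on finite index sets. A secondary bookkeeping point is the treatment of caterpillars realized with a spine shorter than $t$, which I would handle by running the same identity at the realized spine length.
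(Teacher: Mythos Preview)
Your proposal is correct. The combinatorial identity you state is exactly the paper's Proposition~B.3, and your first direction coincides with the paper's.

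For the converse direction you and the paper diverge. The paper builds a \emph{single} separating caterpillar: it first proves a number-theoretic lemma (Lemma~B.1) producing exponents $s_1,\dots,s_t$ so that the map $(w_1,\dots,w_t)\mapsto\prod_i w_i^{s_i}$ is injective on the relevant bounded box, then a growth lemma (Lemma~B.2) showing that for distinct $x_j$ and $\vec a\neq\vec b$ some power $k$ gives $\sum_j a_j x_j^k\neq\sum_j b_j x_j^k$; the witness is the caterpillar with leg vector $(ks_1,\dots,ks_t)$. Your route instead treats the whole family $\{F_{\vec k}:\vec k\in\Nat^t\}$ at once and invokes the multivariate moment/Vandermonde argument by induction on $t$. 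Your argument is shorter and more standard; the paper's is more constructive (it actually names a distinguishing $F$) and is set up so that the same two lemmas can be reused verbatim in the generalization to $\CaT h t$ (Theorem~4.4), where the ``degree'' values are replaced by homomorphism counts of labeled quantum trees. Your bookkeeping worry about shorter spines is harmless: the caterpillars $F_{\vec k}$ with spine of exactly $t$ vertices already lie in $\CaT 1 t$, and that is all either argument needs.
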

\begin{proof}
We first show $\Walk{t}(-, \chideg)\Ege \hom({\CaT 1 t}, -)$.
Suppose that $\Walk{t}(G, \chideg) = \Walk{t}(H,\chideg)$ for two given graphs $G$ and $H$.
Then by  \autoref{eq:homcountt} of \cref{prop:homcountt} we have that $\hom(\CaT 1 t, G)[F] = \hom(\CaT 1 t, H)[F]$ for every its graph entry $F\In \CaT 1 t$.

For the other direction, suppose that $\Walk{t}(G, \chideg) \neq \Walk{t}(H,\chideg)$ for two given $G$ and $H$.
For clarity, we use the shortcut $m_{\vw}^X = \Walk{t}(X,\chideg)(\vw)$ for graph $X\In\{G, H\}$.
Using this notation, there exists $\vw'\In \Nat^t$ such that 
$m_{\vw'}^G \neq m_{\vw'}^H$.

Let $n$ bound the number of vertices of $G$ and $H$, then the maximum degree of both graphs is at most $n-1$.
That means there is at most $(n-1)^{t}$ plain walks of length $t$ in each graph, implying both
\begin{align*}
        m_\vw^G \le (n-1)^{t},
    \quad \text{and}\quad
    m_\vw^H \le (n-1)^{t}.
\end{align*}

Here, we use \cref{lem:tuplefaithful}, applied on $t$-tuples $\vw\In [n-1]^t$.
We choose $D_0 = n^{t}$ and bound entries by $D_1 = D_2 = \dots = D_t = n$,
to get an injective representation of each tuple.
As a result, we obtain coefficients $(s_1, s_2, \dots, s_t)$
such that the function
\begin{align*}
    \left(
        m_\vw^X, \vw[1], \vw[2], \dots, \vw[t]
    \right)
    \,\mapsto\,
    m_\vw^X
    \cdot \vw[1]^{s_1}\cdot \vw[2]^{s_2} \cdots \vw[t]^{s_t},
\end{align*}
is injective for both $G$ and $H$ taken as $X$.
Furthermore, we apply \cref{lem:distinct} by setting 
vectors $\vx, \va, \vb\In [n-1]^t$ as follows:
\begin{align*}
    \vx[i] &= \vw_i[1]^{s_1}\cdot \vw_i[2]^{s_2} \cdots \vw_i[t]^{s_t},  \quad\quad
    \va[i] = m^G_{\vw_i},\quad\quad
    \vb[i] = m^H_{\vw_i},
\end{align*}
where $i$ is the index enumerating each $\vw_i\In [n-1]^t$.

Since we assumed $m_{\vw'}^G \neq m_{\vw'}^H$, it holds $\va \neq \vb$
and therefore by \cref{lem:distinct} we can find a finite $k$ such that
\begin{align}\label{eq:distinctcat}
    \sum_{\substack{
        \vw \in [n-1]^t
    }} 
    m^G_\vw \cdot 
    \left(\prod_{i=1}^{t} \vw[i]^{s_i}\right)^k 
    \neq
    \sum_{\substack{
        \vw \in [n-1]^t
    }} m^H_\vw \cdot 
    \left(\prod_{i=1}^{t} \vw[i]^{s_i}\right)^k.
\end{align}
Finally, there is a caterpillar $F'$ in ${\CaT 1 t}$ determined by $\vs_{F'} = (s_1k, s_2k, \dots, s_tk)$,
and since both sides of \cref{eq:distinctcat} can be rewritten by applying exponent $k$ as $\hom(\Cat_{1,t},G)[F'] \neq \hom(\Cat_{1,t},H)[F']$ by \cref{prop:homcountt},
we obtain $\hom(\Cat_{1,t},G) \neq \hom(\Cat_{1,t}, H)$.
\end{proof}

\subsection{Quantum Graph Homomorphisms}
In this subsection, we extend the proof of \autoref{thm:hom1t} to the case of generalized caterpillars $\CaT h t$.
Similarly to the previous case in \cref{prop:homcountt},
where we counted possible mappings of each leg of the folklore caterpillar,
we can adapt a more general approach by replacing star graphs with $1$-labeled graphs.
Finally, we replace the counts of all possible mappings of star graphs
by the homomorphism counts of linear combinations of $1$-labeled graphs, called quantum $1$-labeled graphs.

\paragraph{Labeled graphs.} We follow the algebraic approach to quantum graphs by \citet{Lov+2009}, instantiating for the $1$-labeled case. 
A~\emph{$1$-labeled graph}, or simply \emph{labeled graph},
$G^\bullet$ is a graph $G$ with one distinguished vertex
$u\In V(G)$, called a \emph{label}, denoted by $\lab(\Gbl)$.

Let $\Fcbl$ be a class of labeled graphs,
$\Fbl\In\Fcbl$ labeled,
and $\Gbl$ another labeled graph
then we define vector $\hom(\Fcbl,\Gbl)\In \Nat^\Fcbl$ entry-wise: each its entry $\hom(\Fcbl,\Gbl)[\Fbl]$ 
is the number of graph homomorphisms $\varphi\colon V(G)\to V(H)$ 
that, moreover, preserve the label, i.e.,
\begin{align*}
\varphi(\lab(\Fbl)) = \lab(\Gbl).
\end{align*}
In cases where we need to explicitly indicate the labeled vertex of $\Gbl$, we write
$G^u$ for $u = \lab(\Gbl) \In V(G)$.

Next, a \emph{product $\Gbl_1\cdot \Gbl_2$} of two labeled graphs $\Gbl_1$, $\Gbl_2$
is the graph created by identification of $\lab(\Gbl_1)$ and $\lab(\Gbl_2)$
in the disjoint union of $\Gbl_1$ and $\Gbl_2$.

A \emph{quantum graph} is a formal linear combination of finitely many graphs.
A \emph{$1$-labeled quantum graph}, or simply \emph{labeled quantum graph}, is a formal linear combination (with real coefficients) of finitely many $1$-labeled graphs.
The homomorphism counting extends linearly to quantum graphs:
\begin{align*}
\hom(\Fcbl,\,\Gbl)\Big[\sum_{i=1}^{d}\alpha_i \Fbl_i\Big]
\coloneqq
\sum_{i=1}^{d}\alpha_i\,\hom(\Fcbl,\Gbl)[\Fbl_i],
\end{align*}
for the coefficients $\alpha_i\In \Real$, and the  $1$-labeled graphs $\Fbl_i$ for $i\In[d]$.

Moreover, quantum graphs $\Gbl_1, \Gbl_2$ can be naturally combined by sum, product, and exponentiation operations. 
A \emph{sum} $\Gbl_1 + \Gbl_2$ is the sum of their linear combinations.
A \emph{product} $\Gbl_1 \cdot \Gbl_2$ is the product of their linear combinations, where
we use the definition of the product of two labeled graphs.
Finally, an \emph{exponentiation} $(\Gbl_1)^k$ for an integer $k\geq 1$ is the $k$-fold product of $\Gbl$ with itself.
We will use a standard identity for $1$-labeled, possibly quantum, graphs:
\begin{align}
    \hom(\Fcbl,\Gbl)[(\Fbl)^k] = \left(\hom(\Fcbl,\Gbl)[\Fbl]\right)^k.\label{eq:homcounttexp}
\end{align}

The following result is due to \citet{Cai+1992}. 
Let $\mathcal{C}_{2,h}$ denote the class of formulas of two-variable first-order logic with counting quantifiers, where the quantifier depth is bounded by $h$.
\begin{theorem}[{\citet[Theorem 5.2]{Cai+1992}}]\label{thm:cai}
    Let $G^u$, $H^v$ be a pair of labeled graphs and $\chitree{h}$ be a coloring for $h\In\Nat$.
    Then the following are equivalent:
    \begin{enumerate}[label=(\roman*)]
        \item $\chitree{h}(G, u) = \chitree{h}(H, v)$,
        \item $(G, u) \equiv_{\mathcal{C}_{2, h}} (H, v)$.
    \end{enumerate}
\end{theorem}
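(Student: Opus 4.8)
This is the two-variable case of the Ehrenfeucht--Fra\"iss\'e characterisation of counting logic, and I would prove it by routing both conditions through the \emph{$h$-round bijective $2$-pebble game} $\mathrm{BP}_h$ played on the pointed graphs $(G,u)$ and $(H,v)$; for the equivalence to be literally correct I read condition (i) as also forcing the global colourings $\Tree{h}(G)$ and $\Tree{h}(H)$ to agree (the bijective game needs $|V(G)|=|V(H)|$ to even begin), which is automatic or harmless in the intended applications. In $\mathrm{BP}_h$ there are two pebble pairs, one pinned from the start onto $(u,v)$; each round Duplicator produces a bijection $f\colon V(G)\to V(H)$, then Spoiler picks one pebble pair and a vertex $a\in V(G)$ and moves that pair to $(a,f(a))$; Spoiler wins as soon as the pebbled vertices fail to induce a partial isomorphism, and Duplicator wins if she survives $h$ rounds. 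The plan is: (Step~1) $\chitree{h}(G,u)=\chitree{h}(H,v)$ iff Duplicator wins $\mathrm{BP}_h$; (Step~2) Duplicator wins $\mathrm{BP}_h$ iff $(G,u)\equiv_{\mathcal{C}_{2,h}}(H,v)$; then chain the equivalences.

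For Step~1 I would induct on $h$. In the ``if'' direction Duplicator maintains the invariant that after $i$ rounds the two currently pebbled vertices carry equal colour $\chitree{h-i}$ and $\Tree{h-i}(G)=\Tree{h-i}(H)$ still holds: since the recursive step defining $\chitree{i+1}$ from $\chitree{i}$ stores the multiset of neighbour colours, equal $\chitree{h-i}$-colours of the last-placed pair give a colour-class-preserving bijection of their neighbourhoods, and the matching global class sizes let her extend it to all of $V(G)$ onto $V(H)$ while respecting $\chitree{h-i-1}$ and the adjacency to the other pebbled pair. In the ``only if'' direction, a least $h'\le h$ with $\chitree{h'}(G,u)\ne\chitree{h'}(H,v)$ unwinds --- following the recursion --- into a short Spoiler script that alternates the two pebbles (two suffice because each step only has to remember the vertex placed in the previous one) and reaches a non-isomorphic configuration within $h'$ rounds.

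For Step~2 --- which I expect to be the main obstacle --- I would again induct on $h$, simultaneously on the number of rounds and on quantifier depth, strengthening the claim to an arbitrary reachable pebble configuration and to the set of $\mathcal{C}_{2,h}$-formulas in the two fixed variables. If Duplicator has no admissible bijection in round $1$, then every $f$ has a ``bad'' vertex; the bad vertices on the $G$-side form a union of $\mathcal{C}_{2,h-1}$-definable classes (using the separating formulas supplied by the induction hypothesis for the one-round-shorter game), so counting, with a fresh-variable counting quantifier, how many vertices realise such a class gives a $\mathcal{C}_{2,h}$-formula separating $(G,u)$ from $(H,v)$. Conversely, a separating $\mathcal{C}_{2,h}$-formula can be brought to a normal form whose outermost operation is a counting quantifier $\exists^{\ge m} z\,\psi$ with $\psi\in\mathcal{C}_{2,h-1}$; the differing numbers of witnesses of $\psi$ obstruct every bijection, so Spoiler wins round $1$ and then follows the inductive strategy. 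The delicate points are keeping the round count aligned with the quantifier depth and ensuring that re-pebbling a pair corresponds to reusing one of the two variables, so that the formula assembled along a Spoiler strategy never needs a third variable.
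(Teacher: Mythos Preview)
The paper does not give its own proof of this statement: it is quoted verbatim as Theorem~5.2 of Cai, F\"urer, and Immerman and used as a black box. So there is nothing in the paper to compare your argument against.

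That said, your route through the $h$-round bijective $2$-pebble game is exactly the standard proof in the cited source (and in the subsequent expositions by Immerman--Lander, Hella, and Grohe): Step~1 is the equivalence between colour refinement depth and Duplicator's winning strategy, and Step~2 is the Ehrenfeucht--Fra\"iss\'e theorem for $C^2$ with bounded quantifier depth. Your identification of the delicate points---keeping round count aligned with quantifier depth, and the variable-reuse that keeps the formula two-variable---is accurate. You are also right to flag the size issue: the bijective game literally requires $|V(G)|=|V(H)|$, and a depth-$1$ sentence $\exists^{\ge n}y\,(y=y)$ already separates graphs of different sizes, whereas $\chitree{1}$ only records the local degree. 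So the equivalence as stated really needs either the assumption $|V(G)|=|V(H)|$ or the convention that ``equal $\chitree{h}$-colour'' implicitly entails $\Tree{h}(G)=\Tree{h}(H)$. The paper uses the theorem only for $H=G$ (or the disjoint union $G'=G\sqcup H$ with itself), where this is automatic, so the imprecision is harmless in context---but it is a genuine caveat in the statement as written.
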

The above result was followed by the work of~\citet{Dvo+2010} 
stating that the homomorphism counts of $1$-labeled graphs are 
also equivalent to the first-order logic with counting quantifiers.
\begin{theorem}[{\citet[Theorem 7]{Dvo+2010}}]\label{thm:dvorak}
    Let $G^u$, $H^v$ be a pair of labeled graphs,
    and let $\mathcal{T}_h^\bullet$ be the class of all $1$-labeled trees of depth $h$
    Then the following are equivalent
    \begin{enumerate}[label=(\roman*)]
        \item $(G, u) \equiv_{\mathcal{C}_{2, h}} (H, v)$,
        \item $\hom(\mathcal{T}_h^\bullet, G) = \hom(\mathcal{T}_h^\bullet, H)$.
    \end{enumerate}
\end{theorem}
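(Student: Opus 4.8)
The plan is to route the argument through the color‑refinement characterization already available as \autoref{thm:cai}: since $(G,u)\equiv_{\mathcal C_{2,h}}(H,v)$ holds exactly when $\chitree{h}(G,u)=\chitree{h}(H,v)$, it suffices to prove
\[
\chitree{h}(G,u)=\chitree{h}(H,v)\quad\Longleftrightarrow\quad \hom(\mathcal T_h^\bullet,G^u)=\hom(\mathcal T_h^\bullet,H^v),
\]
reading $\mathcal T_h^\bullet$ as the (downward‑closed) class of $1$‑labeled rooted trees all of whose leaves lie within distance $h$ of the root. I would prove both implications simultaneously by induction on $h$; the case $h=0$ (only the single‑vertex tree, constant refinement color) is immediate.

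For ``$\Longrightarrow$'' I would use the recursion for label‑preserving homomorphism counts into a rooted target: if $T^\bullet$ is rooted at $r$ with pendant subtrees $T_1^\bullet,\dots,T_d^\bullet$, each with leaves within distance $h-1$ of its root, then, because the pendant subtrees may be mapped independently into the neighborhood of the image of $r$,
\[
\hom(T^\bullet,G^u)=\prod_{i=1}^{d}\ \sum_{w\in N(u)}\hom(T_i^\bullet,G^w).
\]
By the induction hypothesis each $\hom(T_i^\bullet,G^w)$ is a function of $\chitree{h-1}(G,w)$, so each inner sum is a function of the multiset $\leftm\chitree{h-1}(G,w)\mid w\in N(u)\rightm$, which is a component of $\chitree{h}(G,u)$; hence the whole count is determined by $\chitree{h}(G,u)$.

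The content is in ``$\Longleftarrow$''. Assume all labeled‑tree counts agree. Restricting to trees of depth at most $h-1$ and invoking the induction hypothesis gives $\chitree{h-1}(G,u)=\chitree{h-1}(H,v)$, so only the neighbor‑color multiset remains to be matched. The induction hypothesis also says that the finitely many distinct $\chitree{h-1}$‑colors occurring across $G$ and $H$ have pairwise distinct count‑vectors over labeled trees of depth $\le h-1$, hence are separated by some finite family $S_1^\bullet,\dots,S_r^\bullet$ of such trees. Feeding the (finitely many, bounded) values $1+\hom(S_\ell^\bullet,\cdot)$ into \autoref{lem:tuplefaithful} produces exponents $s_1,\dots,s_r$ for which the quantum labeled tree $\hat S^\bullet\coloneqq\prod_{\ell}\bigl(S_\ell^\bullet+K_1^\bullet\bigr)^{s_\ell}$ (with $K_1^\bullet$ the single‑vertex labeled tree, added so each factor is $\ge 1$) is again of depth $\le h-1$ and is \emph{color‑faithful}: by \autoref{eq:homcounttexp} its count on a vertex equals $\prod_{\ell}(1+\hom(S_\ell^\bullet,\cdot))^{s_\ell}$, an injective function of the $\chitree{h-1}$‑color, and linearity of $\hom$ on quantum graphs makes the passage through quantum trees harmless. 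Now, for each $k\ge 0$, hanging $(\hat S^\bullet)^{k}$ below a fresh root yields a quantum labeled tree of depth $\le h$ whose count on $G^u$ equals the power sum $\sum_{w\in N(u)}\hom(\hat S^\bullet,G^w)^{k}$; by hypothesis these power sums coincide with the corresponding ones for $H^v$ for all $k$, so \autoref{lem:distinct} (equal power sums of distinct positive values force equal multiplicities) yields
\[
\leftm\hom(\hat S^\bullet,G^w)\mid w\in N(u)\rightm=\leftm\hom(\hat S^\bullet,H^w)\mid w\in N(v)\rightm,
\]
and color‑faithfulness upgrades this to equality of the multisets $\leftm\chitree{h-1}(G,w)\mid w\in N(u)\rightm$ and $\leftm\chitree{h-1}(H,w)\mid w\in N(v)\rightm$. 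Together with $\chitree{h-1}(G,u)=\chitree{h-1}(H,v)$ this is exactly $\chitree{h}(G,u)=\chitree{h}(H,v)$, closing the induction.

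The main obstacle is precisely this reconstruction of the neighbor‑color multiset in ``$\Longleftarrow$'': homomorphism counts from trees expose only product‑of‑sums quantities, so one must (i) collapse everything to genuine power sums by exploiting the labeled product together with \autoref{eq:homcounttexp}, (ii) invert those power sums to multiplicities, and (iii) arrange that a single test tree is color‑faithful, so that a recovered multiset of integers truly encodes a multiset of colors. It is step (iii) — the faithful product representation — that makes \autoref{lem:tuplefaithful} (and then \autoref{lem:distinct}) the essential tool rather than a convenience; the $+K_1^\bullet$ adjustment ensuring all factors are positive before \autoref{lem:tuplefaithful} applies is a minor bookkeeping point.
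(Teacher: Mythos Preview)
The paper does not prove \autoref{thm:dvorak}; it is quoted from Dvo\v{r}\'ak's work and used as a black box, so there is no in-paper proof to compare against. That said, your argument is correct and, usefully, self-contained within the paper's own lemmas: after reducing via \autoref{thm:cai} to the color-refinement formulation, the forward direction is the standard subtree recursion, and for the converse you construct a color-faithful quantum tree $\hat S^\bullet$ of depth at most $h-1$ via \autoref{lem:tuplefaithful} and then recover the neighbor-color multiset from its power sums by \autoref{lem:distinct}. This is close in spirit to Dvo\v{r}\'ak's original argument (which also builds quantum trees isolating refinement classes --- essentially the content of \autoref{prop:lemma6}, which the paper likewise imports without proof), but your route is a genuine alternative in that it bypasses \autoref{prop:lemma6} entirely and manufactures the separating gadget directly from the two elementary counting lemmas already in the appendix. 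One cosmetic point: the paper defines $(\,\cdot\,)^{k}$ only for integers $k\ge 1$, so restrict your power-sum step to $k\ge 1$; this is harmless because the growth argument in the proof of \autoref{lem:distinct} always produces a large (in particular positive) $k$.
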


\begin{proposition}\label{prop:homcounttt}
Let $h, t \In \mathbb{N}$, and let $G$ be a graph and $\chitree{h}\colon V(G)\to \Sigma(G)$ a coloring.
For each color $c \In \Sigma(G)$, choose a vertex $u \In V(G)$ such that $\chitree{h}(G, u) = c$, 
and denote by $G(c)$ the labeled graph $G^u$. Then, it holds:
\begin{align}
    \hom(\CaT h t, G)[F] = 
    \sum_{\substack{
        \vw\In \Sigma^t
    }} 
    \Walk{t}(G, \chitree{h})(\vw)
    \cdot
    \prod_{i=1}^{t} \hom(\mathcal{T}_h^\bullet, G(\vw[i]))[\Tbl_i],
    \label{eq:homcath2}
\end{align}
for every $F\In \CaT h t$, where $(\Tbl_1, \Tbl_2, \dots \Tbl_t)$
is a leg sequence of $F$ corresponding to the spine  $(\lab(\Tbl_1), \lab(\Tbl_2), \dots \lab(\Tbl_t))$.
\end{proposition}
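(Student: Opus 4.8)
The plan is to follow the template of the proof of \cref{prop:homcountt}, replacing the elementary count of leaf-to-neighbour assignments of a star graph by a \emph{label-preserving} homomorphism count into a $1$-labeled tree of depth at most $h$, and then invoking \cref{thm:cai} together with \cref{thm:dvorak} to see that this count depends on the anchoring vertex only through its color $\chitree{h}$. First I would fix $F\In\CaT h t$ together with a spine $(s_1,\dots,s_t)$ (following \cref{prop:homcountt}, it suffices to treat $F$ whose spine has exactly $t$ vertices); writing $F_i$ for the connected component of $(V(F),E(F)\setminus E(S))$ that contains $s_i$, I take $\Tbl_i$ to be $F_i$ viewed as a $1$-labeled graph with $\lab(\Tbl_i)=s_i$. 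Because $F$ is an $(h,t)$-caterpillar, each $F_i$ is a tree in which every vertex lies within distance $h$ of $s_i$, so $\Tbl_i\In\mathcal{T}_h^\bullet$. Since $F$ is a tree, any two distinct spine vertices are separated once the spine edges are removed, so $V(F)$ is partitioned by $V(F_1),\dots,V(F_t)$ and $E(F)=E(S)\cup\bigcup_{i=1}^t E(F_i)$.

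Next I would fix a tuple $\vec u=(u_1,\dots,u_t)\In V(G)^t$ and let $k_{\vec u}$ count the homomorphisms $\varphi\colon F\to G$ with $\varphi(s_i)=u_i$ for all $i$. By the edge decomposition above, such a $\varphi$ is a homomorphism precisely when (a) $u_iu_{i+1}\In E(G)$ for all $i<t$, i.e.\ $\vec u$ is an occurrence of some walk color, and (b) every restriction $\varphi|_{V(F_i)}$ is a homomorphism $F_i\to G$; and since the sets $V(F_i)$ are disjoint, these restrictions may be chosen independently. As each $\varphi|_{V(F_i)}$ automatically sends $\lab(\Tbl_i)=s_i$ to $u_i$, we get $k_{\vec u}=\prod_{i=1}^t\hom(\mathcal{T}_h^\bullet,G^{u_i})[\Tbl_i]$ whenever $\vec u$ is a walk occurrence. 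Since every homomorphism $F\to G$ restricts on the spine to a walk occurrence, summing over all walk occurrences and then grouping them by their color word $\vec w=(\chitree{h}(G,u_1),\dots,\chitree{h}(G,u_t))\In\Sigma^t$, of which there are exactly $\Walk{t}(G,\chitree{h})(\vec w)$, yields
\begin{align*}
    \hom(\CaT h t,G)[F]=\sum_{\vec w\In\Sigma^t}\Walk{t}(G,\chitree{h})(\vec w)\cdot\prod_{i=1}^t\hom(\mathcal{T}_h^\bullet,G^{u_i})[\Tbl_i],
\end{align*}
where $u_i$ denotes the $i$-th coordinate of any occurrence of $\vec w$ in $G$.

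The step I expect to be the main obstacle is making the previous display well-defined, i.e.\ showing that $\hom(\mathcal{T}_h^\bullet,G^{u})[\Tbl_i]$ depends on $u$ only through $\chitree{h}(G,u)$, so that the factor may be rewritten as $\hom(\mathcal{T}_h^\bullet,G(\vec w[i]))[\Tbl_i]$ using the fixed representatives $G(c)$ from the statement. This is exactly where the quantum-graph and logic machinery enters: if $\chitree{h}(G,u)=\chitree{h}(G,u')$ then $(G,u)\equiv_{\mathcal{C}_{2,h}}(G,u')$ by \cref{thm:cai}, and hence $\hom(\mathcal{T}_h^\bullet,G^u)=\hom(\mathcal{T}_h^\bullet,G^{u'})$ by \cref{thm:dvorak}. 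In the $h=1$ special case treated in \cref{prop:homcountt} this was automatic, since there the count was literally $\deg_G(u)^{\vec s_F[i]}$, manifestly a function of $\chideg=\chitree{1}$. Applying this substitution to the display above turns it into \eqref{eq:homcath2}, which completes the argument.
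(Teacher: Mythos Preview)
Your proposal is correct and follows essentially the same approach as the paper's proof: decompose each homomorphism into its spine image (a walk occurrence) and independent label-preserving leg homomorphisms, then invoke \cref{thm:cai} and \cref{thm:dvorak} to replace $G^{u_i}$ by $G(\vec w[i])$. You supply somewhat more detail (the explicit edge partition $E(F)=E(S)\cup\bigcup_i E(F_i)$ and the verification that $\Tbl_i\in\mathcal{T}_h^\bullet$), but the structure and the key step are identical.
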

\begin{proof}
Given a colored walk $\vw\In \Nat^t$, consider its occurrence $\vu=(u_1, u_2, \dots, u_t)$ in~$G$.
Let $k_{\vu}$ be the number of homomorphisms
$\varphi\colon V(F)\to V(G)$ that map the spine of $F$ to the occurrence $\vu$, 
that is, $\lab(T_i) = u_i$ for $i\In[t]$.

Specifically, for each $u_i$, independently, there is exactly 
$\hom(\mathcal{T}_h^\bullet, G^{u_i})[T_i^\bullet]$ ways to map the attached
leg $\Tbl_i$ into $G$:
\begin{align}
    \hom(\CaT h t, G)[F] = 
    \sum_{\vu \in V(G)^t}
    k_{\vu} =
    \sum_{\substack{
        \vu \text{ occurrence }\\
        \text{ of } \vw \text{ in } G
    }} 
    \prod_{i=1}^{t} \hom(\mathcal{T}_h^\bullet, G^{\vu[i]})[\Tbl_i].
    \label{eq:equationofahomsum}
\end{align}
Furthermore, by the theorems \cref{thm:cai} and \cref{thm:dvorak},
the number  of tree homomorphisms $\hom(\mathcal{T}_h^\bullet, G^{u_i})[T_i^\bullet]$
only depends on the color of $u_i$, which is 
 $\chitree{h}(G, u_i) = \vw[i]$.
 Indeed we get, $\hom(\mathcal{T}_h^\bullet, G^{u_i})[T_i^\bullet] = \hom(\mathcal{T}_h^\bullet, G(\vw[i]))[\Tbl_i]$.
 Finally, reorganizing the sum in \cref{eq:equationofahomsum} to range over all possible colored walks $\vw \In   \Sigma^t$
  using the known multiplicities $\Walk{t}(G, \chitree{h})(\vw)$,
 we obtain exactly the expression on the right-hand side of \cref{eq:homcath2}.
\end{proof}

We also make use of the following result of \cite{Dvo+2010}, referred to as Lemma~6.
\begin{proposition}[{\citet[Lemma 6]{Dvo+2010}}]\label{prop:lemma6}
For every formula $\psi(x)\In \mathcal{C}_{2,h}$, 
there exists a quantum graph $\Tbl$ 
with its base in $\mathcal{T}_h^\bullet$
such that
\begin{align*}
    \hom(\Tbl, G^u) = \begin{cases}
        1, & \text{if } (G, u)\models \psi(x),\\
        0, & \text{otherwise.}
    \end{cases}
\end{align*}
\end{proposition}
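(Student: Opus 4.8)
The plan is to follow Dvořák's proof of Lemma~6~\cite{dvo+2010} and argue by induction on the structure of $\psi(x)$, equivalently on its quantifier depth $h$, keeping at every step the base of the constructed quantum graph inside $\mathcal{T}^\bullet_h$. The two structural facts I will rely on are that $\hom$ is multiplicative under the product of $1$-labeled graphs (identification of labels, the labeled analogue of \eqref{eq:homcounttexp}), and that gluing two $1$-labeled trees of depth $\le h$ at their roots, or attaching a fresh labeled root by an edge to the label of a depth-$\le h-1$ tree, again yields a $1$-labeled tree of depth $\le h$. For the base case, a quantifier-free formula with one free variable is equivalent over loopless graphs to $\top$ or $\bot$: I realize $\top$ by the single-vertex labeled tree $\Kbl$ (whose label-preserving hom-count into $G^u$ is always $1$) and $\bot$ by $\Kbl-\Kbl$. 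For the Boolean connectives, given quantum graphs with $\{0,1\}$-valued hom-counts realizing $\psi_1,\psi_2$, I take $\Kbl$ minus them for negation and their labeled product for conjunction (using multiplicativity and that $\{0,1\}$ is closed under products), with disjunction via De~Morgan; none of this raises the tree depth.

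The substantive case, and the main obstacle, is the counting quantifier $\psi(x)=\exists^{\ge k}y\,\phi(x,y)$ with $\phi\in\mathcal{C}_{2,h-1}$. I would first put $\phi(x,y)$ in the standard normal form for two-variable logic, a Boolean combination of the atoms $E(x,y)$ and $x=y$ with unary subformulas in $\mathcal{C}_{2,h-1}$, and group by the three mutually exclusive atomic types of $(x,y)$ to obtain
\[
\phi(x,y)\ \equiv\ \big(x=y\wedge\mu_0(x)\big)\ \vee\ \big(E(x,y)\wedge\mu_1(x)\wedge\nu_1(y)\big)\ \vee\ \big(x\neq y\wedge\neg E(x,y)\wedge\mu_2(x)\wedge\nu_2(y)\big)
\]
with $\mu_i,\nu_i\in\mathcal{C}_{2,h-1}$. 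Writing $h_\gamma(w):=\hom(\Tbl_\gamma,G^w)\in\{0,1\}$ for the quantum graphs supplied by the induction hypothesis (base in $\mathcal{T}^\bullet_{h-1}$), the number $N(u)$ of witnesses $v$ of $\phi(u,\cdot)$ is the fixed signed combination $h_{\mu_0}(u)+h_{\mu_1}(u)\!\sum_{v\in N_G(u)}\!h_{\nu_1}(v)+h_{\mu_2}(u)\big(\sum_v h_{\nu_2}(v)-h_{\nu_2}(u)-\!\sum_{v\in N_G(u)}\!h_{\nu_2}(v)\big)$. Each summand is realized as a label-preserving hom-count into $G^u$ from a depth-$\le h$ quantum tree: $h_\gamma(u)$ directly; $\sum_{v\in N_G(u)}h_\gamma(v)$ by attaching a new labeled root by an edge to the label of $\Tbl_\gamma$; the $u$-independent quantity $\sum_v h_\gamma(v)$ by adjoining a disjoint copy of the label-forgotten version of $\Tbl_\gamma$ to $\Kbl$; and the multiplications by $h_{\mu_i}(u)$ by labeled products at the root. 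Summing with the appropriate signs produces a quantum graph $\Tbl_\phi$ with base in $\mathcal{T}^\bullet_h$ and $\hom(\Tbl_\phi,G^u)=N(u)$ for all $G,u$; I expect this normal-form and depth bookkeeping to be where essentially all the work lies.

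It remains to turn the count $N(u)$ into the threshold indicator $[N(u)\ge k]$. This cannot be done by one fixed quantum graph uniformly over all graphs — on a star $K_{1,m}$ rooted at its center the hom-count from any $1$-labeled tree is a monomial in $m$, so a quantum graph gives a polynomial in $m$, and no polynomial equals $[m\ge k]$ for all $m$ once $k\ge 1$ — so, as in the intended applications (where the graphs under comparison have bounded size), I fix a bound $n$ on the number of vertices, giving $N(u)\in\{0,1,\dots,n\}$. Taking the unique polynomial $p$ of degree $\le n$ with $p(j)=[\,j\ge k\,]$ for $0\le j\le n$ and writing $p(N)=\sum_{j=0}^n c_j N^j$, the identity $\hom\big((\Tbl_\phi)^{j},G^u\big)=N(u)^{j}$ from \eqref{eq:homcounttexp} shows that $\Tbl:=\sum_{j=0}^n c_j\,(\Tbl_\phi)^{j}$ has base in $\mathcal{T}^\bullet_h$ and satisfies $\hom(\Tbl,G^u)=p(N(u))=\big[(G,u)\models\psi\big]$, which closes the induction; the remaining cases are routine.
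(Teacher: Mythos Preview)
The paper does not prove this proposition; it is quoted as Lemma~6 of~\cite{dvo+2010}, so there is no in-paper argument to compare against. Your sketch follows Dvo\v{r}\'ak's induction and the overall strategy is right, though the normal form you give for $\phi(x,y)$ is too coarse: fixing an atomic type of $(x,y)$ leaves a Boolean combination of unary formulas in $x$ and in $y$, in general a disjunction $\bigvee_k \mu_{i,k}(x)\wedge\nu_{i,k}(y)$ rather than a single conjunction $\mu_i(x)\wedge\nu_i(y)$. That is easily repaired by refining to mutually exclusive cells (complete types over the finitely many unary subformulas involved) so that $N(u)$ is an honest sum, and applying your construction cellwise.

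The substantive gap is your handling of the non-adjacent case. Realising $\sum_{v\in V(G)} h_\gamma(v)$ by adjoining a \emph{disjoint} unlabeled copy of $T_\gamma$ to $\Kbl$ produces a labeled forest, not a tree, so the base leaves $\mathcal{T}_h^\bullet$, contrary to what you announce at the outset. This is not a slip that can be patched within trees: take $\psi(x)=\exists^{\ge 2}y.(y{=}y)\in\mathcal{C}_{2,1}$; every $1$-labeled depth-$1$ tree is a labeled star with $\hom(S_k^\bullet,G^u)=\deg(u)^k$, so any quantum combination of them takes the same value at an isolated vertex whether the ambient graph has one vertex or two, yet $\psi$ distinguishes these. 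Dvo\v{r}\'ak's lemma is in fact stated for treewidth-$1$ graphs, i.e.\ forests, and in the paper's only use of the proposition---the formulas $\psi_c$ expressing $\chitree{h}$-colours in the proof of \autoref{thm:homht}---every counting quantifier is guarded by $E(x,y)$, so the non-adjacent case never arises and labeled trees do suffice there. You should either take the base class to be depth-$\le h$ labeled forests, or explicitly restrict to the guarded (local) fragment actually needed downstream.
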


We are prepared to restate and prove the main result of this section.

\restate{thm:homht}{
    For every $h, t\ge 0$, it holds that $\Walk{t}(-, \chitree{h}) \Eeq \hom({\CaT h t}, -).$
}

\begin{proof}
Following the structure of the proof of \cref{thm:hom1t},
we first show that $\Walk{t}(-, \chitree{h}) \Ege \hom({\CaT h t}, -)$.
Suppose that $\Walk{t}(G, \chitree{h}) = \Walk{t}(H,\chitree{h})$ for two given graphs $G$ and $H$.
By \cref{prop:homcounttt} we have that $\hom(\CaT h t, G)[F] = \hom(\CaT h t, H)[F]$ for every its graph entry $F\In \CaT h t$.

For the other direction, 
suppose that $\Walk{t}(G, \chitree{h}) \neq \Walk{t}(H,\chitree{h})$ for two given $G$ and $H$.
We want to proof the existence of a caterpillar in $\CaT h t$ for which the homomorphism counts differ.
We denote the common finite set of colors given by $\chitree{h}$ in both graphs by $\Sigma = \Sigma(G) \cup \Sigma(H)$.
Given a color $c\In \Sigma$, we select a vertex $u \In V(G) \sqcup V(H)$ such that $\chitree{h}(G', u) = c$, where $G'$ is the disjoint union of $G$ and $H$.
We denote $G'^\bullet$ such that $\lab(G'^\bullet) = u$ by $G'(c)$.

Additionally, for each color $c \In \Sigma$, 
there exists a formula $\psi_c \In \mathcal{C}_{2,h}$ 
such that, for every graph $G$ and vertex $u \In V(G)$, we have $(G, u)\models \psi_c(x)$ if and only if $\chitree{h}(G, u) = c$. 
This follows from Claim~2 in the proof of Theorem~5.5.3 given in~\citep{Gro+2017}; see also~\citep{Cai+1992,Imm+1990}.

By \cref{prop:lemma6}, it follows that for each $\psi_c(x)$
there is a quantum graph~$\Tbl_c$ of depth at most $h$
such that $\hom(\Tbl_c, G^u) = 1$ if $\psi_c(x)$ holds and $0$ otherwise.

We fix a linear order on $\Sigma$ by $\{c_1, c_2, \dots, c_p\}= \Sigma$, where $p = |\Sigma|$.
For each color $c_i$ there is a quantum graph $\Tbl_{c_i}$ corresponding to $\psi_{c_i}$.
We denote the quantum graph obtained by the scalar multiplication by $i$ as  $\Lbl_{c_i} = (i+1)\cdot \Tbl_{c_i}$, so that 
\begin{align*}
    \hom(\mathcal{T}_h^\bullet, G^u)[\Lbl_{c_i}] = \begin{cases}
        i+1, & \text{if } \chitree{h}(G, u) = c_i,\\
        0, & \text{otherwise.}
    \end{cases}
\end{align*}

We apply \cref{lem:tuplefaithful} to find exponents $s_1,\dots, s_{t}$
for bounds $D_0 = |V(G)|^{t}$, and $D_1 = \dots = D_t = p+2$ to get 
such that the function
\begin{align*}
    \left(m_\vw^X, \vw[1], \vw[2], \dots, \vw[t]\right) \mapsto
    m_\vw^X\cdot 
    \prod_{i=0}^t
    \big(\hom(\mathcal{T}_h^\bullet, G'(\vw[i]))[\Lbl_{\vw[i]}]\big)^{s_i}
\end{align*}
is injective on both $X \In \{G, H\}$ and every $\vw\In \Sigma^t$, where $m^X_\vw = \Walk{t}(X, \chitree{h})(\vw)$.
Next, for each $\vw = w_1 w_2 \dots w_t \In \Sigma^t$, we construct a quantum caterpillar graph $F_{\vw}$ with a sequence of $t$ legs given by $\big((\Lbl_{w_1})^{s_1}, (\Lbl_{w_2})^{s_2}, \dots, (\Lbl_{w_t})^{s_t}\big)$. 

Furthermore, we apply \cref{lem:distinct} by setting $\vx, \va, \vb\In [p+2]^t$ as:
\begin{align*}
\vx[i] = \prod_{j=0}^t (\hom(\mathcal{T}_h^\bullet, G'(\vw_i[j]))[\Lbl_{\vw_i[j]}])^{s_j},
\quad
\va[i] = m^G_{\vw_i},
\quad
\vb[i] = m^H_{\vw_i},
\end{align*}
where $i$ is the index enumerating each $\vw_i$ in $[p+2]^t$.
As a result we get sufficiently large exponent $k$ distinguishing the expressions involving $\va$ and $\vb$. 
Specifically, we consider the quantum caterpillar given by the spine
\begin{align*}
F_{\vw}^k = \big((\Lbl_{w_1})^{k\cdot s_1},\, (\Lbl_{w_2})^{k\cdot s_2},\,\dots,\,(\Lbl_{w_t})^{k\cdot s_t}\big),
\end{align*}
for which we obtain the following 
\begin{align*}
\hom(\CaT h t, G)[F_{\vw}^k] &= (\hom(\CaT h t, G)[F_{\vw}])^k \\
&\neq (\hom(\CaT h t, H)[F_{\vw}])^k = \hom(\CaT h t, H)[F_{\vw}^k],
\end{align*}
where both equalities follow from \cref{prop:homcounttt} and the identity in \cref{eq:homcounttexp}.

Finally, because every quantum graph is a linear combination of non-quantum ones, it follows immediately that there exists at least one non-quantum caterpillar graph $F' \In \CaT h t$ in the base of $F^k_\vw$ for which
\begin{align*}
\hom(\CaT h t, G)[F'] \neq \hom(\CaT h t, H)[F'].
\end{align*}
\end{proof}

\begin{corollary}\label{cor:homht}
    For every $h\ge 0$ it holds that $\Walkk(-, \chitree{h})\Eeq \hom(\Cat_h, -)$.
\end{corollary}
\begin{proof}
    It follows from \cref{thm:homht} and the fact that $\Cat_h$ is a superclass of $\CaT h t$ for $t\In\Nat$.
\end{proof}

\subsection{Strict separation}

\begin{figure}[t]
\centering
    \includegraphics[width=0.75\textwidth]{./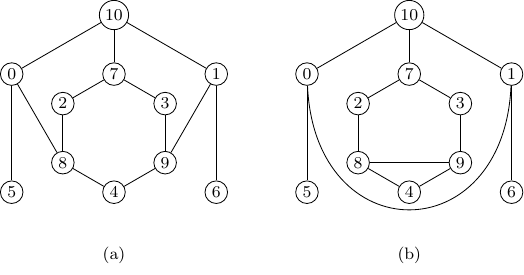}
    \caption{
        An example of a pair of graphs $G$ and $H$ that strictly separates the graph functions $\hom(\Cat_1, -)$ and $\hom(\Cat_0, -)$. 
        Note that the graph $H$ can be derived from $G$ by removing the edges $\{0,8\}$ and $\{1,9\}$, and then adding the edges $\{0,1\}$ and $\{8,9\}$.
    }
    \label{fig:separation}
\end{figure}

For a strict separation, we define closure under disjoint unions of generalized caterpillar graphs in order to apply existing theory on minor-closed classes.
Formally, let $G = (V,E)$ be a graph and let $e=uv\In E$ be an edge.
The \emph{contraction} of $e$ in $G$ is a graph $G/e$ with removed $e$ and identified vertices $u$ and $v$.
Graph $G_m$ is a \emph{minor} of $G$ if it can be obtained by a sequence of edge deletions, vertex deletions and edge contractions from $G$.

A class of graphs $\mathcal{C}$ is \emph{minor-closed} if for every $G \In \mathcal{C}$ 
and every minor $G'$ of $G$, we have $G' \In \mathcal{C}$. 
We denote by $\CaTu h t$ the closure of $\CaT h t $ under finite disjoint unions, 
that is, $\CaTu h t \supseteq \CaT h t$ and if $(V_1,E_1),(V_2,E_2) \In \CaTu h t$ then 
$(V_1 \sqcup V_2,\, E_1 \sqcup E_2)\In \CaTu h t.$
Analogicall, we denote by $\Catu_h$ the closure of $\Cat_h $ under finite disjoint unions.
\begin{proposition}
For all integers $h, t\ge 0$, the class $\CaTu h t$ is minor-closed.
\end{proposition}
\begin{proof}
Take a graph $G \In \CaTu h t$ and consider either deletion or contraction of it edge $uv$.
Edge $uv$ lies in one of the connected components $G'$, which in $\CaT h t$,
therefore there is a sequence of legs $((L_1, s_1),\dots (L_t, s_t))$.
We now consider following cases:
\begin{enumerate}
\item Contraction of edge in $(L_i, s_i)$: we obtain by contraction $(L_i / uv, s_i)$ which is in $\mathcal{T}^\bullet_h$
\item Deletion of edge in $(L_i, s_i)$: we obtain two graphs, one of them contains $s_i$, $(L'_i, s_i)\In \mathcal{T}^\bullet_h$,
and for the second one containing $u$ (without loss of generality) $(L'_i, u)\In \mathcal{T}^\bullet_h$.
\item Contraction of edge $s_i s_{i+1}$: we obtain only shorter spine, thus $G/s_i s_{i+1}$ in $\CaTu h {t-1} \subseteq \CaTu h t$.
\item Deletion of edge $s_i s_{i+1}$: we obtain two components, each is of them is again of a shorter spine.
\item Vertex deletions are analogous.
\end{enumerate}
\end{proof}

The following lemma is a consequence of \citet[Lemma 5.14]{Rob+2022}.
\begin{lemma}\label{lem:strictsep01}
It holds $\hom(\Catu_0, -) \Elt \hom(\Catu_1, -)$.
\end{lemma}
\begin{proof}
Follows from a stronger statement about the homomorphism distinguishability closedness of some 
classes closed on minors and disjoint unions.
Specifically for $\Catu_0$, this was shown by \citet[Lemma 5.14 as remarked in Section 5.1]{Rob+2022}.
For illustration, we give a concrete separating example in \autoref{fig:separation}.
\end{proof}

The following lemma is a consequence of \citet[Theorem 4.13.]{Sch+2025}.
\begin{lemma}\label{lem:strictsep12}
It holds $\hom(\Catu_1, -) \Elt \hom(\Catu_2, -)$.
\end{lemma}
\begin{proof}
The class of unions of caterpillars $\Catu_1$
corresponds to the class graphs of pathwidth at most $1$ \citep[Section 6]{Pro+1999}.
We apply a stronger statement about the homomorphism distinguishability closedness of
$\Catu_1$, this was shown recently by \citep[Theorem 4.13.]{Sch+2025}.
\end{proof}

\begin{lemma}\label{lem:strictseparation}
It holds $\hom(\Cat_0, -) \Elt \hom(\Cat_1, -) \Elt \hom(\Cat_2, -)$.
\end{lemma}
\begin{proof}
For a disjoint union of two graphs $F_1,F_2$ it holds $\hom(F_1 \sqcup F_2, G) = \hom(F_1, G)\cdot \hom(F_2, G)$, e.g. \citep[pg. 74]{Lov+2012}.
For any $h\In \Nat$, if $\hom(\Catu_h, G)[F_1\sqcup F_2] \neq \hom(\Catu_h, H)[F_1\sqcup F_2]$,
then $\hom(\Catu_h, G)[F_1] \neq \hom(\Catu_h, H)[F_1]$ or $\hom(\Catu_h, G)[F_2] \neq \hom(\Catu_h, H)[F_2]$.
Therefore, we get $\hom(\Catu_h, -) \Eeq \hom(\Cat_h, -)$.
Finally, we use \cref{lem:strictsep01} and \cref{lem:strictsep12}.
\end{proof}

\subsection{Expressivity hierarchy}
\begin{corollary}\label{cor:expressivityhierarchy}
    There is the following expressivity hierarchy such that for any integer $h\ge 3$:    
\[
\begin{tikzpicture}[every node/.style={anchor=center}, font=\normalsize, scale=0.75]
    \matrix (m) [matrix of math nodes, nodes in empty cells, row sep=0.45cm] {
    \hom(\mathcal{P},-) 
    &\Elt & \hom(\Cat_1,-) 
    &\Elt & \hom(\Cat_2,-) 
    &\Ele{\text{\scriptsize $\cdots$}}\Ele& \hom(\Cat_h,-) 
    &\Ele{\text{\scriptsize $\cdots$}}\Ele& \hom(\mathcal{T},-)\\
    \invEA(-,\chitriv) 
    &\Elt & \invEA(-,\chideg) 
    &\Elt & \invEA(-,\chitree{2}) 
    &\Ele{\text{\scriptsize $\cdots$}}\Ele& \invEA(-,\chitree{h}) 
    &\Ele{\text{\scriptsize $\cdots$}}\Ele& \Treee(-),\\
    };
        \foreach \col in {1,3,5,7,9}{
        \node[rotate=90] at ($(m-1-\col)!0.5!(m-2-\col)$) {$\Eeq$};
    }
    \label{diag:hierarchy_restated}
\end{tikzpicture}
\]
\end{corollary}
\begin{proof}
The second equivalence in the hierarchy is given by \cref{thm:hom1t}, 
while the intermediate equivalences follow from \cref{thm:homht} and \cref{thm:walkInvariantEquivalence}. 
The strict separations follow from \cref{lem:strictseparation}. 
The last equivalence follows from \cref{thm:dvorak}.
\end{proof}

\section{Additional details on Graph Neural Networks}\label{appendix:gnns}

\begin{figure}[h]
        \centering
        \includegraphics[width=0.6\textwidth]{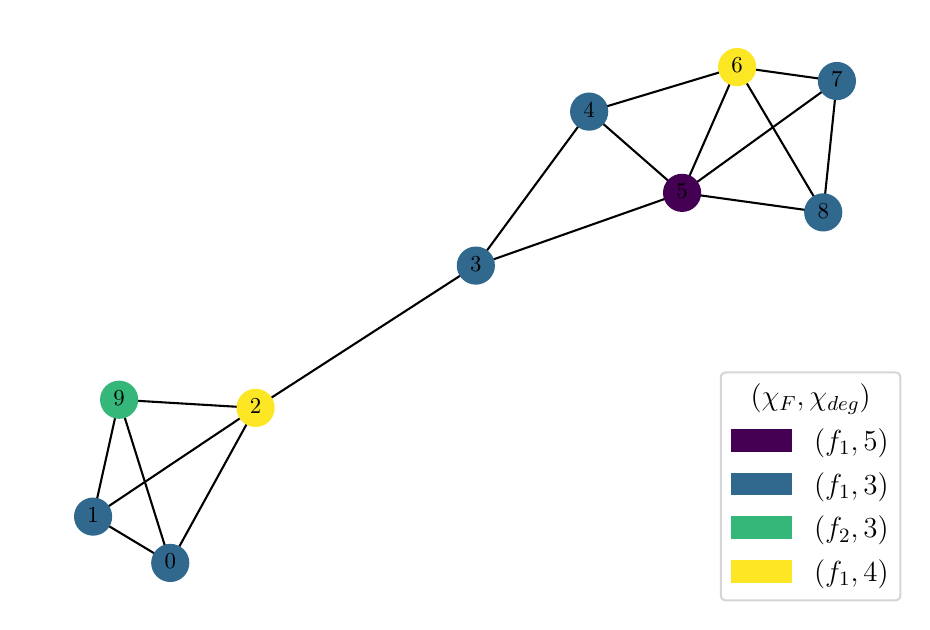}
        \caption{
            An example graph representation of a protein structure, colored by $(\chi_F, \chideg)$.
            The shown datapoint is taken from the Proteins dataset~\citep{Mor+2020}.
        }
        \label{fig:comp_proteins_example}
\end{figure}

In this section, we further discuss variants of Caterpillar GNN,
a practical architecture built upon the theoretical foundations of efficient aggregation.
Graphs in real-world datasets such as Proteins~\citep{Mor+2020}, ZINC~\citep{Irw+2012}, or ESOL~\citep{Wu+2018}
typically come attributed with \emph{vertex features}. An example is given in \autoref{fig:comp_proteins_example}.
Here, we assume that these features are seen as categorical, taken from a finite set $\Sigma$.
Rather than encoding continuous-valued properties, these features represent discrete properties 
such an atom type or molecular class.

The vertex features of a given graph $G$, we represent as a coloring function $\chi_F(G, -)\colon V(G)\to \Sigma$. To seamlessly integrate vertex features with our scalable aggregation scheme,
 we introduce a parametrized combined coloring that incorporates both the vertex features and the refinement coloring:
\begin{align}
    \tilde{\chi}^\tup{h}(G, u) = (\chi_F(G, u), \chitree{h}(G, u)),\label{eq:tildechi}
\end{align}
for every vertex $u \In V(G)$ and $h\In \Nat$.
The primary motivation for combining with $\chi_F$
 is to prevent the computational graph from begin too downscaled to accommodate for distinct vertex features.
 Technically, in the following architecture, we employ vertex-feature matrices $\mY=\mY(G, \chi_F)$ indexed by (specifically selected) colored walks $\va c\In\Sigma^{t+1}$, and feature channels $i$. Crucially, each entry $Y[\va c, i]$ only needs to represent $i$-th channel of the vertex feature $c = \chi_F(G, u)$ for a suitable $u\In V(G)$. This is due to the structure to the prefix-successor relation we explained in~\autoref{sec:efficientaggregation}.

\subsection{Caterpillar GCN}

Let $L$ denote the number of layers in the network.
For each layer $\ell$, where $0\le \ell \le L$ and $t_\ell = L - \ell$,
we define the following parameters:
$c_\ell \In \Nat$, the number of feature channels;
$\mW^\tup{\ell} \In \Real^{c_\ell\times c_{\ell+1}}$, a learnable weight matrix;
$\mY^\tup{\ell}$, vertex feature matrix with channel embeddings for $c_\ell$;
$\sigma$, an activation function, e.g. ReLU. 
Recall from Section~\ref{sec:efficientaggregation} that 
$S_t = S_t(G, \tilde \chi)\subseteq \Sigma^t$ denotes canonical subsets 
of colored walks of size at most $|V(G)|$.

We derive a \emph{Caterpillar GCN} 
as a sequence of layers transforming feature matrixes.
Specifically, for each layer $\ell$, we the features
 $\vh^\tup{\ell} \In \Real^{S_{t_\ell} \times c_\ell}$ as follows:
\begin{align*}
    \vh^\tup{0} &= \mY^\tup{0},\quad
    \vh^\tup{\ell+1} =
    \sigma\big( 
        \mC^{\large \tilde \mA}_{t_\ell}\, 
        \vh^\tup{\ell}\,
        \mW^\tup{\ell}
        \big)
        \,\Box\, \mY^\tup{\ell}
        ,\quad
        \vh^\tup{L} = 
        \mC^{\mI}_{0}\, 
        \vh^\tup{L}\,
        \mW^\tup{L},
    \end{align*}
    where $\Box$ represents a standard addition or a concatenation. 
    Finally, the resulting graph-level feature is 
    \begin{align}
        \vh(L, \theta; G,\chi)\coloneqq \vh^\tup{L}\In \Real^{\{\emptyword\}\times c_L},
    \end{align} 
    where $\theta$ is the set of all learnable parameters.
    For the $\ell$-th layer, we used the $t_\ell$-th \efficient matrix 
    (\autoref{eq:catmat}). We specifically set $\mM\coloneqq\tilde{\mA}$ for the efficient variant $\mC^\mM_t$,
    where $\tilde{\mA}$ is the augmented normalized adjacency matrix, see \citet{Kip+2017}, defined as follows:
    $\tilde{\mA} = \hat{\mD}^{-1/2} \hat{\mA} \hat{\mD}^{-1/2}$, where $\hat{\mA} = \mA + 2\mI$, and $\hat{\mD}+2\mI$.
    For an example of such a computational graph, see \autoref{fig:comp_proteins}.
    
    Indexing of the \efficient variants of graph matrices ensures the correct structure of aggregation 
    from longer to shorter words, and, ultimately, $\mC^{\mI}_{0}$ maps to the space of dimension $S_0 = \{\emptyword\}$, as shown in~\autoref{fig:comp_proteins_justmp} and \autoref{fig:efficient_neighborhood}.
    Note that if we set $\tilde\chi = (\chi_F, \chiid)$
    and operation $\Box$ to ignore $\mY$, our architecture becomes nothing else than a network of GCNConv \citep{Kip+2017} with global readout.

\begin{figure}[p]
            \centering
            \includegraphics[width=0.7\textwidth]{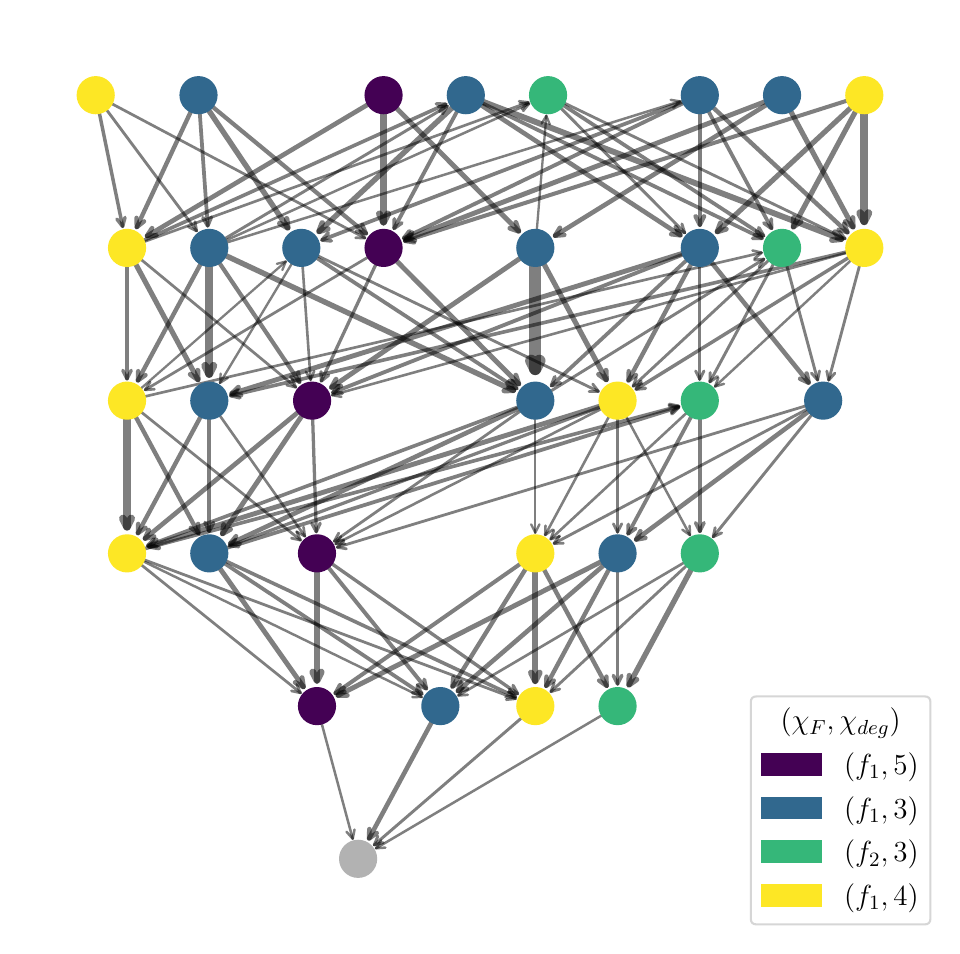}
            \caption{
            Computational graph \textbf{(ours)} of Caterpillar GCN with $h=1$ that uses  $\mC^{\large \tilde \mA}_{t_\ell}$ matrix at $\ell$-th layer.
            This diagram is analogical to \autoref{fig:efficient_neighborhood}.
            Weights are represented by line width, and signs by arrow direction.
            The input graph is shown in \autoref{fig:comp_proteins_example}.
            }
            \label{fig:comp_proteins}
\end{figure}
\begin{figure}[p]
        \centering
        \includegraphics[width=0.7\textwidth]{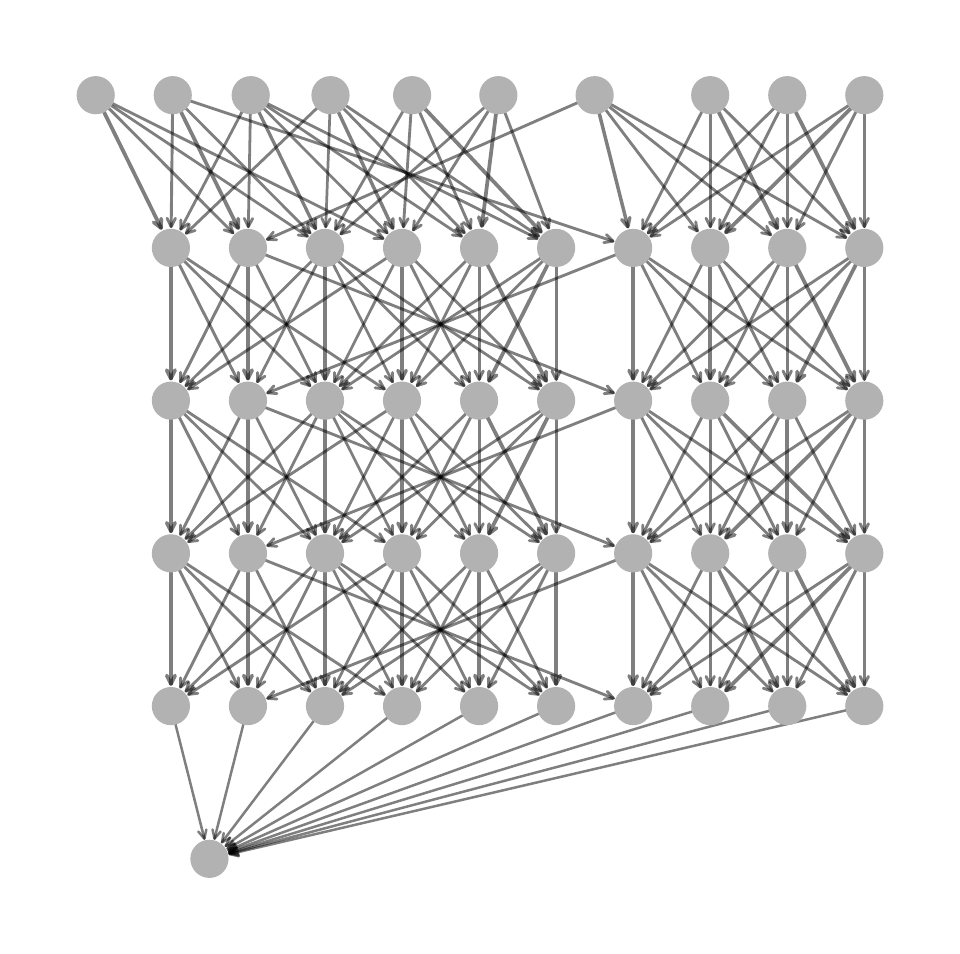}
        \caption{
            Computational graph (message-passing) of GCN that uses copies of matrix $\tilde\mA$ for its layers. 
            The input graph is shown in \autoref{fig:comp_proteins_example}.
            }
        \label{fig:comp_proteins_justmp}
\end{figure}

\section{Additional details on Experiments}\label{appendix:implementation}

\newcommand{\Bc}{\mathcal{B}}
\subsection{Incidence Topology}
Although our discussion of topology mainly serves heuristic and interpretative purposes, we provide formal definitions of the relevant notions in \cref{subsec:topology}.

\emph{Topology.} 
Let $V$ be a set and $\tau\subseteq 2^V$ be a family of subsets of $V$. Then $\tau$ is a \emph{topology} on $V$ if the following three conditions hold:
\begin{enumerate}
    \item[(T1)] Set $V \In \tau$, and empty set $\emptyset\In \tau$.
    \item[(T2)] The family $\tau$ is closed on all unions of its sets.
    \item[(T3)] The family $\tau$ is closed on finite intersections of its sets. 
\end{enumerate}
A subfamily $\Bc\subseteq \tau$ is a \emph{subbase} for $\tau$ if $\tau$ is the intersection of all topologies on $V$ containing $\Bc$.
We say that $\Bc$ \emph{generates} $\tau$.

For a graph $G = (V,E)$ on $V$, the neighborhood topology $\tau(E)$ is generated by the following family of sets for every $v\In V$ and $r \In \mathbb{N}$ such that $r> 0$:
\[
B_r(v) = \{ u \mid u \In V,\, (\mI + \mA)^r[u, v] \neq 0 \},
\]
where $\mA$ is the adjacency and $\mI$ is the (self-loop) identity matrix of shape $V\times V$ for $G$.

Let $\chi$ be a $\Sigma$-coloring on $G$ and let $T$ be the maximum length of colored walks. 
Then we denote the following family of sets for every $\va \In \Sigma^{\le T}$
\begin{align}
B(\va, T) \coloneqq \{u \mid u \In V,\, (\mI + \mA)\mW[u, \va] \neq 0\}.\label{eq:geninctop}
\end{align}
where we recall the walk-incidence matrix $\mW$ of shape $V\times \Sigma^*$ defined in \cref{subsec:sequential}.

We call the topology generated by the subbase $\Bc = \{B(\va, T) \mid \va \In \Sigma^{\le T}\}$ the \emph{incidence topology} $\tau(\chi)$ on $V$.
We state the following proposition to highlight the similarity with \cref{obs:specialwalkpartition} and \cref{obs:specialmessagepassing}.
\begin{proposition}
    Let $G =(V,E)$ be a graph $G$ without isolated vertices, and $T > 0$
    Then the topology $\tau(\chitriv)$ is trivial, and the topology $\tau(\chiid) = \tau(E)$.
\end{proposition}
\begin{proof}
    Let us denote the colored walk of a single color of length $t$ by $\vz_t$
    (as in \cref{obs:specialwalkpartition}).
    Since every node $u\In V$ has at least one neighbor, it is adjacent to colored walk $\vz_{2t}$ and incident to colored walk $\vz_{2t+1}$. That is, $(\mA\mW)[u, \vz_{2t}]\neq 0$ and $(\mI\mW)[u, \vz_{2t+1}]\neq 0$ and thus $\tau(\chitriv)$ contains only.
    Therefore, the subbase is of $B(\vz_t, T) = V$, and thus $\tau(\chitriv) = \{\emptyset, V\}$.
    
    For the second part, we recall that $\chiid$ is a $V$-coloring. For $v\In V$ we take any colored walk $\va v$,
    and we get $(\mI + \mA)W[u, \va v] = (\mI + \mA)[u, v]$. Finally, every ball $B_r(v)$ in $\tau(E)$ is (already) generated by the union of some $1$-balls of the form $B_1(v) = B(\va v, T) = \{ u \mid u \In V,\, (\mI + \mA)[u, v]\neq 0\}$.
\end{proof}

\subsection{Generation of \textsc{nStepSum} Dataset}\label{appendix:synthetic}
We construct the dataset for a binary classification task, where each graph encodes two integers represented in binary.
For an example of a graph $G$ encoding two integers $11$ and $22$ in the $5$-bit binary case, see \autoref{fig:experiment_sum}.
To construct a graph of our dataset of a target sum $N = 2^{B-1}$, 
we randomly generate two integers $x_1$ and $x_2$ in the range $[0, 2^{B}-1]$, such that $x_1 + x_2 = N$.
With $0.5$ probability, we add offset to one of the numbers in the range of $[0, \frac{2}{3}N]$ to deviate from the target sum $N$ by not more than $66.7\%$.
Accordingly, we assign the class of the graph to be $1$ if $x_1 + x_2 = N$, and $0$ otherwise.
We do not use node any features, so the classification relies solely on graph structure.
Samples: 6,000 graphs with integers generated using $B=15$ bits.
The dataset we provide is balanced with respect to the class labels.

\subsection{Implementation Details}
We use the PyTorch Geometric library \citep{Pas+2019} to implement our models.
As it might be considered common,
we precomputed the normalization of the (sparse) adjacency matrix for the \texttt{GCNConv} layer,
to later use it in the forward pass with the option \texttt{norm=False}, and adding the normalized
weights to the message-passing instead.
For the cases where Caterpillar GCN is not identical to message passing,
we added an efficient (sparse) version of the adjacency matrix $C^{\tilde{A}}_{t_\ell}$, 
specialized for every layer $\ell$, in a way similar to the normalization of the adjacency matrix.
We preprocessed datasets in a unified fashion for message-passing and Caterpillar GCN ($\textsf{C}_{0}$ up to $\textsf{C}_{10}$).

\begin{figure}[t]
    \centering
    \includegraphics[width=0.8\textwidth]{./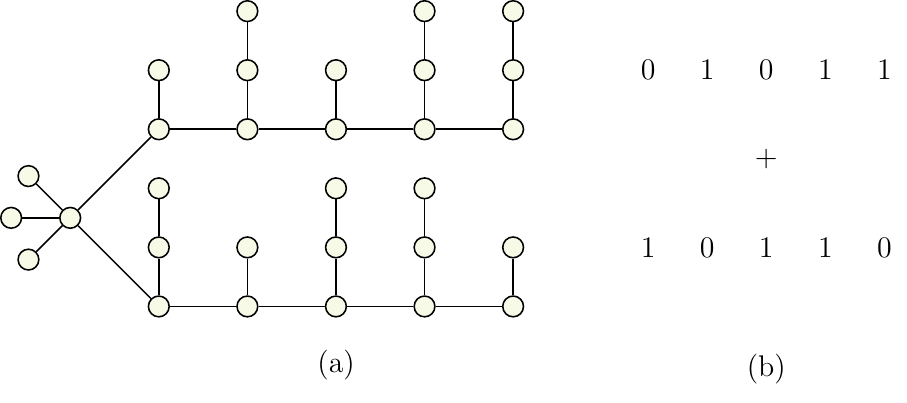}
    \caption{
        \textsc{nStepSum}: An example of a graph $G$ (a). The graph $G$ encodes two $5$-bit numbers (b),
        namely $11$~(top) and $22$~(bottom).
        If $N=33$ for a classification dataset then $G$ is a positive example, as $11+22=33$.
    }
    \label{fig:experiment_sum}
\end{figure}

\begin{table}[t]
    \centering
    \includegraphics[width=0.55\textwidth]{./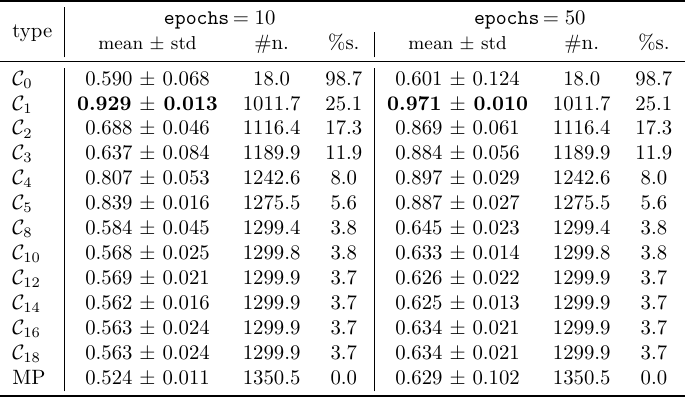}
    \caption{
        Results for the \textsc{nStepSum} dataset.
        By $\textsf{C}_h$ is denoted the (caterpillar height) parameter $h\In\Nat$ of efficient aggregation (ours), while MP denotes the full message-passing GCN.
        We report mean validation accuracy with standard deviation of different 10 splits.
        The model of 18 layers was trained for 10 and 50 epochs.
        The best results are highlighted in bold.
        The columns ``$\#$n.'' denotes number of \emph{nodes} of the computation graph,
        and columns ``$\%$s.'' percent of nodes of the computation graph \emph{saved}
        comparing to message-passing (MP).
    }
    \label{tab:expressivity_hurts}
\end{table}

We trained our models on the \textsc{nStepSum} dataset described above using deeper architectures (18 layers),
with small hidden dimensions (width=8), batch size of 64, and moderate regularization settings (final dropout 0.3, weight decay $10^{-6}$). Training was performed for 10 \autoref{fig:expressivity_hurts}, and 50 epochs using 10-fold cross-validation. 
Complete results are reported in \autoref{tab:expressivity_hurts}.
We deliberately adopted a minimal configuration in order to cleanly isolate the topological effect of computational graph scaling behind our approach.
We remark that the definition of incidence topology in general supports richer benchmarks.
The ongoing work may extend these experiments to include multiple occurrences (multiple graph branches to sum over multiple numbers) and vertex labels (for larger number systems than binary), e.g. in line with the experimental contributions of~\cite{Alo+2021}.

\subsection{Experimental Setting on Real-World Datasets}
We performed empirical experiments across multiple standard graph datasets, categorized by their evaluation metrics. 
Accuracy-based evaluation was used for bioinformatics and social network datasets, including the TUDataset~\citep{Mor+2020} benchmarks such as ENZYMES, MUTAG, PROTEINS, COLLAB, and IMDB-BINARY (see \autoref{tab:dataset_statistics_acc}).
For chemical property prediction, we evaluated performance using mean squared error (MSE) on regression tasks from MoleculeNet~\citep{Pas+2019} (ESOL, FreeSolv, Lipo) and the ZINC dataset~\citep{Irw+2012} (see \autoref{tab:dataset_statistics_mae}).
Training code, along with exact hyperparameter configurations for reproducibility, is available in the supplementary material.

For each dataset, we trained graph convolutional networks (GCNs) with variants of our proposed model Caterpillar GCN.
To compare the number of saved nodes consistently, for all experiments, we \emph{fixed the number of layers to five} ($L=5$),
gradient clipping to $1.0$, and based on the validation set performance, we employed early stopping with a fixed \texttt{patience} parameter of 20 to prevent overfitting.
We conducted extensive k-fold cross-validation (typically 10 folds), ensuring robust performance evaluation. For ZINC dataset, we always used 5 random seed initializations for the public splits.
Models were trained with standard settings, using the Adam optimizer \citep{Kin+2014}, moderate learning rate, and weight decay to balance training stability and convergence speed.
Complete hyperparameter details are provided in the supplementary material.
Experiments were repeated using fixed random seeds to ensure reproducibility. All models were trained using an Intel Xeon E5-2690 @ 2.90 GHz processor (16 cores, 32 threads, 20 MB L3 cache, 64 B cache line) equipped with 64 GB of RAM.

\begin{table}[t]
    \centering
    \includegraphics[width=0.99\textwidth]{./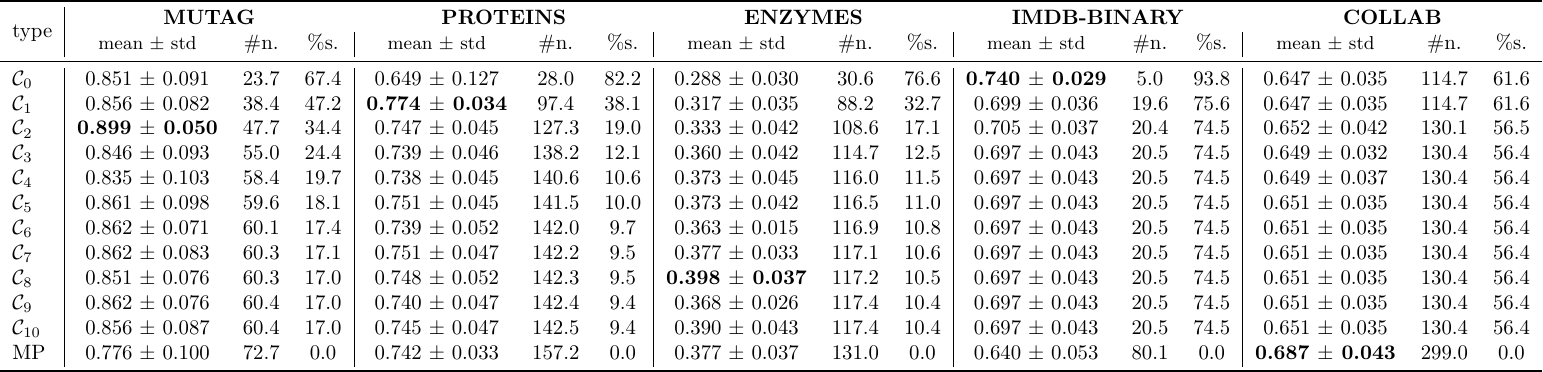}
    \caption{
        Results for \emph{graph-level classification} datasets.
        By $\textsf{C}_h$ is denoted the (caterpillar height) parameter $h\In\Nat$ of efficient aggregation (ours), while $\textsf{MP}$ denotes the full message-passing GCN.
        We report mean validation accuracy with standard deviation of different 10 splits.
        Columns ``$\#$n.'' denote number of \emph{nodes} of the computation graph,
        and columns ``$\%$s.'' percent of nodes of the computation graph \emph{saved} 
        comparing to message-passing ($\textsf{MP}$).
    }
    \label{tab:dataset_statistics_acc}
\end{table}

\begin{table}[t]
    \centering
    \includegraphics[width=0.85\textwidth]{./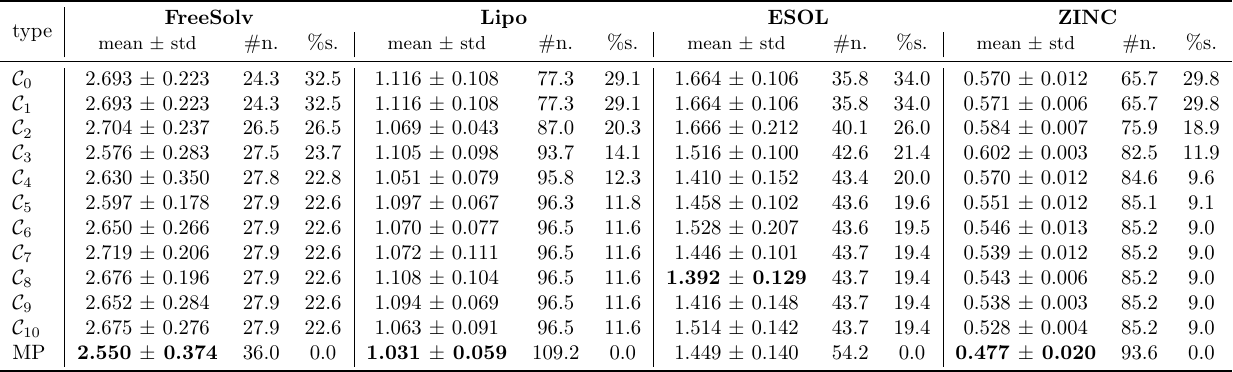}
    \caption{
        Results for \emph{graph-level regression} datasets.
        By $\textsf{C}_h$ is denoted the (caterpillar height) parameter $h\In\Nat$ of efficient aggregation (ours), while $\textsf{MP}$ denotes the full message-passing GCN.
        We report validation mean absolute error (MAE) with standard deviation of different 10 splits and random seeds, in the case of ZINC of distinct 5 random seed repetitions.
        Columns ``$\#$n.'' denote number of \emph{nodes} of the computation graph,
        and columns ``$\%$s.'' percent of nodes of the computation graph \emph{saved} 
        comparing to message-passing ($\textsf{MP}$).
    }
    \label{tab:dataset_statistics_mae}
\end{table}

\subsection{Efficiency of Aggregation}
The efficient aggregation (EA) we propose significantly lowers training complexity relative to full message-passing (MP) graph neural networks when the number of hidden channels grows, without necessarily sacrificing predictive performance.
Experiments on accuracy-based classification datasets (\autoref{tab:dataset_statistics_acc}) and regression-based datasets evaluated by mean absolute error (\autoref{tab:dataset_statistics_mae}) demonstrate that using lower-height caterpillar aggregations substantially reduces the size of computation graphs. 
That is, up to approximately 93.8\% fewer nodes compared to MP for \emph{unattributed} dataset IMDB-BINARY,
and 38.1\% fewer nodes for \emph{categorically attributed} dataset such as PROTEINS.
This reduction may translate into improved computational efficiency, memory usage, and scalability.
In our experiments without additional extensive hyperparameter optimization, we observed even a positive impact on prediction accuracy or regression performance across datasets. These results underscore the practical value and scalability potential of our efficient aggregation method in graph-based machine learning tasks.

\end{document}